\title{Answering Conjunctive Queries over $\mathcal{EL}$ Knowledge Bases with \\ Transitive and Reflexive Roles}
\author{%
    Giorgio Stefanoni \and Boris Motik\\
    Department of Computer Science, University of Oxford\\
    Wolfson Building, Parks Road,\\
    Oxford, OX1 3QD, UK
}
\begin{document}

\maketitle

\begin{abstract}
Answering conjunctive queries (CQs) over \el knowledge bases (KBs) with complex
role inclusions is \pspace-hard and in \pspace in certain cases; however, if
complex role inclusions are restricted to role transitivity, a tight upper
complexity bound has so far been unknown. Furthermore, the existing algorithms
cannot handle reflexive roles, and they are not practicable. Finally, the
problem is tractable for acyclic CQs and $\mathcal{ELH}$, and \np-complete for
unrestricted CQs and \elho KBs. In this paper we complete the complexity
landscape of CQ answering for several important cases. In particular, we
present a practicable \np algorithm for answering CQs over \elhso KBs---a logic
containing all of OWL 2 EL, but with complex role inclusions restricted to role
transitivity. Our preliminary evaluation suggests that the algorithm can be
suitable for practical use. Moreover, we show that, even for a restricted class
of so-called \emph{arborescent} acyclic queries, CQ answering over \el KBs
becomes \np-hard in the presence of either transitive or reflexive roles.
Finally, we show that answering arborescent CQs over \elho KBs is tractable,
whereas answering acyclic CQs is \np-hard.

\end{abstract}

\section{Introduction}

Description logics (DLs)~\cite{DLHB-2007-ed} are a family of knowledge
representation languages that logically underpin the Web Ontology Language (OWL
2)~\cite{ghmppss08next-steps}. DL knowledge bases (KBs) provide modern
information systems with a flexible graph-like data model, and answering
conjunctive queries (CQs) over such KBs is a core reasoning service in various
applications~\cite{SWJ-2011}. Thus, the investigation of the computational
properties of CQ answering, as well as the development of practicable
algorithms, have received a lot of attention lately.

For expressive DLs, CQ answering is at least exponential in \emph{combined
complexity}~\cite{GHLS08a,DBLP:conf/ijcai/OrtizRS11}---that is, measured in the
combined size of the query and the KB. The problem is easier for the
DL-Lite~\cite{JAR-2007} and the \el~\cite{babl05} families of DLs, which
logically underpin the QL and the EL profiles of OWL 2, respectively, and
worst-case optimal, yet practicable algorithms are known
\cite{KoLuToWoZa-IJCAI11,DBLP:conf/semweb/Rodriguez-MuroKZ13,DBLP:conf/aaai/EiterOSTX12,DBLP:journals/jodsn/VenetisSS14}.
One can reduce the complexity by restricting the query shape; for example,
answering \emph{acyclic} CQs~\cite{Yannakakis:1981} is tractable in relational
databases. \citeA{DBLP:conf/ijcai/BienvenuOSX13} have shown that answering
acyclic CQs in DL-Lite$_{\textit{core}}$ and $\mathcal{ELH}$ is tractable,
whereas \citeA{DBLP:journals/ai/GottlobKKPSZ14} have shown it to be \np-hard in
DL-Lite$_{\mathcal{R}}$.

In this paper, we consider answering CQs over KBs in the \el family of
languages. No existing practical approach for \el supports complex role
inclusions---a prominent feature of OWL 2 EL that can express complex
properties of roles, including role transitivity. The known upper bound for
answering CQs over \el KBs with complex role inclusions~\cite{KRH:elcq07} runs
in \pspace and uses automata techniques that are not practicable due to
extensive don't-know nondeterminism. Moreover, this algorithm does not handle
transitive roles specifically, but considers complex role inclusions. Hence, it
is not clear whether the \pspace upper bound is optimal in the presence of
transitive roles only; this is interesting because role transitivity suffices
to express simple graph properties such as reachability, and it is a known
source of complexity of CQ answering~\cite{DBLP:conf/ijcai/EiterLOS09}. Thus,
to complete the landscape, we study the combined complexity of answering CQs
over various extensions of \el and different classes of CQs. Our contributions
can be summarised as follows.

In Section \ref{sec:np_algorithm} we present a novel algorithm running in \np
for answering CQs over \elhso KBs---a logic containing all of OWL 2 EL, but
with complex role inclusions restricted to role transitivity---and thus settle
the open question of the complexity for transitive and (locally) reflexive
roles. Our procedure generalises the \emph{combined approach with
filtering}~\cite{DBLP:conf/semweb/LutzSTW13} for \elho by
\citeA{DBLP:conf/aaai/StefanoniMH13}. We capture certain consequences of an
\elhso KB by a datalog program; then, to answer a CQ, we evaluate the query
over the datalog program to obtain \emph{candidate answers}, and then we filter
out unsound candidate answers. Transitive and reflexive roles, however,
increase the complexity of the filtering step: unlike the filtering procedure
for $\elho$, our filtering procedure runs in nondeterministic polynomial time,
and we prove that this is worst-case optimal---that is, checking whether a
candidate answer is sound is an \np-hard problem. To obtain a goal-directed
filtering procedure, we developed optimisations that reduce the number of
nondeterministic choices. Finally, our filtering procedure runs in \np only for
candidate answers that depend on both the existential knowledge in the KB, and
transitive or reflexive roles---that is, our algorithm exhibits pay-as-you-go
behaviour. To evaluate the feasibility of our approach, we implemented a
prototypical CQ answering system and we carried out a preliminary evaluation.
Our results suggest that, although some queries may be challenging, our
algorithm can be practicable in many cases.

In Section \ref{sec:acyclic} we study the complexity of answering acyclic CQs
over KBs expressed in various extensions of \el. We introduce a new class of
\emph{arborescent} queries---tree-shaped acyclic CQs in which all roles point
towards the parent. We prove that answering arborescent queries over \el KBs
with either a single transitive role or a single reflexive role is \np-hard;
this is interesting because \citeA{DBLP:conf/ijcai/BienvenuOSX13} show that
answering acyclic queries over $\mathcal{ELH}$ KBs is tractable, and it shows
that our algorithm from Section \ref{sec:np_algorithm} is optimal for
arborescent (and thus also acyclic) queries. Moreover, we show that answering
unrestricted acyclic CQs is \np-hard for \elho, but it becomes tractable for
arborescent queries.

\ifdraft{All proofs of our results are provided in the appendix.}{All proofs of
our results are provided in a technical report~\cite{TR}.}

\section{Preliminaries}

We use the standard notions of constants, (ground) terms, atoms, and formulas
of first-order logic with the equality predicate
$\approx$~\cite{Fitting:1996:FLA:230183}; we assume that $\top$ and $\bot$ are
unary predicates without any predefined meaning; and we often identify a
conjunction with the set of its conjuncts. A substitution $\sigma$ is a partial
mapping of variables to terms; $\dom{\sigma}$ and $\rng{\sigma}$ are the domain
and the range of $\sigma$, respectively; for convenience, we extend each
$\sigma$ to identity on ground terms; $\sproj{\sigma}{S}$ is the restriction of
$\sigma$ to a set of variables $S$; and, for $\alpha$ a term or a formula,
$\sigma(\alpha)$ is the result of simultaneously replacing each free variable
$x$ occurring in $\alpha$ with $\sigma(x)$. Finally, $\interval{i}{j}$ is the
set ${\{ i, i+1, \ldots, j-1, j \}}$.

\medskip

\textbf{Rules and Conjunctive Queries}\;\; An \emph{existential rule} is a
formula ${\forall \vec{x}\,\forall\vec{y}. \varphi(\vec{x}, \vec{y})
\rightarrow \exists \vec{z}. \psi(\vec{x},\vec{z})}$ where $\varphi$ and $\psi$
are conjunctions of function-free atoms over variables ${\vec{x} \cup \vec{y}}$
and ${\vec{x} \cup \vec{z}}$, respectively. An \emph{equality rule} is a
formula of the form ${\forall \vec{x}. \varphi(\vec{x}) \rightarrow s \approx
t}$ where $\varphi$ is a conjunction of function-free atoms over variables
${\vec{x}}$, and $s$ and $t$ are function-free terms with variables in
$\vec{x}$. A \emph{rule base} $\Sigma$ is a finite set of rules and
function-free ground atoms; $\Sigma$ is a \emph{datalog program} if ${\vec{z} =
\emptyset}$ for each existential rule in $\Sigma$. Please note that $\Sigma$ is
always satisfiable, as $\top$ and $\bot$ are ordinary unary predicates. We
typically omit universal quantifiers in rules.

A \emph{conjunctive query} (CQ) is a formula ${q = \exists \vec{y}.
\psi(\vec{x},\vec{y})}$ where $\psi$ is a conjunction of function-free atoms
over variables ${\vec{x} \cup \vec{y}}$. Variables $\vec{x}$ are the
\emph{answer variables} of $q$. Let ${\vars{q} = \vec{x} \cup \vec{y}}$ and let
$\terms{q}$ be the set of terms occurring in $q$. When $\vec{x}$ is empty, we
call $q$ a \emph{Boolean CQ}.

For $\tau$ a substitution, let ${\tau(q) = \exists \vec{z}. \tau(\psi)}$, where
$\vec{z}$ is obtained from $\vec{y}$ by removing each variable ${y \in
\vec{y}}$ such that $\tau(y)$ is a constant, and by replacing each variable ${y
\in \vec{y}}$ such that $\tau(y)$ is a variable with $\sigma(y)$.

Let $\Sigma$ be a rule base and let ${q = \exists \vec{y}. \psi(\vec{x},
\vec{y})}$ be a CQ over the predicates in $\Sigma$. A substitution $\pi$ is a
\emph{certain answer} to $q$ over $\Sigma$, written ${\Sigma \models \pi(q)}$,
if ${\dom{\pi} = \vec{x}}$, each element of $\rng{\pi}$ is a constant, and ${\I
\models \pi(q)}$ for each model $\I$ of $\Sigma$.

\begin{table}[tb]
    \centering
    \footnotesize
    \caption{Translating \elhso Axioms into Rules}\label{table:Xi}
    \begin{tabular}{r|lcr}
        \textbf{Type}       & \textbf{Axiom}                &                               & \textbf{Rule} \\
        \hline
        1                   & $A \ISA B$                    & $\leadsto$                    & ${A}(x) \rightarrow B(x)$ \\[1ex]
        2                   & $A \ISA \setof{a}$            & $\leadsto$                    & ${A}(x) \rightarrow x \approx a$ \\[1ex]
        3                   & $A_1 \sqcap A_2 \ISA A$       & $\leadsto$                    & ${A_1}(x) \wedge {A_2}(x) \rightarrow A(x)$ \\[1ex]
        4                   & $\SOME{R}{A_1} \ISA A$        & $\leadsto$                    & $R(x,y) \wedge {A_1}(y) \rightarrow {A}(x)$ \\[1ex]
        \multirow{2}{*}{5}  & \multirow{2}{*}{$S \ISA R$}   &\multirow{2}{*}{$\leadsto$}    & $S(x,y) \rightarrow R(x,y)$ \\
                            &                               &                               & $\SELF_{S}(x) \rightarrow \SELF_R(x)$ \\[1ex]
        6                   & $\range{R}{A}$                & $\leadsto$                    & $R(x,y) \rightarrow A(y)$  \\[1ex]
        7                   & $A_1 \ISA \SOME{R}{A}$        & $\leadsto$                    & ${A_1}(x) \rightarrow \exists z. R(x,z)\wedge A(z)$ \\[1ex]
        8                   & $\TRANS{R}$                   & $\leadsto$                    & $R(x,y)\wedge R(y,z) \rightarrow R(x,z)$ \\[1ex]
        9                   & $\REFL{R}$                    & $\leadsto$                    & ${\top}(x) \rightarrow R(x,x)\wedge \SELF_{R}(x)$ \\[1ex]
        10                  & $A \ISA \SOME{R}{\SELF}$      & $\leadsto$                    & ${A}(x) \rightarrow R(x,x)\wedge \SELF_{R}(x)$ \\[1ex]
        11                  & $\SOME{R}{\SELF} \ISA A$      & $\leadsto$                    & $\SELF_R(x) \rightarrow A(x)$ \\
        \hline
    \end{tabular}
\end{table}

\medskip

\textbf{The DL \elhso} is defined w.r.t.\ a signature consisting of mutually
disjoint and countably infinite sets \conceptnames, \rolenames, and \indnames
of \emph{atomic concepts} (i.e., unary predicates), \emph{roles} (i.e., binary
predicates), and \emph{individuals} (i.e., constants), respectively. We assume
that $\top$ and $\bot$ do not occur in $\conceptnames$. Each \elhso knowledge
base can be \emph{normalised} in polynomial time without affecting CQ answers
\cite{Kroetzsch10:elreason}, so we consider only normalised KBs. An \elhso
\emph{TBox} $\T$ is a finite set of axioms of the form shown in the left-hand
side of Table \ref{table:Xi}, where ${A_{(i)} \in \conceptnames \cup
\setof{\top}}$, ${B \in \conceptnames \cup \setof{\top,\bot}}$,
$S,R\in\rolenames$, and ${a \in \indnames}$; furthermore, TBox $\T$ is in \elho
if it contains only axioms of types $1$--$7$. Relation $\subrole$ is the
smallest reflexive and transitive relation on the set of roles occurring in
$\T$ such that ${S \subrole R}$ for each ${S \ISA R \in \T}$. A role $R$ is
\emph{simple} in $\T$ if ${\TRANS{S} \not\in \T}$ for each ${S \in \rolenames}$
with ${S \subrole R}$. An \emph{ABox} \A is a finite set of ground atoms
constructed using the symbols from the signature. An \elhso \emph{knowledge
base} (KB) is a tuple ${\K = \tuple{\T,\A}}$, where \T is an \elhso TBox and an
\A is an ABox such that each role $R$ occurring in axioms of types $10$ or $11$
in \T is simple.

Let $\ind$ be a fresh atomic concept and, for each role $R$, let $\SELF_R$ be a
fresh atomic concept uniquely associated with $R$. Table~\ref{table:Xi} shows
how to translate an \elhso TBox \T into a rule base $\Xi_{\T}$. Furthermore,
rule base $\close{\K}$ contains an atom $\ind(a)$ for each individual $a$
occurring in $\K$, a rule ${A(x) \rightarrow \top(x)}$ for each atomic concept
occurring in $\K$, and the following two rules for each role $R$ occurring in
$\K$.
\begin{align}
    \ind(x) \wedge R(x,x)	& \rightarrow \SELF_{R}(x) \\
	R(x,y) 					& \rightarrow \top(x) \wedge \top(y)
\end{align}
For $\K$ a KB, let ${\Xi_{\K} = \Xi_{\T} \cup \close{\K} \cup \A}$; then, \K is
\emph{unsatisfiable} iff ${\Xi_\K \models \exists y. \bot(y)}$. For $q$ a CQ
and $\pi$ a substitution, we write ${\K \models \pi(q)}$ iff $\K$ is
unsatisfiable or ${\Xi_{\K} \models \pi(q)}$. Our definition of the semantics
of \elhso is unconventional, but equivalent to the usual
one~\cite{Kroetzsch10:elreason}.

\section{Answering CQs over \elhso KBs}\label{sec:np_algorithm}

\begin{figure}
    \centering
    \includegraphics[scale=0.7]{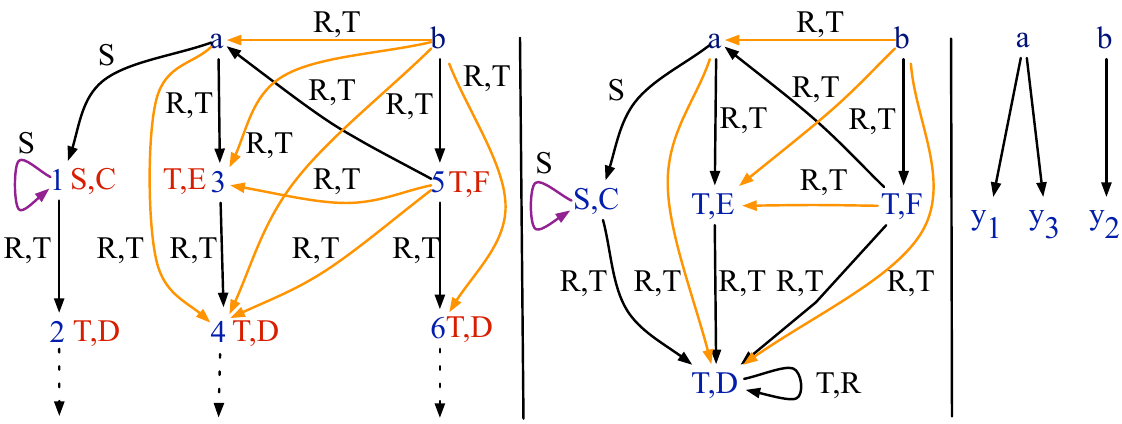}
    \setlength{\abovecaptionskip}{0pt}
    \setlength{\belowcaptionskip}{-5pt}
    \caption{The models of $\Xi_\K$ and $\dat_\K$, and the skeleton for $q$}\label{fig:models}
\end{figure}

In this section, we present an algorithm for answering CQs over \elhso KBs
running in \np. In the rest of this section, we fix ${\K = \tuple{\T,\A}}$ to
be an arbitrary \elhso KB.

Certain answers to a CQ over $\Xi_\K$ can be computed by evaluating the CQ over
a so-called \emph{canonical model} that can be homomorphically embedded into
all other models of $\Xi_\K$. It is well known \cite{KRH:elcq07} that such
models can be seen as a family of directed trees whose roots are the
individuals occurring in $\Xi_\K$, and that contain three kinds edges:
\emph{direct edges} point from parents to children or to the individuals in
$\Xi_\K$; \emph{transitive edges} introduce shortcuts between these trees; and
\emph{self-edges} introduce loops on tree nodes. We call non-root elements
\emph{auxiliary}. Moreover, each auxiliary element can be uniquely associated
with a rule of the form ${A_1(x) \rightarrow\exists z. R(x,z) \wedge A(z)}$ in
$\Xi_\K$ that was used to generate it, and we cal $R,A$ the element's
\emph{type}. Example \ref{example:kb} illustrates these observations.

\begin{example}\label{example:kb}
    Let $\K = \tuple{\T,\A}$ be an \elhso KB whose $\T$ contains the following
    axioms and ${\A = \setof{ A(a), B(b)}}$.
    \begin{align*}
        A   & \ISA \SOME{S}{C}      & E  & \ISA \SOME{T}{D} & G & \ISA \setof{a} \\
        C   & \ISA \SOME{S}{\SELF}  & B  & \ISA \SOME{T}{F} & D & \ISA \SOME{T}{D} \\
        C   & \ISA \SOME{T}{D}      & F  & \ISA \SOME{T}{D} & T & \ISA R \\
        A   & \ISA \SOME{T}{E}      & F  & \ISA \SOME{T}{G} & \span\TRANS{T}
    \end{align*}
    The left part of Figure \ref{fig:models} shows a canonical model $I$ of
    $\Xi_\K$. Each auxiliary element is represented as a number showing the
    element's type. Direct edges are black, transitive edges are orange, and
    self-edges are purple. Axiom ${D\ISA\SOME{T}{D}}$ makes $I$ is infinite;
    in the figure, we show the infinitely many successors of $2$, $4$, and $6$
    using a black dotted edge.
\end{example}

A canonical model $I$ of $\Xi_\K$ can be infinite, so a terminating CQ
answering algorithm for \elhso cannot materialise $I$ and evaluate CQs in it.
Instead, we first show how to translate $\K$ into a datalog program $\dat_\K$
that finitely captures the canonical model $I$ of $\Xi_\K$; next, we present a
CQ answering procedure that uses $\dat_\K$ to answer CQs over $\Xi_\K$.

\subsection{Datalog Translation}

\citeA{KRH:ELP-08} translate $\K$ into datalog for the purposes of ontology
classification, and \citeA{DBLP:conf/aaai/StefanoniMH13} use this translation
to answer CQs over \elho KBs. Let $o_{R,A}$ be an \emph{auxiliary individual}
not occurring in \indnames and uniquely associated with each role
$R\in\rolenames$ and each atomic concept ${A\in\conceptnames\cup \setof{\top}}$
occurring in $\K$; intuitively, $o_{R,A}$ represent all auxiliary terms of type
$R,A$ in a canonical model $I$ of $\Xi_\K$. We extend this translation by
uniquely associating with each role $R$ a \emph{direct predicate}
$\directedge{R}$ to represent the direct edges in $I$.

\begin{definition}\label{def:datalog-rewriting}
    For each axiom $\alpha \in \T$ not of type $7$, set $\dat_\T$ contains the
    translation of $\alpha$ into a rule as shown in Table~\ref{table:Xi};
    moreover, for each axiom ${A_1 \ISA \SOME{R}{A} \in \T}$, set $\dat_\T$
    contains rule ${A_1(x) \rightarrow R(x,o_{R,A}) \wedge
    \directedge{R}(x,o_{R,A}) \wedge A(o_{R,A})}$; finally, for each axiom
    ${S\ISA R \in \T}$, set $\dat_\T$ contains rule ${\directedge{S}(x,y)
    \rightarrow \directedge{R}(x,y)}$. Then, ${\dat_{\K} = \dat_{\T} \cup
    \close{\K} \cup \A}$ is the \emph{datalog program for} $\K$.
\end{definition}

\begin{example}
    The middle part of Figure \ref{fig:models} shows model $J$ of the datalog
    program $\dat_\K$ for the KB from Example \ref{example:kb}. For clarity,
    auxiliary individuals $o_{R,A}$ are shown as $R,A$. Note that auxiliary
    individual $o_{T,G}$ is `merged' in model $J$ with individual $a$ since
    ${\dat_\K \models o_{T,G}\approx a}$. We use the notation from
    Example~\ref{example:kb} to distinguish various kinds of edges.
\end{example}

The following proposition shows how to use $\dat_\K$ to test whether $\K$ is
unsatisfiable.

\begin{restatable}{proposition}{propsat}\label{prop:sat}
    $\K$ is unsatisfiable iff ${\dat_{\K} \models \exists y.\bot(y)}$.
\end{restatable}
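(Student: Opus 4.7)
The plan is to unpack the definition of unsatisfiability given just before the proposition (that is, $\K$ is unsatisfiable iff $\Xi_\K \models \exists y.\bot(y)$) and establish the equivalence $\Xi_\K \models \exists y.\bot(y)$ iff $\dat_\K \models \exists y.\bot(y)$. The key structural observation is that $\dat_\K$ differs from $\Xi_\K$ only in that each type-$7$ existential rule $A_1(x) \to \exists z.\, R(x,z) \wedge A(z)$ is replaced by the Skolem-constant-instantiated rule $A_1(x) \to R(x,o_{R,A}) \wedge \directedge{R}(x,o_{R,A}) \wedge A(o_{R,A})$, together with a few auxiliary rules involving the fresh predicates $\directedge{R}$.

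For the ($\Rightarrow$) direction, I would argue that every model $J$ of $\dat_\K$ is also a model of $\Xi_\K$: the Skolem-constant rule witnesses the existential rule with the element $o_{R,A}^J$, and the rules over the fresh predicates $\directedge{R}$ cannot falsify any $\Xi_\K$-axiom. Consequently the class of models of $\dat_\K$ is included in that of $\Xi_\K$, so first-order consequences transfer from the former to the latter; in particular, $\Xi_\K \models \exists y.\bot(y)$ implies $\dat_\K \models \exists y.\bot(y)$.

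For the ($\Leftarrow$) direction, I would compare the canonical model $I$ of $\Xi_\K$, the tree-shaped structure illustrated in Figure~\ref{fig:models}, with the least Herbrand model $M$ of $\dat_\K$; by universality of each, it suffices to show $I \models \exists y.\bot(y)$ iff $M \models \exists y.\bot(y)$. The bridge is the folding map that fixes ABox individuals and collapses every type-$R,A$ auxiliary element of $I$ to the constant $o_{R,A}$. By induction on the rule applications that generate $M$, I would show that this map puts the unary atoms derived at $o_{R,A}$ in $M$ in correspondence with the unary atoms derived at any type-$R,A$ element of $I$, so that a $\bot$-derivation on one side can always be matched by one on the other.

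The main obstacle is establishing this bridging lemma, because the single constant $o_{R,A}$ in $M$ stands in for all (potentially infinitely many) type-$R,A$ elements of $I$, and one must rule out the possibility that mixing premises drawn from several distinct original witnesses fires a rule at $o_{R,A}$ that no single element of $I$ can reproduce. I would resolve this via a \emph{type-uniformity} argument: every auxiliary element of type $(R,A)$ in $I$ has its concept memberships fully determined by its type, because the subtree of witnesses rooted at such an element---and hence the set of $\Xi_\K$-rules that fire on it---is determined by $(R,A)$ alone. Transitive and reflexive roles enrich the binary atoms but add no new unary consequences beyond those already captured by the closure rules in $\dat_\K$, so uniformity is preserved and the reverse direction follows.
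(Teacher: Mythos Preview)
Your plan is correct and tracks the paper's proof closely. The paper also reduces to showing $\Xi_\K \models \exists y.\bot(y)$ iff $\dat_\K \models \exists y.\bot(y)$, and does so via two bridging lemmas about the folding map $\delta$ (the map you describe) between the universal interpretation $I$ of $\Xi_\K$ and that of $\dat_\K$: one direction shows $C(w)\in I$ implies $C(\delta(w))\in J$ (by induction on the chase for $\Xi_\K$), the other shows $C(\delta(w))\in J$ implies $C(w)\in I$ (by induction on the chase for $\dat_\K$). Proposition~\ref{prop:sat} is then a one-line corollary for $C=\bot$.

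Your $(\Rightarrow)$ direction is actually a bit more economical than the paper's: the observation that every model of $\dat_\K$ is already a model of $\Xi_\K$ (since the Skolem-constant head instantiates the existential head and the $\directedge{R}$ rules are over fresh predicates) gives the entailment transfer immediately, without touching the chase. The paper instead proves the full homomorphism lemma by chase induction, because that lemma is reused throughout the correctness proof of the filtering algorithm; for the proposition alone, your shortcut suffices.

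Your $(\Leftarrow)$ direction is exactly the paper's hard direction. The ``type-uniformity'' intuition you articulate---that the unary type of an auxiliary element is determined by its $(R,A)$ label---is the right invariant, and the paper establishes it precisely by the induction on rule applications that you propose. One point to be careful with when you carry it out: nominals (axioms $A\sqsubseteq\{a\}$, i.e.\ equality rules) can merge an auxiliary element with a named individual, so ``determined by $(R,A)$ alone'' must be read as ``either all type-$(R,A)$ witnesses get merged into the same $a$ (in both $I$ and $J$), or none do, and in the latter case they all carry the same concepts.'' The paper's inductive case analysis handles this by tracking the $\rightsquigarrow$ merge steps explicitly; your sketch does not yet mention nominals, so make sure your induction covers that case.
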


\subsection{The CQ Answering Algorithm}

Program $\dat_\K$ can be seen as a strengthening of $\Xi_\K$: all existential
rules $A_1(x)\rightarrow\exists z. R(x,z)\wedge A(z)$ in $\Xi_\K$ are satisfied
in a model $J$ of $\dat_\K$ using a single auxiliary individual $o_{R,A}$.
Therefore, evaluating a CQ $q$ in $J$ produces a set of \emph{candidate
answers}, which provides us with an upper bound on the set of certain answers
to $q$ over $\Xi_\K$.

\begin{definition}
    A substitution $\tau$ is a \emph{candidate answer} to a CQ ${q =\exists
    \vec{y}.\psi(\vec{x}, \vec{y})}$ over $\dat_\K$ if ${\dom{\tau} =
    \vars{q}}$, each element of $\rng{\tau}$ is an individual occurring in
    $\dat_\K$, and ${\dat_\K \models \tau(q)}$. Such a candidate answer $\tau$
    is \emph{sound} if ${\Xi_\K \models \sproj{\tau}{\vec{x}}(q)}$.
\end{definition}

\citeA{DBLP:conf/aaai/StefanoniMH13} presented a filtering step that removes
unsound candidate answers; however, Example \ref{example:elho} shows that this
step can be incomplete when the query contains roles that are not simple.

\begin{example}\label{example:elho}
    Let $\K$ be as in Example \ref{example:kb} and let
    \begin{align*}
        q   &= \exists y. A(x_1)\wedge R(x_1, y) \wedge B(x_2) \wedge R(x_2, y) \wedge D(y).
    \end{align*}
    Moreover, let $\pi$ be the substitution such that $\pi(x_1) = a$ and
    ${\pi(x_2) =b}$, and let $\tau$ be such that $\pi\subseteq \tau$ and
    ${\tau(y)=o_{T,D}}$. Using models $I$ and $J$ from Figure \ref{fig:models},
    one can easily see that ${\Xi_\K \models \pi(q)}$ and ${\dat_\K \models
    \tau(q)}$. However, $q$ contains a `fork' ${R(x_1,y) \wedge
    R(x_2,y)}$, and $\tau$ maps $y$ to an auxiliary individual, so this answer
    is wrongly filtered as unsound.
\end{example}

Algorithm~\ref{algo:issound} specifies a procedure $\sound{q}{\dat_{\K}}{\tau}$
that checks whether a candidate answer is sound. We discuss the intuitions
using the KB from Example \ref{example:kb}, and the query $q$ and the candidate
answer $\tau$ from Example \ref{example:query}.

\begin{example}\label{example:query}
    Let $q$ and $\tau$ be as follows. Using Figure \ref{fig:models}, one can
    easily see that ${\dat_\K \models \tau(q)}$.
    \setlength{\abovedisplayskip}{0.4em}
    \setlength{\belowdisplayskip}{0.4em}
    \begin{displaymath}
    \begin{array}{@{}r@{\;}l@{\;}l@{}}
        q =     & \exists \vec{y}.  & S(x, y_1) \wedge S(y_1,y_1) \wedge R(x, y_3) \wedge D(y_3) \; \wedge \\
                &                   & R(y_2, y_3) \wedge F(y_2) \wedge T(y_2, x) \\[0.6ex]
        \tau =  & \multicolumn{2}{@{}l@{}}{\setof{x \mapsto a,\; y_1 \mapsto o_{S,C},\; y_2\mapsto o_{T,F},\; y_3\mapsto o_{T,D} }} \\
    \end{array}
    \end{displaymath}
\end{example}

We next show how $\sound{q}{\dat_{\K}}{\tau}$ decides that $\tau$ is
sound---that is, that a substitution $\pi$ mapping the variables in $q$ to
terms in $I$ exists such that ${\pi(x) = a}$ and ${\pi(q) \subseteq I}$.
Substitution $\tau$ already provides us with some constraints on $\pi$: it must
map variable $y_1$ to $1$ and variable $y_2$ to $5$, since these are the only
elements of $I$ of types $S,C$ and $T,F$, respectively. In contrast,
substitution $\pi$ can map variable $y_3$ to either one of $2$, $4$, and $6$.
Each such substitution $\pi$ is guaranteed to satisfy all unary atoms of $q$,
all binary atoms of $q$ that $\tau$ maps to direct edges pointing towards
(non-auxiliary) individuals from \indnames, and all binary atoms of $q$ that
contain a single variable and that $\tau$ maps to the self-edge in $J$. Atoms
$T(y_2,x)$ and $S(y_1,y_1)$ in $q$ satisfy these conditions, and so we call
them \emph{good} w.r.t.\ $\tau$. To show that $\tau$ is sound, we must
demonstrate that all other atoms of $q$ are satisfied.

Step $1$ of Algorithm~\ref{algo:issound} implements the `fork' and
`aux-acyclicity' checks \cite{DBLP:conf/aaai/StefanoniMH13}. To guarantee
completeness, we consider only those binary atoms in $q$ that contain simple
roles, and that $\tau$ maps onto direct edges in $J$ pointing towards auxiliary
elements. We call these atoms \emph{aux-simple} as they can be mapped onto the
direct edges in $I$ pointing towards auxiliary elements. In step~$1$ we compute
a new query $q_\sim$ by applying all constraints derived by the fork rule, and
in the rest of the algorithm we consider $q_\sim$ instead of $q$. In our
example, atom $S(x, y_1)$ is the only aux-simple atom, so $q$ does not contain
forks and ${q_\sim = q}$. When all binary atoms occurring in $q$ are good or
aux-simple, step $1$ guarantees that $\tau$ is sound. Query $q$ from
Example~\ref{example:query}, however, contains binary atoms that are neither
good nor aux-simple, so we proceed to step $3$.

Next, in step $3$ we guess a renaming $\sigma$ for the variables in $q_\sim$ to
take into account that distinct variables in $q_\sim$ that $\tau$ maps to the
same auxiliary individual can be mapped to the same auxiliary element of $I$,
and so in the rest of Algorithm~\ref{algo:issound} we consider $\sigma(q_\sim)$
instead of $q_\sim$. In our example, we guess $\sigma$ to be identity, so
${\sigma(q_\sim) = q_\sim = q}$.

In step $4$, we guess a \emph{skeleton} for $\sigma(q_\sim)$, which is a finite
structure that finitely describes the (possibly infinite) set of all
substitutions $\pi$ mapping the variables in $\sigma(q_\sim)$ to distinct
auxiliary elements of $I$. The right part of Figure \ref{fig:models} shows the
skeleton $\S$ for our example query. The vertices of $\S$ are the
(non-auxiliary) individuals from $\dat_\K$ and the variables from
$\sigma(q_\sim)$ that $\tau$ maps to auxiliary individuals, and they are
arranged into a forest rooted in \indnames. Such $\S$ represents those
substitutions $\pi$ that map variables $y_1$ and $y_3$ to auxiliary elements of
$I$ under individual $a$, and that map variable $y_2$ to an auxiliary element
of $I$ under individual $b$.

In steps $5$--$15$, our algorithm labels each edge $\tuple{v',v}\in\S$ with a
set of roles $L(v',v)$; after these steps, $\S$ represents those substitutions
$\pi$ that satisfy the following property (E):
\begin{quote}
    for each role ${P \in L(v',v)}$, a path from $\tau(v')$ to $\tau(v)$ in $J$
    exists that consists only of direct edges labelled by role $P$ pointing to
    auxiliary individuals.
\end{quote}
We next show how atoms of $\sigma(q_\sim)$ that are not good contribute to the
labelling of $\S$. Atom $S(x,y_1)$ is used in step~$6$ to label edge
$\tuple{a,y_1}$. For atom $R(x,y_3)$, in step $8$ we let $P=T$ and we label
edge $\tuple{a,y_3}$ with $P$. Using Figure \ref{fig:models} and the axioms in
Example \ref{example:kb}, one can easily check that the conditions in steps~$8$
and~$9$ are satisfied. For atom $R(y_2,y_3)$, variables $y_2$ and $y_3$ are not
reachable in $\S$, so we must split the path from $y_2$ to $y_3$. Thus, in
step~$8$ we let $P=T$, and in step~$13$ we let $a_t = a$; hence, atom
$R(y_2,y_3)$ is split into atoms $T(y_2,a)$ and $T(a,y_3)$. The former is used
in step $14$ to check that a direct path exists in $J$ connecting $o_{T,F}$
with $a$, and the latter is used to label edge $\tuple{a,y_2}$.

After the for-loop in steps $7$--$15$, skeleton $\S$ represents all
substitutions satisfying (E). In step $17$, function $\mathsf{exist}$ exploits
the direct predicates from $\dat_\K$ to find the required direct paths in $J$,
thus checking whether at least one such substitution exists (see Definition
\ref{def:exist}). Using Figure \ref{fig:models}, one can check that
substitution $\pi$ where $\pi(y_3)=4$ and that maps all other variables as
stated above is the only substitution satisfying the constraints imposed by
$\S$; hence, $\mathsf{isSound}$ returns \true, indicating that candidate answer
$\tau$ is sound.

\medskip

We now formalise the intuitions that we have just presented. Towards this goal,
in the rest of this section we fix a CQ $q'$ and a candidate answer $\tau'$ to
$q'$ over $\dat_\K$.

Due to equality rules, auxiliary individuals in $\dat_\K$ may be equal to
individuals from \indnames, thus not representing auxiliary elements of $I$.
Hence, set $\aux{\dat_\K}$ in Definition~\ref{def:dat-order} provides us with
all auxiliary individuals that are not equal to an individual from \indnames.
Moreover, to avoid dealing with equal individuals, we replace in query $q'$ all
terms that $\tau'$ does not map to individuals in $\aux{\dat_\K}$ with a single
canonical representative, and we do analogously for $\tau'$; this replacement
produces CQ $q$ and substitution $\tau$. Since $q$ and $\tau$ are obtained by
replacing equals by equals, we have $\dat_\K \models \tau(q)$. Our filtering
procedure uses $q$ and $\tau$ to check whether $\tau'$ is sound.

\begin{definition}\label{def:dat-order}
    Let $>$ be a total order on ground terms such that ${o_{R,A} > a}$ for all
    individuals $o_{R,A}$ and ${a \in \indnames}$ from $\dat_{\K}$. Set
    $\aux{\dat_{\K}}$ contains each individual $u$ from $\dat_{\K}$ for which
    no individual $a\in\indnames$ exists such that ${\dat_{\K} \models u
    \approx a}$. For each individual ${u}$ from $\dat_{\K}$, let ${u_\approx =
    u}$ if ${u \in \aux{\dat_{\K}}}$; otherwise, let $u_{\approx}$ be the
    smallest individual ${a \in \indnames}$ in the ordering $>$ such that
    ${\dat_{\K} \models u \approx a}$. Set $\setind{\dat_{\K}}$ contains
    $a_{\approx}$ for each individual $a\in\indnames$ occurring in $\dat_{\K}$.
    Then query $q$ is obtained from $q'$ by replacing each term
    $t\in\terms{q'}$ such that $\tau'(t)\not \in \aux{\dat_\K}$ with
    $\tau'(t)_{\approx}$; substitution $\tau$ is obtained by restricting
    $\tau'$ to only those variables occurring in $q$.
\end{definition}

Next, we define good and aux-simple atoms w.r.t.\ $\tau$.

\begin{definition}\label{def:atom-types}
    Let $R(s,t)$ be an atom where $\tau(s)$ and $\tau(t)$ are defined. Then,
    $R(s,t)$ is \emph{good} if ${\tau(t) \in\indnames}$, or ${s = t}$ and
    ${\dat_\K \models \SELF_{R}(\tau(s))}$. Furthermore, $R(s,t)$ is
    \emph{aux-simple} if ${s\neq t}$, $R$ is a simple role,
    ${\tau(t)\in\aux{\dat_{\K}}}$, and ${\tau(s) = \tau(t)}$ implies
    ${\dat_{\K}\not\models \SELF_{R}(\tau(s))}$.
\end{definition}

Note that, if $R(s,t)$ is not good, then $t$ is a variable and
$\tau(t)\in\aux{\dat_\K}$. Moreover, by the definition of $\dat_\K$, if atom
$R(s,t)$ is aux-simple, then ${\dat_\K\models
\directedge{R}(\tau(s),\tau(t))}$. The following definition introduces the
query $q_\sim$ obtained by applying the fork rule by
\citeA{DBLP:conf/aaai/StefanoniMH13} to only those atoms that are aux-simple.

\begin{definition}\label{def:fork}
    Relation ${\sim \; \subseteq \terms{q} \times \terms{q}}$ for $q$ and
    $\tau$ is the smallest reflexive, symmetric, and transitive relation closed
    under the $\mathsf{fork}$ rule.
    \begin{displaymath}
        \AxiomC{$s'\sim t'$}
        \LeftLabel{$\mathsf{(fork)}$}
        \RightLabel{\begin{tabular}{@{\;}l@{}}
                        \footnotesize{$R(s,s')$ and $P(t,t')$ are aux-simple} \\
                        \footnotesize{ atoms in $q$ w.r.t.\ $\tau$}
                    \end{tabular}}
        \UnaryInfC{$s \sim t$}
        \DisplayProof
    \end{displaymath}
    Query $q_\sim$ is obtained from query $q$ by replacing each term
    ${t\in\terms{q}}$ with an arbitrary, but fixed representative of the
    equivalence class of $\sim$ that contains $t$.
\end{definition}

To check whether $q_\sim$ is aux-acyclic, we next introduce the
\emph{connection graph} $\mathsf{cg}$ for $q$ and $\tau$ that contains a set
$E_s$ of edges $\tuple{v',v}$ for each aux-simple atom ${R(v',v) \in q_\sim}$.
In addition, $\mathsf{cg}$ also contains a set $E_t$ of edges $\tuple{v',v}$
that we later use to guess a skeleton for $\sigma(q_\sim)$ more efficiently. By
the definition of aux-simple atoms, we have $E_s\subseteq E_t$.

\begin{definition}\label{def:connection-graph}
    The \emph{connection graph for $q$ and $\tau$} is a triple
    ${\mathsf{cg}=\tuple{V,E_s, E_t}}$ where ${E_s, E_t \subseteq V \times
    V}$ are smallest sets satisfying the following conditions.
    \begin{itemize}
        \item ${V = \setind{\dat_{\K}} \cup \setof{ z \in \vars{q_\sim} \mid \tau(z) \in \aux{\dat_\K}}}$.

        \item Set $E_s$ contains $\tuple{v',v}$ for all ${v',v \in V}$ for
        which a role $R$ exist such that $R(v',v)$ is an aux-simple atom in
        $q_{\sim}$.

        \item Set $E_t$ contains $\tuple{v',v}$ for all ${v',v \in V}$ such
        that individuals ${\setof{u_1, \ldots, u_n} \subseteq \aux{\dat_\K}}$
        and roles $R_1,\ldots, R_n$ exist with ${n > 0}$, ${u_n = \tau(v)}$,
        and ${\dat_{\K}\models \directedge{R_i}(u_{i-1}, u_i)}$ for each ${i
        \in \interval{1}{n}}$ and ${u_0 = \tau(v')}$.
    \end{itemize}
\end{definition}

Function $\dsound{q}{\dat_{\K}}{\tau}$ from Definition \ref{def:dsound} ensures
that $\tau$ satisfies the constraints in $\sim$, and that $q_\sim$ does not
contain cycles consisting only of aux-simple atoms.

\begin{definition}\label{def:dsound}
    Function $\dsound{q}{\dat_{\K}}{\tau}$ returns $\true$ if and only if the two following conditions hold.
    \begin{enumerate}
        \item\label{spur:cond1} For all $s,t\in\terms{q}$, if ${s \sim t}$,
        then ${\tau(s) = \tau(t)}$.

        \item\label{spur:cond2} $\tuple{V,E_s}$ is a directed acyclic graph.
    \end{enumerate}
\end{definition}

We next define the notions of a variable renaming for $q$ and $\tau$, and of a
skeleton for $q$ and $\sigma$.

\begin{definition}\label{def:var-renaming}
    A substitution $\sigma$ with ${\dom{\sigma} = V \cap
    \vars{q}}$ and ${\rng{\sigma} \subseteq \dom{\sigma}}$ is a
    \emph{variable renaming for $q$ and $\tau$} if
    \begin{enumerate}
        \item for each ${v \in \dom{\sigma}}$, we have $\tau(v) =
        \tau(\sigma(v))$,

        \item for each $v\in\rng{\sigma}$, we have $\sigma(v) =v$, and

        \item directed graph $\tuple{\sigma(V), \sigma(E_s)}$ is a forest.
    \end{enumerate}
\end{definition}

\begin{definition}\label{def:skeleton}
    A \emph{skeleton} for $q$ and a variable renaming $\sigma$ is a directed
    graph ${\S = \tuple{\V, \E}}$ where $\V = \sigma(V)$, and $\E$ satisfies
    $\sigma(E_s) \subseteq \E \subseteq \sigma(E_t)$ and it is a forest whose
    roots are the individuals occurring in $\V$.
\end{definition}

Finally, we present function $\mathsf{exist}$ that checks whether one can
satisfy the constraints imposed by the roles $L(v',v)$ labelling a skeleton
edge ${\tuple{v',v} \in \E}$.

\begin{definition}\label{def:exist}
    Given individuals $u'$ and $u$, and a set of roles ${L}$, function
    ${\exist(u',u,L)}$ returns \true if and only if individuals ${\setof{u_1,
    \ldots, u_n} \subseteq \aux{\dat_\K}}$ with ${n > 0}$ and ${u_n = u}$ exist
    where
    \begin{itemize}
        \item if ${S \in L}$ exists such that ${\TRANS{S} \not\in \T}$, then $n
        = 1$; and

        \item $u_0 =u'$, and ${\dat_{\K}\models \directedge{R}(u_{i-1}, u_i)}$
        for each ${R \in L}$ and each ${i \in \interval{1}{n}}$.
    \end{itemize}
\end{definition}

\begin{algorithm}[t]
    \small
    \DontPrintSemicolon
    \lIf{$\dsound{q}{\dat_{\K}}{\tau} = \false$}{\KwRet \false}
    \textbf{return} \true \textbf{if} each ${R(s,t) \in q_\sim}$ is good or aux-simple\;
    \textbf{guess} a variable renaming $\sigma$ for $q$ and $\tau$\;
    \textbf{guess} a skeleton $\S =\tuple{\V,\E}$ for $q$, $\sigma$, and $\tau$\;
    \textbf{for} $\tuple{v',v}\in \E$, let $L(v',v) = \emptyset$\;
    \textbf{for} \emph{aux-simple atom $R(s,t) \in \sigma(q_{\sim})$}, add $R$ to $L(s,t)$\;
    \For{neither good nor aux-simple $R(s,t) \in \sigma(q_{\sim})$}{
        \textbf{guess}  role $P$ s.t.\ ${\dat_{\K}\models {P}(\tau(s),\tau(t))}$ and $P\subrole R$\;
        \lIf{$\tuple{s,t}\not\in \E \text{ and } \TRANS{P}\not\in\T$}{\KwRet \false}
        \If{$s$ reaches $t$ in $\E$}{
            let $v_0, \ldots, v_n$ be the path from $s$ to $t$ in $\E$\;
        }
        \Else{
            let $a_t$ be the root reaching $t$ in $\E$ via $v_0, \ldots, v_n$\;
            \lIf{$\dat_{\K} \not\models P(\tau(s),a_t)$}{\KwRet \false}
        }
        \textbf{for} $i \in \interval{1}{n}$, add $P$ to $L(v_{i-1},v_i)$\;
    }
    \For{$\tuple{v',v}\in \E$}{
        \lIf{$\exist(\tau(v'),\tau(v), L(v',v)) = \false$}{\KwRet \false}
    }
    \KwRet \true\;
    \caption{\footnotesize $\sound{q}{\dat_{\K}}{\tau}$}\label{algo:issound}
\end{algorithm}

Candidate answer $\tau'$ for $q'$ over $\dat_\K$ is sound, if the
nondeterministic procedure $\sound{q}{\dat_{\K}}{\tau}$ from
Algorithm~\ref{algo:issound} returns \true, as shown by Theorem
\ref{th:correctness}.

\begin{theorem}\label{th:correctness}
    Let $\pi'$ be a substitution. Then ${\Xi_\K \models \pi'(q')}$ iff $\K$ is
    unsatisfiable, or a candidate answer $\tau'$ to $q'$ over $\dat_\K$ exists
    such that $\sproj{\tau'}{\vec{x}} = \pi'$ and the following conditions hold:
    \begin{enumerate}
        \item for each $x\in\vec{x}$, we have $\tau'(x)\in\indnames$, and

        \item a nondeterministic computation exists such that function
        ${\sound{q}{\dat_{\K}}{\tau}}$ returns $\true$.
    \end{enumerate}
\end{theorem}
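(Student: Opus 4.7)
The plan is to prove the biconditional by treating the two directions separately, after handling the trivial case when $\K$ is unsatisfiable (by Proposition \ref{prop:sat}, $\Xi_\K$ then derives $\bot$, so both sides hold). The central semantic tool is a homomorphism $h \colon I \to J$ from a canonical model $I$ of $\Xi_\K$ to a model $J$ of $\dat_\K$: the map $h$ is identity on \indnames and sends every auxiliary element of type $R,A$ in $I$ to the individual $o_{R,A}$ in $J$. Its existence follows from Definition \ref{def:datalog-rewriting}, because $\dat_\T$ is obtained from $\Xi_\T$ precisely by replacing each fresh existential witness with the named individual $o_{R,A}$. Moving from $(q',\tau')$ to $(q,\tau)$ as in Definition \ref{def:dat-order} only replaces equals by equals in $\dat_\K$ and in \indnames, so both the semantics of the query and the entailment $\dat_\K \models \tau(q)$ are preserved.

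For soundness ($\Leftarrow$), I assume that a candidate answer $\tau'$ and a nondeterministic computation producing a variable renaming $\sigma$, a skeleton $\S = \tuple{\V,\E}$, a role $P$ for each non-good non-aux-simple atom, and the successful \exist-checks exist. I build a match $\pi$ of $q'$ in $I$ inductively along $\S$: on roots I set $\pi(v) = \tau(v)$, and on each child $v$ of $v'$ I use the successful $\exist(\tau(v'),\tau(v),L(v',v))$ call to pick a chain of auxiliary elements of $I$ that realises every role in $L(v',v)$, taking $\pi(v)$ to be the endpoint. Verifying $\pi \models \sigma(q_\sim)$ splits three ways: good atoms are immediate from the definition of goodness together with the fact that $\dat_\K$ and $I$ agree on $\SELF_R$-labels; aux-simple atoms are satisfied because step 6 added their role to $L$ and simple roles force $\exist$ to collapse the chain to a single direct edge; and each remaining $R(s,t)$ is satisfied because the guessed $P$ propagates transitively along the path $v_0,\ldots,v_n$ chosen in steps 10 or 13, with the auxiliary check ${\dat_\K \models P(\tau(s), a_t)}$ in step 13 ensuring that the segment outside $\S$ also lifts to a $P$-path in $I$ via $h$.

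For completeness ($\Rightarrow$), I start from a match $\pi$ of $\pi'(q')$ in $I$ and set $\tau' = h \circ \pi$; this is a candidate answer with $\sproj{\tau'}{\vec{x}} = \pi'$. The renaming $\sigma$ is read off the partition of $V \cap \vars{q}$ by equality of $\pi$-images: condition 3 of Definition \ref{def:var-renaming} reduces to showing that $\dsound{q}{\dat_\K}{\tau}$ must already return \true, which holds because the tree structure of $I$ forces $\pi(s) = \pi(t)$ whenever $s \sim t$ (by uniqueness of tree-parents for aux-simple edges) and also rules out aux-simple cycles. The set $\E$ is obtained by taking $\tuple{\sigma(v'),\sigma(v)} \in \sigma(E_t)$ precisely when $\pi(v')$ is the nearest ancestor of $\pi(v)$ among $\pi(\V)$ in $I$. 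For each non-good non-aux-simple $R(s,t)$, $P$ is chosen as the atomic role with $P \subrole R$ that actually labels the direct-edge segment connecting $\pi(s)$ and $\pi(t)$ in $I$; transitivity of $P$ together with the tree structure of $I$ supplies the path of step 10, while step 13 applies when $\pi(s)$ and $\pi(t)$ lie under different skeleton roots and the common ancestor is a named individual. The final \exist-check succeeds because every skeleton edge is witnessed by a concrete direct-edge chain in $I$ whose projection through $h$ is a chain of $\directedge{R}$-facts in $\dat_\K$.

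The main obstacle is coordinating three orthogonal slacks simultaneously: equalities in $\dat_\K$ that conflate named with auxiliary individuals (handled by Definition \ref{def:dat-order}), distinct $q$-variables that $\tau$ merges but $\pi$ separates into different auxiliary elements of $I$ (handled by $\sigma$), and non-simple roles whose $I$-witnesses are transitive chains of direct edges rather than single edges (handled by $L$ and \exist). The fork and aux-acyclicity conditions enforced by $\dsound$ are precisely what permit treating the aux-simple fragment of $q_\sim$ as a forest inside $\S$; without them, completeness would fail because an aux-simple cycle or an unhonored fork in $q$ would demand that $\pi$ identify auxiliary elements lying in different subtrees of $I$, contradicting the tree shape of the canonical model.
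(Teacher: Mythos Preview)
Your proposal is correct and follows essentially the same approach as the paper: your homomorphism $h$ is the paper's mapping $\delta$ (Lemmas~\ref{lemma:homomorphism} and~\ref{lemma:dat-embed}), your inductive construction of $\pi$ along the skeleton for soundness and your extraction of $\tau' = h \circ \pi$, $\sigma$, and $\S$ from a match $\pi$ in $I$ for completeness mirror the appendix proofs, and the ``tree structure of $I$'' you invoke is exactly the content of Lemma~\ref{lemma:univ}. One small omission worth flagging: Algorithm~\ref{algo:issound} can return \true\ already in step~2, before any $\sigma$ or skeleton is guessed, so your soundness construction (which presupposes $\sigma$ and $\S$) does not literally cover that branch; the paper treats this case separately by building $\pi$ directly along the forest $\tuple{V,E_s}$ supplied by $\mathsf{isDSound}$, which is the degenerate version of your argument when only good and aux-simple atoms are present.
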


The following results show that our function $\mathsf{isSound}$
runs in nondeterministic polynomial time.
\begin{restatable}{theorem}{ucomplexity}\label{th:complexity-ubound}
    Function ${\sound{q}{\dat_{\K}}{\tau}}$ can be implemented so that
    \begin{enumerate}
        \item it runs in nondeterministic polynomial time,

        \item if each binary
        atom in $q$ is either good or aux-simple w.r.t.\ $\tau$, it runs in polynomial time, and

        \item if the TBox \T and the query $q$ are fixed, it runs in polynomial
        time in the size of the ABox $\A$.
    \end{enumerate}
\end{restatable}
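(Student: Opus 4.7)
The plan is to perform a one-time polynomial preprocessing of $\dat_\K$ and then show that each step of Algorithm~\ref{algo:issound} either is deterministic polynomial time or involves only a polynomial amount of nondeterministic guessing whose verification is polynomial.

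First I would observe that $\dat_\K$ is a pure datalog program over polynomially many constants (the individuals from $\A$ together with the auxiliary $o_{R,A}$ and the bounded-arity predicates from Table~\ref{table:Xi}), so its materialisation can be computed in polynomial time in $|\K|$. This materialisation lets me answer in constant time all the membership tests used throughout the algorithm: whether an individual is in $\aux{\dat_\K}$ or in $\setind{\dat_\K}$, the canonical representatives $u_\approx$ from Definition~\ref{def:dat-order}, and the derived facts $\SELF_R(u)$, $R(u,u')$, $u\approx u'$, and $\directedge{R}(u,u')$. I would then build $\sim$ by saturating $\terms{q}\times\terms{q}$ under the rule of Definition~\ref{def:fork} in time polynomial in $|q|$, construct the connection graph $\tuple{V,E_s,E_t}$ directly from the materialisation, and implement the acyclicity test of $\dsound$ by a linear-time topological sort.

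With this preprocessing in place, I would bound the remaining work step by step: the renaming $\sigma$ at line~3 has description of size $O(|\vars{q}|)$ and its three conditions are checked in polynomial time; a skeleton $\S=\tuple{\V,\E}$ is specified by choosing, for each variable in $\V$, at most one parent in $\V$ subject to $\sigma(E_s)\subseteq\E\subseteq\sigma(E_t)$ and the forest property, all verifiable in polynomial time; the for-loop at lines~7--15 iterates at most $|q|$ times, each iteration guessing a single role $P$ from the polynomially many roles in $\T$ and performing lookups, a path extraction in $\E$, and a membership test on the materialisation; finally, each call to $\exist$ at line~17 reduces to a reachability test in a labelled version of the $\directedge{R}$-graph of $\dat_\K$, which a BFS solves in linear time. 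This gives claim~(1). Claim~(2) is then immediate, because when every binary atom is good or aux-simple the algorithm returns \true at line~2 after only deterministic tests, so no guessing is performed.

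For claim~(3) I would do a more careful accounting once $\T$ and $q$ are fixed: $|\vars{q}|$ is then constant, so there are only constantly many renamings $\sigma$ and constantly many role guesses in the loop at lines~7--15; the only place where the branching can depend on $\A$ is the choice of a skeleton, where each of the constantly many variables in $\V$ may take its parent among $O(|\A|)$ vertices, yielding $|\A|^{O(1)}$ candidate skeletons. Since each nondeterministic computation runs in polynomial time in $|\A|$, an exhaustive deterministic enumeration gives the desired data-complexity bound. The delicate point of the proof, and the one I would treat most carefully, is precisely this last accounting: one has to see that the ``variable part'' of a skeleton has fixed size and that the only $\A$-dependent freedom is choosing an individual root to attach each variable-subtree, so that the enumeration remains polynomial in $|\A|$.
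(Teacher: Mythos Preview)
Your proposal is correct and follows essentially the same approach as the paper: materialise $\dat_\K$ in polynomial time, argue that $\sim$, the connection graph, $\mathsf{isDSound}$, and $\mathsf{exist}$ are all polynomial, then observe that the guesses in lines~3, 4, and~8 are polynomially bounded in general and constantly bounded (up to $|\A|^{O(1)}$ skeleton choices) when $\T$ and $q$ are fixed. Your treatment is in fact slightly more explicit than the paper's---notably the BFS realisation of $\mathsf{exist}$ and the accounting of skeleton choices via parent selection---but the underlying argument is the same.
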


Each rule in $\dat_\K$ contains a fixed number of variables, so we can compute
all consequences of $\dat_\K$ using polynomial time. Thus, we can compute CQ
$q$ and substitution $\tau$ in polynomial time, and by Proposition
\ref{prop:sat}, we can also check whether $\K$ is unsatisfiable using
polynomial time; hence, by Theorem~\ref{th:complexity-ubound}, we can check
whether a certain answer to $q'$ over $\Xi_\K$ exists using nondeterministic
polynomial time in combined complexity (i.e., when the ABox, the TBox, and the
query are all part of the input), and in polynomial time in data complexity
(i.e., when the TBox and the query are fixed, and only the ABox is part of the
input).

The filtering procedure by \citeA{DBLP:conf/aaai/StefanoniMH13} is polynomial,
whereas the one presented in this paper introduces a source of intractability.
In Theorem \ref{th:complexity-lbound} we show that checking whether a candidate
answer is sound is an \np-hard problem; hence, this complexity increase is
unavoidable. We prove our claim by reducing the \np-hard problem of checking
satisfiability of a 3CNF formula $\varphi$~\cite{Garey:1979:CIG:578533}.
Towards this goal, we define an \elhso KB $\K_\varphi$ and a Boolean CQ
$q_\varphi$ such that $\varphi$ is satisfiable if and only if
${\Xi_{\K_\varphi} \models q_\varphi}$. Furthermore, we define a substitution
$\tau_\varphi$, and we finally show that $\tau_\varphi$ is a unique candidate
answer to $q_\varphi$ over $\dat_{\K_\varphi}$.

\begin{restatable}{theorem}{lbound}\label{th:complexity-lbound}
    Checking whether a candidate answer is sound is \np-hard.
\end{restatable}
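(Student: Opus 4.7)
The plan is to polynomially reduce 3SAT to soundness checking. Given a 3CNF formula $\varphi = C_1 \wedge \ldots \wedge C_m$ over propositional variables $x_1, \ldots, x_n$, I would construct, in polynomial time, an \elhso KB $\K_\varphi$, a Boolean CQ $q_\varphi$, and a substitution $\tau_\varphi$, arranged so that $\tau_\varphi$ is the unique candidate answer to $q_\varphi$ over $\dat_{\K_\varphi}$ and so that an accepting nondeterministic run of $\sound{q_\varphi}{\dat_{\K_\varphi}}{\tau_\varphi}$ exists iff $\varphi$ is satisfiable. By Theorem~\ref{th:correctness} this is equivalent to ${\Xi_{\K_\varphi} \models q_\varphi}$, but since the only candidate answer is fixed, all of the difficulty is paid by the filtering procedure---which is exactly what NP-hardness of soundness requires.

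For the construction I would exploit a single transitive role $R$. The TBox would contain, for each variable $x_i$, axioms ${B_i \ISA \SOME{R}{V_i^+}}$, ${B_i \ISA \SOME{R}{V_i^-}}$, ${V_i^+ \ISA B_{i+1}}$, and ${V_i^- \ISA B_{i+1}}$, together with the ABox assertion $B_1(a)$. In the canonical model $I$ of $\Xi_{\K_\varphi}$, transitivity of $R$ generates a complete binary tree of depth $n$ rooted at $a$ with an $R$-edge from $a$ to every node, so the $2^n$ root-to-leaf paths correspond precisely to truth assignments. In the datalog program $\dat_{\K_\varphi}$, however, each existential collapses to a single auxiliary individual, leaving only $2n$ individuals $o_{R,V_i^\pm}$. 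The Boolean query $q_\varphi$ would then contain, for each clause $C_j$, a variable $y_j$ with atom $R(a,y_j)$ and a concept atom $L_j(y_j)$, where $L_j$ is a fresh concept derived precisely at those canonical-model auxiliaries representing literals satisfying $C_j$. Additional ``glue'' atoms---over fresh simple sub-roles of $R$ and a chain of selector variables $z_1, \ldots, z_n$---would force any witnessing $\pi$ in $I$ to pick one common root-to-leaf path, inducing a single coherent assignment. Taking $\tau_\varphi$ to map the $z_i$'s and $y_j$'s to the naturally intended collapsed auxiliary individuals in $\dat_{\K_\varphi}$ makes $\tau_\varphi$ a candidate answer, and a careful choice of concept and role names ensures it is the only one.

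The decisive correspondence lies in step~8 of Algorithm~\ref{algo:issound}. Because $R$ is not simple and $\tau_\varphi$ maps each $y_j$ into $\aux{\dat_{\K_\varphi}}$, every $R(a,y_j)$ is neither good nor aux-simple, so the algorithm must guess a sub-role $P \subrole R$ and the later call to $\mathsf{exist}$ must certify a corresponding $\directedge{P}$-path. If the only sub-roles of $R$ are two fresh simple roles $R_i^+$ and $R_i^-$ per variable, each with $\directedge{}$-edges constrained to the appropriate ``true'' or ``false'' branch, then the role guesses across the $m$ query atoms encode a truth assignment, and all calls to $\mathsf{exist}$ simultaneously succeed iff that assignment satisfies every clause. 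The glue structure would make the skeleton of step~4 essentially forced (the linear chain $a, z_1, \ldots, z_n$), so that step~8 is the only meaningful source of nondeterminism, and $\dsound{q_\varphi}{\dat_{\K_\varphi}}{\tau_\varphi}$ in step~1 is trivially \true.

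The principal obstacle is combining uniqueness of $\tau_\varphi$ with sufficient freedom in the filtering step to encode 3SAT: $\dat_{\K_\varphi}$ must be tight enough that only the intended assignment of query variables to collapsed auxiliaries satisfies the query, while the canonical model $I$ must remain loose enough to house the full $2^n$ family of potential witnessing $\pi$'s. I would handle this by a careful separation between the concept and role labels used to derive the $L_j$ in $\T_\varphi$ (which must fire on many canonical-model branches) and those used to constrain the candidate answer in $\dat_{\K_\varphi}$ (which must fire on exactly one pattern of auxiliary individuals). A secondary care point is verifying that neither the fork rule nor the aux-acyclicity check of Definition~\ref{def:dsound} accidentally rejects $\tau_\varphi$ outright, nor introduces extra combinatorial choices that would muddy the reduction.
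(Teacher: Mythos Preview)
Your overall plan---reduce 3SAT so that a unique candidate answer exists and all the combinatorics lands in the nondeterministic choices of Algorithm~\ref{algo:issound}---is exactly the paper's. The concrete encoding, however, diverges and contains a real gap. You propose one transitive role $R$ with \emph{simple} sub-roles $R_i^{\pm}$ and want the step-8 guesses among these to encode truth values. But step~9 returns \false\ whenever the guessed $P$ satisfies $\TRANS{P}\notin\T$ and $\tuple{s,t}\notin\E$; so a simple sub-role can be guessed only when the target is a \emph{direct} skeleton child of the source. This prevents the guessed labels from accumulating along your chain $a,z_1,\ldots,z_n$ and hence destroys the consistency mechanism you rely on. You also leave the uniqueness of $\tau_\varphi$ unresolved: with two auxiliaries $o_{R,V_i^+}$ and $o_{R,V_i^-}$ at each level of your binary tree, there is no evident way to pin $\tau(z_i)$ to one of them without simultaneously collapsing the freedom you need in the canonical model.

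The paper avoids both problems by a different architecture. It introduces, per propositional variable $v_i$, a \emph{pair of transitive roles} $P_i,N_i$ with a common non-simple super-role $T_i$, and encodes in the canonical model all finite \emph{sequences} of literals (not a depth-$n$ binary tree): from an $A$-node one reaches a $C_j$-node via $R$, from there an $L_{j,k}$-node via $S_{j,k}$, and $S_{j,k}$ is made a sub-role of $P_i$ iff $l_{j,k}\neq\neg v_i$ (dually for $N_i$). The Boolean query has the shape $G(y)\wedge\bigwedge_i T_i(a,y)\wedge\bigwedge_j C_j(z_j)\wedge T(z_j,y)$. Uniqueness of $\tau_\varphi$ is immediate because $G$ and each $C_j$ are satisfied by exactly one auxiliary individual in $\dat_{\K_\varphi}$. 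In step~8, each atom $T_i(a,y)$ forces a choice between the \emph{transitive} roles $P_i$ and $N_i$ (so step~9 never fires); all $n$ choices label the same skeleton path to $y$, and the final $\mathsf{exist}$ call succeeds iff a single direct-edge path carries every chosen role simultaneously---which is precisely consistency of the induced assignment. Satisfaction of each clause is enforced separately by the $z_j$ atoms.
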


\subsection{Preliminary Evaluation}

\begin{table*}[t!]
    \scriptsize
    \centering
    \caption{Evaluation results}\label{tab:evaluation}
    \renewcommand*{\arraystretch}{0.8}
    \setlength{\tabcolsep}{1.5pt}
    \begin{tabular}{|c|r|c|c|c|cl|}
    \hline
        &           &  Inds.\   & Unary     & Binary    & Total     & Ratio \\
        &           &           & atoms     & atoms     & atoms     & \\
    \hline
    U5  & before    &  100,848  & 169,079   & 296,941   & 466,020   & \\
        & after     &  100,873  & 511,115   & 1,343,848 & 1,854,963 & 3.98 \\
    \hline
    U10 & before    &  202,387  & 339,746   & 598,695   & 938,441   & \\
        & after     &  202,412  & 1,026,001 & 2,714,214 & 3,740,215 & 3.98 \\
    \hline
    U20 & before    &  426,144  & 714,692   & 1,259,936 & 1,974,628 & \\
        & after     &  426,169  & 2,157,172 & 5,720,670 & 7,877,842 & 3.99 \\
    \hline
    \end{tabular}
    \qquad
    \renewcommand*{\arraystretch}{0.8}
    \begin{tabular}{|r|cccc|cccc|cccc|cccc|cccc|}
        \hline
	        & \multicolumn{4}{c|}{$q_3^l$}          & \multicolumn{4}{c|}{$q_1^t$}      & \multicolumn{4}{c|}{$q_2^t$}  & \multicolumn{4}{c|}{$q_3^t$}  & \multicolumn{4}{c|}{$q_4^t$} \\
	        & C     & U		& F        		& N     & C     & U     & F     & N         & C     & U		& F     & N		& C     & U     & F		& N     & C		& U     & F		& N \\
		\hline
	    U5  & 10    & 0     & 0.06      	& 0		& 73K	& 12    & 1.71	& 7.55      & 3K    & 0     & 0.01  & 0     & 157K	& 66    & 1.07  & 8.6	& 30K   & 63    & 2.44  & 10.9 \\
		\hline
	    U10 & 22    & 0     & 0.06      	& 0     & 149K  & 12    & 1.68  & 7.54      & 6K    & 0     & 0.01  & 0     & 603K  & 81    & 1.20  & 9.6   & 61K   & 63    & 2.44  & 10.9 \\
		\hline
	    U20 & 43  	& 0     & 0.07      	& 0     & 313K  & 12	& 1.66  & 7.55		& 12K	& 0     & 0.01	& 0     & 2.6M  & 90	& 1.28  & 10.3  & 129K  & 63	& 2.44  & 10.9 \\
		\hline
        \multicolumn{21}{c}{(C)  total number of candidate answers (U) percentage of unsound answers  (F) average filtering time in ms}\\[-0.2em]
        \multicolumn{21}{c}{ (N) average number of nondeterministic choices required for each candidate answer}\\
    \end{tabular}
\end{table*}

We implemented our algorithm in a prototypical system, and we conducted a
preliminary evaluation with the goal of showing that the number of consequences
of $\dat_\K$ is reasonably small, and that the nondeterminism of the filtering
procedure is manageable. Our prototype uses the
RDFox~\cite{DBLP:conf/aaai/MotikNPHO14} system to materialise the consequences
of $\dat_\K$. We ran our tests on a MacBook Pro with 4GB of RAM and a 2.4Ghz
Intel Core 2 Duo processor.

We tested our system using the version of the LSTW
benchmark~\cite{DBLP:conf/semweb/LutzSTW13} by
\citeA{DBLP:conf/aaai/StefanoniMH13}. The TBox of the latter is in \elho, and
we extended it to \elhso by making the role \emph{subOganizationOf} transitive
and by adding an axiom of type 5 and an axiom of type 7. We used the data
generator provided by LSTW to generate KBs U5, U10, and U20 of 5, 10, and 20
universities, respectively. Finally, only query $q_3^l$ from the LSTW benchmark
uses transitive roles, so we have manually created four additional queries. Our
system, the test data, and the queries are all available
online.\footnote{\url{http://www.cs.ox.ac.uk/isg/tools/EOLO/}} We evaluated the
practicality of our approach using the following two experiments.

First, we compared the size of the materialised consequences of $\dat_\K$ with
that of the input data. As the left-hand side of Table~\ref{tab:evaluation}
shows, the ratio between the two is four, which, we believe, is acceptable in
most practical scenarios.

Second, we measured the `practical hardness' of our filtering step on our test
queries. As the right-hand side of Table~\ref{tab:evaluation} shows, soundness
of a candidate answer can typically be tested in as few as several
milliseconds, and the test involves a manageable number of nondeterministic
choices. Queries $q_3^t$ and $q_4^t$ were designed to obtain a lot of candidate
answers with auxiliary individuals, so they retrieve many unsound answers.
However, apart from query $q_3^t$, the percentage of the candidate answers that
turned out to be unsound does not change with the increase in the size of the
ABox. Therefore, while some queries may be challenging, we believe that our
algorithm can be practicable in many cases.

\section{Acyclic and Arborescent Queries}\label{sec:acyclic}

In this section, we prove that answering a simple class of tree-shaped acyclic
CQs---which we call \emph{arborescent}---over $\mathcal{ELHO}$ KBs is
tractable, whereas answering acyclic queries is \np-hard. In addition, we show
that extending ${\mathcal{EL}}$ with transitive or reflexive roles makes
answering arborescent queries \np-hard. This is in contrast with the recent
result by \citeA{DBLP:conf/ijcai/BienvenuOSX13}, who show that answering
acyclic CQs over ${\mathcal{ELH}}$ KBs is tractable. We start by introducing
acyclic and arborescent queries.

\begin{definition}
    For $q$ a Boolean CQ, ${\dgraph{q}= \tuple{\vars{q}, E}}$ is a directed
    graph where $\tuple{x,y} \in E$ for each $R(x,y)\in q$. Query $q$ is
    \emph{acyclic} if the graph obtained from $\dgraph{q}$ by removing the
    orientation of edges is acyclic; $q$ is \emph{arborescent} if $q$ contains
    no individuals and ${\dgraph{q}}$ is a rooted tree with all edges pointing
    towards the root.
\end{definition}

Definition~\ref{def:entails} and Theorem~\ref{th:arborescent} show how to
answer arborescent CQs over \elho KBs in polynomial time. Intuitively, we apply
the fork rule (cf.\ Definition~\ref{def:fork}) bottom-up, starting with the
leaves of $q$ and spread constraints upwards.

\begin{definition}\label{def:entails}
    Let $\K$ be an \elho KB, let $\dat_\K$ be the datalog program for $\K$, let
    $\setind{\dat_\K}$ and $\aux{\dat_\K}$ be as specified in Definition
    \ref{def:dat-order}, and let $q$ be an arborescent query rooted in ${r \in
    \vars{q}}$. For each ${y \in \vars{q}}$ with $y\neq r$, and each ${V
    \subseteq \vars{q}}$, sets \roles{y} and $\pred{V}$ are defined as follows.
    \begin{displaymath}
	\begin{array}{@{}c@{}}
        \roles{y} = \setof{R\in\rolenames \mid  R(y,x)\in q \text{ with $x$ the parent of $y$ in } \dgraph{q}}\\[1ex]
        \pred{V} =  \setof{y\in\vars{q} \mid \exists x\in V \text{ with $x$ the parent of $y$ in } \dgraph{q}}
	\end{array}
    \end{displaymath}
    Set ${\mathsf{RT}}$ is the smallest set satisfying the following conditions.
    \begin{itemize}
        \item $\setof{r} \in \mathsf{RT}$ and the level of $\setof{r}$ is $0$.

        \item For each set $V \in \mathsf{RT}$ with level $n$, we have
        ${\pred{V} \in \mathsf{RT}}$ and the level of $\pred{V}$ is $n+1$.

        \item For each set $V \in \mathsf{RT}$ with level $n$ and each
        $y\in\pred{V}$, we have ${\setof{y} \in \mathsf{RT}}$ and the level of
        $\setof{y}$ is $n+1$.
    \end{itemize}
    For each $V \in \mathsf{RT}$, set $\mathsf{c}_V$ contains each ${u \in
    \aux{\dat_\K}\cup\setind{\dat_\K}}$ such that ${\dat_\K \models B(u)}$ for
    each unary atom ${B(x)\in q}$ with ${x\in V}$. By reverse-induction on the
    level of the sets in $\mathsf{RT}$, each $V \in \mathsf{RT}$ is associated
    with a set ${\mathsf{A}_V \subseteq \setind{\dat_{\K}}\cup\aux{\dat_{\K}}}$.
    \begin{itemize}
        \item For each set $V \in \mathsf{RT}$ of maximal level, let
        ${\mathsf{A}_V = \mathsf{c}_V}$.

        \item For $V\in \mathsf{RT}$ a set of level ${n}$ where $\mathsf{A}_V$
        is undefined but $\mathsf{A}_{W}$ has been defined for each ${W\in
        \mathsf{RT}}$ of level $n+1$, let ${\mathsf{A}_V= \mathsf{c}_V \cap
        (\mathsf{i}_V \cup \mathsf{a}_V)}$, where $\mathsf{i}_V$ and
        $\mathsf{a}_V$ are as follows.
    \end{itemize}
	\begin{small}
    \begin{displaymath}
	\begin{array}{@{}c@{}}
        \mathsf{i}_V = \{u\in\setind{\dat_{\K}} \mid \forall y \in \pred{V}\exists u'\in \mathsf{A}_{\setof{y}}. \dat_{\K} \models \bigwedge\limits_{R\in\roles{y}} R(u', u)\}\\[2ex]
        \mathsf{a}_V = \{u\in\aux{\dat_{\K}} \mid \exists u'\in \mathsf{A}_{\pred{V}}\forall y \in \pred{V}. \dat_{\K} \models \bigwedge\limits_{R\in\roles{y}}  \directedge{R}(u', u)\}
	\end{array}
    \end{displaymath}
	\end{small}%
    Function $\mathsf{entails}(\dat_\K, q)$ returns \true if and only if
    ${\mathsf{A}_{\setof{r}}}$ is nonempty.
\end{definition}

\begin{theorem}\label{th:arborescent}
    For $\K$ a satisfiable \elho KB and $q$ an arborescent query, function
    $\mathsf{entails}(\dat_\K, q)$ returns \true if and only if ${\Xi_\K
    \models q}$. Furthermore, function $\mathsf{entails}(\dat_\K, q)$ runs in
    time polynomial in the input size.
\end{theorem}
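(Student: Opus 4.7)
The plan is to prove both correctness and polynomial runtime together, by establishing a semantic characterization of the sets $\mathsf{A}_V$ from which the $\mathsf{entails}$ equivalence follows immediately. The canonical model $I$ of $\Xi_\K$ has a tree-shaped auxiliary structure, whereas the canonical model $J$ of $\dat_\K$ collapses all auxiliary elements of type $(R,A)$ in $I$ to the single individual $o_{R,A}$. The central lemma I would establish, by reverse induction on the level of $V \in \mathsf{RT}$, is: $u \in \mathsf{A}_V$ iff there exists a homomorphism from the sub-query of $q$ rooted at the variables in $V$ into $I$ that maps every $v \in V$ to some element of $I$ whose $J$-representative is $u$. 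Applied to $V = \setof{r}$, this immediately yields $\mathsf{A}_{\setof{r}}$ nonempty iff $\Xi_\K \models q$, since $q$ is Boolean and $\K$ is satisfiable.

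For the soundness direction of the lemma, given $u \in \mathsf{A}_V$ I would build the required homomorphism top-down. The clause $u \in \mathsf{c}_V$ discharges all unary atoms at $V$. For the children, the two cases behave differently. If $u \in \mathsf{i}_V$, then $u$ is a named individual and each child $y \in \pred{V}$ can be extended independently using a witness in $\mathsf{A}_{\setof{y}}$; the required role atoms transfer from $\dat_\K$ to $I$ because role atoms holding in $\dat_\K$ between named individuals also hold in $I$. If $u \in \mathsf{a}_V$, then $u$ is auxiliary and we must instead map all of $\pred{V}$ through a common predecessor $u' \in \mathsf{A}_{\pred{V}}$ corresponding in $I$ to the unique tree-parent of the chosen auxiliary copy of $u$; the $\directedge{R}$ conditions in $\mathsf{a}_V$ guarantee that the required role-labelled edges exist in $I$. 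For the converse, given a homomorphism $\pi : q \to I$ I would perform the symmetric case split: if $\pi$ sends $V$'s variables to a named individual, each child subtree can be analysed independently and yields $\mathsf{i}_V$; if $\pi$ sends them to an auxiliary element, then the unique tree-parent of that element in $I$ realises the common predecessor condition and witnesses $\mathsf{a}_V$. In both directions, the arborescent shape of $q$ is essential: it ensures that child subtrees are structurally independent once the image of their parent set $V$ is fixed.

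For polynomial runtime, I observe that $\mathsf{RT}$ contains at most $O(|\vars{q}|^2)$ sets: iterating $\pred$ from $\setof{r}$ produces only one set at each depth of $q$, iterating $\pred$ from any singleton $\setof{y}$ produces only descendant cohorts of $y$, and the remaining singletons contribute at most $|\vars{q}|$ sets. The materialisation of $\dat_\K$ is polynomial because each rule has bounded arity, and computing each $\mathsf{A}_V$ requires only polynomially many entailment queries against this materialisation. I expect the principal obstacle to lie in the completeness direction of the lemma: the induction must carefully match the arborescent merging pattern induced by $\pi$ with the appropriate clause ($\mathsf{i}_V$ or $\mathsf{a}_V$), using that auxiliary elements of $I$ have a unique tree-parent---an \elho-specific property that fails once transitive or reflexive roles are added, consistent with the \np-hardness results for those extensions established elsewhere in Section~\ref{sec:acyclic}.
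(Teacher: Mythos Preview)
Your proposal is correct and matches the paper's approach: the paper proves, by reverse induction on the level of $V\in\mathsf{RT}$, that $u\in\mathsf{A}_V$ iff there is a homomorphism of the sub-query $q_V$ (in which all variables of $V$ are merged into a single root $y_V$) into the canonical model $I$ sending $y_V$ to an element with $\delta$-image $u$, using exactly your $\mathsf{i}_V$/$\mathsf{a}_V$ case split and the unique-tree-parent property specific to \elho. The one imprecision to tighten is that your characterisation should require all $v\in V$ to map to the \emph{same} element of $I$ (which is what the paper's merging into $y_V$ enforces), not merely to elements sharing $J$-representative $u$; your $O(|\vars{q}|^2)$ bound on $|\mathsf{RT}|$ is in fact sharper than the paper's looser counting argument.
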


Finally, we show that (unless $\ptime = \np$), answering arbitrary acyclic
queries over \elho KBs is harder than answering arborescent queries, and we
show that adding transitive or reflexive roles to the DL $\el$ makes answering
arborescent queries intractable.

\begin{theorem}\label{th:lowerbound-acyclic}
    For $\K = \tuple{\T, \A}$ a KB and $q$ a Boolean CQ, checking $\K \models
    q$ is \np-hard in each of the following cases.
    \begin{enumerate}
        \item The query $q$ is acyclic and the TBox $\T$ is in $\elho$.

        \item The query $q$ is arborescent and the TBox $\T$ consists only of
        axioms of type $1$ and $7$, and of one axiom of type $8$.

        \item The query $q$ is arborescent and the TBox $\T$ consists only of
        axioms of  type $1$ and $7$, and of one axiom of type $9$.
    \end{enumerate}
\end{theorem}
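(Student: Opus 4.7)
The plan is to prove each item independently by a polynomial-time reduction from a standard \np-complete problem. In each case I would construct, from a source instance $I$, a KB $\K_I$ of the prescribed shape together with a Boolean CQ $q_I$ of the prescribed form, and argue that $\K_I \models q_I$ iff $I$ is a positive instance.

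For item (1), I would reduce from 3-SAT to entailment of an acyclic CQ over an \elho KB. The key devices are nominals, which let one force anonymous successors to coincide with designated truth-value individuals $t, f$, and role inclusions (axioms of type 5), which let one duplicate an $R$-edge under a second role. Given a 3-CNF formula $\varphi$, I use, for each propositional variable $v_i$, one ABox individual whose successors under a fresh role in the canonical model are forced by nominals to be $t$ and $f$. The query $q_\varphi$ is a disjoint union of clause-stars; each star has a fresh centre variable whose three children check, via unary atoms and constants, that at least one literal of the clause is satisfied under the chosen assignment. Because distinct stars share only ABox constants, which do not appear in the Gaifman graph of $q_\varphi$, the Gaifman graph is a forest and $q_\varphi$ is acyclic. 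Soundness and completeness then reduce to checking that homomorphisms of $q_\varphi$ into the canonical model of $\K_\varphi$ correspond bijectively to satisfying assignments of $\varphi$.

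For item (2), I would reduce from 3-SAT to entailment over an \el TBox whose only non-type-1/7 axiom is a single $\mathsf{Trans}(R)$. The TBox uses axioms of type 7 to build, from an ABox seed individual $a$, a tree in the canonical model whose nodes at depth $i$ record a partial assignment to $v_1, \ldots, v_i$; transitivity then makes $R$ connect $a$ to every node of that tree. The arborescent query $q_\varphi$ has a root that fans into $m$ subtrees---one per clause---in which $R$-atoms climb from literal-tagged leaves towards the root; the coupling across clause-subtrees, forced through the shared root, ensures that a single global assignment satisfies all clauses. Transitivity is essential here: it allows leaves at different depths of the variable-tree to share a single $R$-edge to the root, so the rigid arborescent shape of $q_\varphi$ embeds into the canonical model iff $\varphi$ is satisfiable.

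For item (3), I would give an analogous 3-SAT reduction with $\mathsf{Trans}(R)$ replaced by a single $\mathsf{Refl}(R)$. Reflexivity equips every element of the canonical model with an $R$-self-loop, which plays the role that transitivity plays in item (2): whenever a query atom $R(y, y')$ would need to span a non-trivial $R$-path, the self-loop instead lets $y$ and $y'$ collapse onto the same element, producing the same cross-branch agreement constraint. The main obstacle across all three parts is engineering $q_I$ to respect the syntactic restriction (acyclic, or arborescent) while still enforcing a global constraint on $\varphi$; the standard technique I would rely on is to replace would-be cycles in the Gaifman graph by stars fanning out from a fresh centre variable and to push the remaining agreement constraints into unary atoms at the leaves and into shared ABox constants, which do not contribute edges to the Gaifman graph.
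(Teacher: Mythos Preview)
Your sketches for items (2) and (3) are in the right spirit and close to what the paper does: build, from a seed individual, an exponential binary tree via type-7 axioms whose root-to-leaf paths encode truth assignments, tag internal nodes with the clauses they satisfy, and let the arborescent query pick out one leaf and, for each clause $C_j$, one ancestor of that leaf labelled $C_j$. The paper makes the coupling explicit by including in the query a ``spine'' path $p_0 \to p_1 \to \cdots \to p_{n+1}$ with unary atoms $A_i(p_i)$ and $G(p_{n+1})$ that pin it to a single root-to-leaf path of the model tree; every clause gadget then hooks into the shared variable $p_{n+1}$. Transitivity (item 2) lets a single atom $R(x_j,p_{n+1})$ reach any ancestor, while reflexivity (item 3) lets an $R$-path of length $n{+}1$ in the query collapse onto a shorter model path so that its far end lands on an arbitrary ancestor. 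Your description omits the unary atom that pins the query root to a leaf of the model tree; without it the shared root does not encode an assignment and the coupling you rely on disappears. This is easy to repair, so I would count (2) and (3) as essentially correct.

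Item (1), however, has a genuine gap. You propose a query that is a \emph{disjoint union of clause-stars} sharing only ABox constants. But constants have fixed interpretations, so if two clause gadgets share no existentially quantified variable, their matches in the canonical model are completely independent: each star can pick its own satisfying literal using a \emph{different} assignment, and $\K_\varphi \models q_\varphi$ would hold whenever every clause is individually satisfiable, which is always. The phrase ``under the chosen assignment'' hides the problem---nothing in your construction chooses a single assignment. The only place a CQ can encode a nondeterministic choice is in the image of a shared existentially quantified variable, and you have explicitly ruled that out. The paper's fix is precisely to share such a variable: all clause gadgets attach to the same $p_{n+1}$, whose image is a leaf of the exponential tree and therefore encodes one full assignment. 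Keeping the resulting query acyclic while still forcing each clause gadget to ``look back'' along the spine requires more than a star: the paper's per-clause gadget uses $O(n)$ fresh variables arranged so that, reading from $p_{n+1}$ outward, the gadget can either keep tracking the spine or ``escape'' to a nominal $c_j$ once it meets a $C_j$-labelled node, with role inclusions $R \sqsubseteq S_j$ and the nominal axiom $C_j \sqsubseteq \exists S_j.\{c_j\}$ implementing the escape. You had the right ingredients (nominals and role inclusions), but the disjoint-stars architecture cannot carry the global consistency constraint.
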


\section{Outlook}

In future, we shall adapt our filtering procedure to detect unsound answers
already during query evaluation. Moreover, we shall extend
Algorithm~\ref{algo:issound} to handle complex role inclusions, thus obtaining
a practicable approach for OWL 2 EL.

\section*{Acknowledgements}
This work was supported by Alcatel-Lucent; the EU FP7 project OPTIQUE; and the
EPSRC projects MASI$^3$, Score!, and DBOnto.

\bibliographystyle{aaai}
\bibliography{references}

\ifdraft{
\clearpage
\appendix
\onecolumn
\setcounter{secnumdepth}{2}

\section{Skolem Chase and Universal Interpretations}\label{app:chase}

In this section, we present a special variant of \emph{Skolem
chase~}\cite{DBLP:conf/pods/Marnette09}. Our chase uses
\emph{merging}~\cite{DBLP:books/aw/AbiteboulHV95} and
\emph{pruning}~\cite{msh09hypertableau} to deal with equality rules and
guarantee that on rule base $\Xi_\K$ the so-called universal interpretation it
produces satisfies the structural properties described in Section
\ref{sec:np_algorithm}. We next formally present our chase variant.

Let $\Sigma$ be a rule base and assume, w.l.o.g., that each variable occurring
in $\Sigma$ is quantified over exactly once in $\Sigma$; furthermore, let $>$
be an arbitrary total order on ground terms. Next, we first define some
auxiliary notions, after which we define our Skolem chase variant and universal
interpretations of $\Sigma$.

The \emph{skolemisation of an existential rule} ${\varphi(\vec{x}, \vec{y})
\rightarrow \exists \vec{z}. \psi(\vec{x},\vec{z})}$ is the formula
${\psi_{\mathsf{sk}}}$ obtained from $\psi$ by substituting ${f_z(\vec{x})}$
for each ${z \in \vec{z}}$ with $f_z$ a fresh function symbol of arity
$|\vec{x}|$. A \emph{chase instance} is a set ${I = I_{\Sigma}\cup I_\eq}$
where $I_{\Sigma}$ is a set of ground atoms over the predicates in $\Sigma$,
and $I_\eq$ is a set of assertions of the form $w \rightsquigarrow w'$ with
$\rightsquigarrow$ a fresh predicate. For each term $w$, let ${\norm{w}{I} =
w}$, if no term $w'$ exists such that ${w \rightsquigarrow w' \in I}$;
otherwise, let ${\norm{w}{I} = w'}$ where $w'$ is the smallest term in the
ordering $>$ for which terms ${w_0, \ldots, w_n}$ with ${w_0 = w}$ and ${w_n =
w'}$ exist such that ${w_{i-1} \rightsquigarrow w_{i}\in I}$ for each ${i \in
\interval{1}{n}}$. For a formula $\alpha$, $\norm{\alpha}{I}$ is the result of
uniformly substituting each term $w$ occurring in $\alpha$ with $\norm{w}{I}$.

An \emph{existential rule ${\varphi(\vec{x}, \vec{y}) \rightarrow \exists
\vec{z}. \psi(\vec{x}, \vec{z})}$ is applicable} to a chase instance ${I =
I_\Sigma \cup I_\eq}$, if a substitution $\sigma$ exists such that
${\sigma(\varphi) \subseteq I}$ and ${\norm{\sigma(\psi_{\mathsf{sk}})}{I} \not
\subseteq I}$; the \emph{result} of applying such a rule to ${I}$ is obtained
by adding ${\norm{\sigma(\psi_{\mathsf{sk}})}{I}}$ to $I_{\Sigma}$. An
\emph{equality rule ${\varphi(\vec{x}) \rightarrow s\approx t}$ is applicable}
to a chase instance ${I = I_{\Sigma}\cup I_\eq}$, if a substitution $\sigma$
exists such that ${\sigma(\varphi) \subseteq I}$ and ${\norm{\sigma(s)}{I} \neq
\norm{\sigma(t)}{I}}$; for ${\setof{w, w'} = \setof{\norm{\sigma(s)}{I},
\norm{\sigma(t)}{I}}}$ such that ${w > w'}$, the \emph{result} of applying such
a rule to ${I}$ is the chase instance ${I_{\Sigma}' \cup I_\eq'}$ where
${I_\eq'}$ is the result of adding ${w \rightsquigarrow w'}$ to ${I_{\eq}}$,
and $I_{\Sigma}'$ is obtained by removing all atoms occurring in $I_{\Sigma}$
that contain a term $w_2$ with $w$ a proper subterm of $w_2$, and by replacing
each occurrence of $w$ in the resulting instance with $w'$.

A \emph{chase for $\Sigma$ w.r.t.\ $>$} is a sequence of chase instances ${I_0,
I_1,\ldots}$ where $I_0$ contains each ground atom in $\Sigma$, and, for each
${i \geq 1}$, chase instance ${I_{i+1}}$ is the result of an (arbitrarily
chosen) rule in $\Sigma$ applicable to ${I_i}$, and ${I_{i+1} = I_i}$, if no
rule is applicable to ${I_i}$. This sequence must be \emph{fair}---that is, if
a rule ${\varphi \rightarrow \psi}$ in ${\Sigma}$ is applicable to some ${I_i}$
under a specific substitution $\sigma$ and $\sigma(\varphi) \subseteq I_j$, for
each $j\geq i$, then $k \geq i$ exists such that ${I_{k+1}}$ is the result of
${\varphi \rightarrow \psi}$ on ${I_k}$ w.r.t.\ substitution $\sigma$. Set ${I
= \bigcup_{i \in \nat} \bigcap_{j\geq i} I_j}$ is a \emph{universal
interpretation of $\Sigma$ (w.r.t.\ $>$)} and its \emph{domain} is the set
$\domain{I}$ containing $\norm{w}{I}$ for each term $w$ that occurs in $I$.

Please note that for each term $w$ that occurs in $I$ in at least one atom over
a predicate from $\Sigma$, we have $\norm{w}{I} =w$ and $w\in\domain{I}$. Also,
owing to the fairness of the chase sequence, no rule in $\Sigma$ is applicable
to $I$. Furthermore, if $\Sigma$ does not contain equality rules, then $I$ is
independent from the order in which rules are applied, and so it is \emph{the
universal interpretation} of $\Sigma$. Otherwise, if $\Sigma$ contains equality
rules, then $I$ is homomorphically equivalent w.r.t.\ the predicates occurring
in $\Sigma$ to any other universal interpretation $I'$ of $\Sigma$, although
$I$ and $I'$ may disagree on the assertions over $\rightsquigarrow$. Finally,
it is well known \cite{DBLP:conf/pods/Marnette09} that $I$ can be
homomorphically embedded into any model of $\Sigma$; so $I$ can be used to
answer arbitrary CQs over $\Sigma$.
\begin{fact}
    For each CQ $q$ and each
    substitution $\pi$, ${\Sigma \models \pi(q)}$ if and only if a substitution
    $\pi_*$ with ${\dom{\pi_*} = \vars{q}}$ exists such that
    ${\pi \subseteq \pi_*}$ and ${\norm{\pi_*(q)}{I} \subseteq I}$.
\end{fact}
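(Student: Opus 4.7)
The plan is to prove both directions by first establishing that the universal interpretation $I$, viewed as a first-order structure over the predicates of $\Sigma$ (with $\rightsquigarrow$-assertions discarded), is itself a model of $\Sigma$, and then exploiting its universal property, namely that $I$ homomorphically embeds into every model of $\Sigma$.

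For the $(\Rightarrow)$ direction, I would argue $I \models \Sigma$ and then read off $\pi_*$ from the classical semantics of CQ entailment. By construction, every term appearing in $I$ satisfies $\norm{w}{I} = w$, and every atom in $I_\Sigma$ uses only such normal-form terms. By fairness of the chase sequence, no existential rule remains applicable to $I$: any substitution triggering a rule at $I$ must trigger it at some $I_i$ already, and its normalized conclusion is added at some later stage and survives in the limit. Similarly, any equality rule triggered at $I$ would eventually have been fired, merging the two terms via a $\rightsquigarrow$-assertion and hence identifying them under $\norm{\cdot}{I}$. Since $\Sigma \models \pi(q)$ and $I \models \Sigma$, there is a substitution $\pi_*$ with $\dom{\pi_*} = \vars{q}$ extending $\pi$ and $\rng{\pi_*} \subseteq \domain{I}$ such that $I$ satisfies $\pi_*(q)$. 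Because $\norm{\cdot}{I}$ is the identity on $\domain{I}$, we obtain $\norm{\pi_*(q)}{I} = \pi_*(q) \subseteq I_\Sigma \subseteq I$, as required.

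For the $(\Leftarrow)$ direction, suppose $\pi_*$ extends $\pi$ and $\norm{\pi_*(q)}{I} \subseteq I$; I need to show $M \models \pi(q)$ for an arbitrary model $M$ of $\Sigma$. By the universal property recalled in the preceding paragraph, there is a homomorphism $h \colon \domain{I} \to \domain{M}$ that is the identity on individual constants and preserves every predicate occurring in $\Sigma$. Define $\pi_M(v) = h(\norm{\pi_*(v)}{I})$ for each $v \in \vars{q}$; since $\pi$ maps answer variables to constants and $h$ fixes constants, $\pi_M$ still extends $\pi$. For every atom $\alpha \in q$, the assumption gives $\norm{\pi_*(\alpha)}{I} \in I$, and applying $h$ predicate-wise yields $\pi_M(\alpha) \in M$; hence $M \models \pi(q)$, and since $M$ was arbitrary, $\Sigma \models \pi(q)$.

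The main obstacle is the careful verification that $I$ is genuinely a model of $\Sigma$ in the presence of the pruning step used for equality rules: when an equality rule forces $w \rightsquigarrow w'$, every atom that contains a term having $w$ as a proper subterm is deleted from $I_\Sigma$. I would handle this by showing that any rule application against such a deleted atom corresponds, after normalization, to a rule application against its $\norm{\cdot}{I}$-image, which either was subsumed at the time of deletion or is re-derived at a later chase step by fairness. Everything else reduces to bookkeeping with $\norm{\cdot}{I}$ and the observation that $h$ is defined on every value $\norm{\pi_*(v)}{I}$ because those values lie in $\domain{I}$ by construction.
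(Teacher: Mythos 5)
The paper does not actually prove this statement: it appears as an unproved \emph{Fact}, justified only by the remark that $I$ can be homomorphically embedded into any model of $\Sigma$, with a citation to Marnette (2009). So there is no in-paper proof to match; your proposal reconstructs the standard universality argument for the Skolem chase from scratch, and its outline is sound. The only-if direction is correct once you read $I$ as a first-order structure over $\domain{I}$ in which each constant $a$ is interpreted as $\norm{a}{I}$ and $\approx$ as identity; the appendix already records the two facts you need, namely that every term occurring in an atom of $I$ is in normal form and that, by fairness, no rule of $\Sigma$ is applicable to $I$ (which is precisely where the pruning subtlety you flag is absorbed, so you may cite that remark rather than re-prove it). One detail in your write-up is inconsistent as stated: you cannot in general have both $\pi \subseteq \pi_*$ and $\rng{\pi_*} \subseteq \domain{I}$, because $\pi$ may send an answer variable to a constant $a$ with $\norm{a}{I} \neq a$, in which case the satisfying assignment in the structure $I$ lands on $\norm{a}{I}$ rather than on $a$. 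This is exactly why the Fact asks for $\norm{\pi_*(q)}{I} \subseteq I$ instead of $\pi_*(q) \subseteq I$; the correct move is to keep $\pi_*(x) = \pi(x)$ on answer variables, take $\pi_*$ on existential variables from the witnessing assignment, and observe that applying $\norm{\cdot}{I}$ to $\pi_*(q)$ lands the resulting atoms in $I$. The if direction via the homomorphism $h$ composed with $\norm{\cdot}{I}$ is the standard argument and is fine. In short: correct approach, genuinely more self-contained than the paper's appeal to the literature, with one bookkeeping slip around the interpretation of constants that is easily repaired.
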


\subsection{Universal Interpretations of $\Xi_\K$ and $\dat_\K$}

Let $\K$ be an \elhso knowledge base, and let $\Xi_{\K}$ and $\dat_{\K}$ be the
rule base and the datalog program associated with $\K$, respectively. In the
rest of this appendix, we shall make the two following assumptions. First, we
associate to each rule ${A_1(x) \rightarrow \exists z. R(x,z)\wedge A(z)}$ in
$\Xi_{\K}$ a fresh unary function symbol ${f_{R,A}^{A_1}}$, and we assume that
the skolemisation of such a rule is given by ${R(x, f_{R,A}^{A_1}(x))\wedge
A(f_{R,A}^{A_1}(x))}$. Second, we assume that the chase for $\Xi_{\K}$ and for
$\dat_{\K}$, respectively, is w.r.t.\ the total order $>$ specified in
Definition \ref{def:dat-order}, and that ${f_{R, A}^{A_1}(w) > a}$ for each
ground term $w$ and each individual $a$.

\section{Proof of Proposition \ref{prop:sat}}\label{sec:proofsat}

We fix an \elhso knowledge base $\K$. Let $\Xi_{\K}$ and $\dat_{\K}$ be the rule
base and the datalog program associated with $\K$, respectively; moreover, let
$I$ and $J$ be universal interpretations of $\Xi_{\K}$ and $\dat_{\K}$,
respectively.

We next define a function $\delta$ that maps each term $w$ occurring in $I$ to
a term $\delta(w)$ as follows:
\[
\delta(w) = \begin{cases}
    w           & \text{ if } w \in \indnames,\\
    o_{R, A}    & \text{ if $w$ is of the form $ w=f_{R,A}^{A_1}(w')$.}
\end{cases}
\]
The following two Lemmas show that $\delta$ establishes a tight connection
between $I$ and $J$. The proofs of these results is given in  Appendix \ref{sec:connection}.

\begin{restatable}{lemma}{homomorphism}\label{lemma:homomorphism}
    Mapping $\delta$ satisfies the following properties for all terms $w_1$ and
    $w_2$ occurring in $I$, each individual $a \in\indnames$, each role ${R \in
    \rolenames}$, and each concept ${C \in \conceptnames \cup
    \setof{\top, \bot}}$.
    \begin{enumerate}[{H}1.]
        \item \label{hom:cond1} ${C(w_1) \in \lpmodel}$ implies that
        ${C(\delta(w_1)) \in \datalogmodel}$.

        \item \label{hom:cond2} ${R(w_1, w_2) \in \lpmodel}$ implies that
        ${R(\delta(w_1), \delta(w_2)) \in \datalogmodel}$.

        \item \label{hom:cond3} $R(w_1, w_2) \in I$ and $w$ is of the form $f_{P,A}^{A_1}(w_1)$
        imply that ${\directedge{R}(\delta(w_1), \delta(w_2)) \in J}$.

        \item \label{hom:cond4} ${\norm{w_1}{I} = a}$ implies that ${\norm{\delta(w_1)}{J} =a}$.
    \end{enumerate}
\end{restatable}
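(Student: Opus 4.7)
The plan is to prove the four properties simultaneously by induction on the chase sequence $I_0, I_1, \ldots$ producing the universal interpretation $I$ of $\Xi_\K$. The induction hypothesis asserts that H\ref{hom:cond1}--H\ref{hom:cond4} hold with $I$ replaced by $I_k$ (and with normalisation computed in $I_k$ for the last property). The base case is immediate: $I_0$ consists of the ABox atoms, each atom is ground over individuals, $\delta$ is the identity on $\indnames$, and every ABox atom is also present in $\dat_\K$, hence in $J$.

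For the inductive step I would perform a case analysis on the rule that fires to produce $I_{k+1}$ from $I_k$, running through the eleven rows of Table \ref{table:Xi} together with the auxiliary rules in $\close{\K}$. The routine cases are types 1, 3, 4, 6, 8, 9, 10, 11, and the first rule for type 5: each of these is translated verbatim into $\dat_\K$, so by the induction hypothesis the premises are preserved under $\delta$, and firing the same rule in $J$ yields the conclusion. The rules in $\close{\K}$ and the sub-role propagation rule ${\directedge{S}(x,y) \rightarrow \directedge{R}(x,y)}$ are handled analogously; the latter preserves H\ref{hom:cond3} along role chains $S \subrole R$.

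The central case is type 7, $A_1 \ISA \SOME{R}{A}$. When the skolemised rule fires on a term $w_1$ with $A_1(w_1) \in I_k$, the chase produces $w_2 = f_{R,A}^{A_1}(w_1)$ and adds $R(w_1, w_2)$ and $A(w_2)$ to $I_{k+1}$. By H\ref{hom:cond1} for $I_k$ we have $A_1(\delta(w_1)) \in J$, so the datalog rule ${A_1(x) \rightarrow R(x, o_{R,A}) \wedge \directedge{R}(x, o_{R,A}) \wedge A(o_{R,A})}$ fires in $J$ at $\delta(w_1)$. Since $\delta(w_2) = o_{R,A}$ by the definition of $\delta$, this one step simultaneously establishes H\ref{hom:cond1} for $A(w_2)$, H\ref{hom:cond2} for $R(w_1, w_2)$, and H\ref{hom:cond3} for $R(w_1, w_2)$ (with $P = R$). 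Note that distinct Skolem terms of the same type $R,A$ all collapse to the same $o_{R,A}$ under $\delta$, so the single datalog rule suffices no matter how many times the existential fires in $\Xi_\K$.

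The main obstacle is the equality rule of type 2, $A(x) \rightarrow x \approx a$, which drives H\ref{hom:cond4}. When this rule applies to $w_1$ with $A(w_1) \in I_k$, the chase for $\Xi_\K$ merges $\norm{w_1}{I_k}$ with $a$ and prunes all atoms containing proper superterms of the merged term. The induction hypothesis gives $A(\delta(w_1)) \in J$, so the corresponding equality rule in $\dat_\K$ fires in $J$ and merges $\norm{\delta(w_1)}{J}$ with $a$, yielding H\ref{hom:cond4}. I would then verify that pruning preserves H\ref{hom:cond1}--H\ref{hom:cond3}: since pruning only deletes atoms from $I_{k+1}$ and rewrites the surviving ones by substituting equals for equals, and since $\delta$ is defined structurally and commutes with the substitution induced by $\rightsquigarrow$ (individuals map to themselves, while any Skolem term of type $R,A$ maps to $o_{R,A}$ regardless of its argument), the image under $\delta$ of every surviving atom still lies in $J$. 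Fairness of the chase then transfers the properties from each $I_k$ to the limit interpretation $I$, completing the proof.
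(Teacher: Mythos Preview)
Your inductive hypothesis is too strong to be maintained along the chase for $\Xi_\K$. Take the existential-rule case: you create $w_2 = f_{R,A}^{A_1}(w_1)$ in $I_{k+1}$ and argue that the datalog rule fires in $J$, giving $R(\delta(w_1),o_{R,A}) \in J$. But $J$ is the \emph{limit} of a merging chase; if a nominal axiom such as $A \ISA \{b\}$ is present, then $o_{R,A}$ may already have been merged with $b$ in $J$, so $R(\delta(w_1),o_{R,A}) \notin J$ --- only $\norm{R(\delta(w_1),o_{R,A})}{J} = R(\delta(w_1),b) \in J$. Yet $R(w_1,w_2) \in I_{k+1}$ and $\delta(w_2) = o_{R,A}$, so your version of H\ref{hom:cond2} for $I_{k+1}$ is false. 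The same failure hits H\ref{hom:cond1} and H\ref{hom:cond3}, and your claim that ``$\delta$ commutes with the substitution induced by $\rightsquigarrow$'' is wrong for the same reason: $\delta(f_{R,A}^{A_1}(w_1)) = o_{R,A}$ while $\delta(b) = b$, and nothing in your one-directional argument forces these to be equal in $J$.

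The paper handles this by proving, via the same induction on the $\Xi_\K$-chase, only the \emph{relaxed} statements $\norm{C(\delta(w_1))}{J} \in J$, $\norm{R(\delta(w_1),\delta(w_2))}{J} \in J$, and $\norm{\delta(w_1)}{J} = \norm{a}{J}$ (Lemma~\ref{lemma:aux:homomorphism}); this invariant \emph{is} stable under merging on the $J$-side. Removing the outer $\norm{\cdot}{J}$ to obtain H\ref{hom:cond1}--H\ref{hom:cond4} for the limit $I$ then requires showing that every term occurring in an atom of $I$ has a $\delta$-image that is already normalised in $J$ (Lemma~\ref{lemma:equality}), and that argument needs the \emph{reverse} transfer from $J$ to $I$ (Lemma~\ref{lemma:aux:dat-embed}), which your proposal never establishes. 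In particular, your equality case only yields $\norm{\delta(w_1)}{J} = \norm{a}{J}$, not $\norm{\delta(w_1)}{J} = a$; bridging that last gap is precisely where the two-directional argument is used.
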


\begin{restatable}{lemma}{datembed} \label{lemma:dat-embed}
    Mapping $\delta$ satisfies the following properties for all terms $w_{1}$
    and $w_2$
    occurring in $I$, each individual $a\in\indnames$, each role ${R \in
    \rolenames}$, and each concept ${C \in \conceptnames \cup
    \setof{\top, \bot}}$.
    \begin{enumerate}[D1.]
        \item \label{dat-cond1} ${C(\delta(w_1)) \in \datalogmodel}$
        implies that ${C(w_1) \in \lpmodel}$.

        \item \label{dat-cond2} ${\SELF_{R}(\delta(w_1)) \in \datalogmodel}$
        implies that ${R(w_1,w_1) \in \lpmodel}$.

        \item \label{dat-cond3} ${R(\delta(w_1),\delta(w_2)) \in
        \datalogmodel}$ and ${\delta(w_2) \in \indnames}$ imply that
        ${R(w_1,w_2) \in \lpmodel}$.

        \item \label{dat-cond4} ${R(\delta(w_1),\delta(w_2)) \in
        \datalogmodel}$ and $\delta(w_2)$ is of the form $o_{P,A}$
        imply that
        \begin{itemize}
            \item a term $w_1' \in \domain{I}$ exists such that
            ${R(w_1', w_2) \in \lpmodel}$, and

            \item a term $w_2' \in \domain{I}$ exists such that $\delta(w_2') =
            o_{P,A}$ and ${R(w_1, w_2') \in \lpmodel}$.
        \end{itemize}

        \item \label{dat-cond5} ${\directedge{R}(\delta(w_1),\delta(w_2)) \in
        \datalogmodel}$ and $\delta(w_2)$ is of the form $o_{P,A}$ imply that
        \begin{itemize}
            \item a term ${w_3' \in \domain{I}}$ of the form
            $f_{P,A}^{A_1}(w_1)$ exists such that ${P(w_1, w_3') \in I}$, and

            \item $P\subrole R$.
        \end{itemize}

        \item \label{dat-cond6} ${\norm{\delta(w_1)}{J} = \delta(a)}$ implies that
        ${\norm{w_1}{I} = a}$.

        \item \label{dat-cond7} For each individual $u \in \domain{J}$, term ${w \in
        \domain{\lpmodel}}$ exists such that ${\delta(w) = u}$.
    \end{enumerate}
\end{restatable}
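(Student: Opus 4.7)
The plan is to prove D1--D7 by simultaneous induction on the length of the chase sequence producing $J$. I fix a fair sequence $J_0, J_1, \ldots$ with $J = \bigcup_i \bigcap_{j \ge i} J_j$, formulate each claim as a property of the finite stage $J_k$, and pass to the limit. Because $\delta$ is many-to-one, D1--D3 must be established for every $w_1$ in $I$ whose $\delta$-image matches the individual appearing in the given $J$-atom, not just for one preimage.

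For the base case $J_0$, every atom comes from $\A$ or is produced by an initial firing of a rule in $\close{\K}$, every term is an individual in $\indnames$, $\delta$ is the identity on $\indnames$, and $I_0$ also contains $\A$; each of D1--D7 then holds trivially.

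For the inductive step, I assume D1--D7 at stage $k$ and analyse the rule $\rho \in \dat_\K$ used to obtain $J_{k+1}$. The datalog rules split into three groups: (i)~rules translated unchanged from axioms of types 1, 3, 4, 5, 6, 8--11 or introduced by $\close{\K}$, which have a syntactically identical twin in $\Xi_\K$; (ii)~the type-2 equality rule producing some $u \approx a$; and (iii)~the rewritten type-7 rule which deterministically introduces $o_{R,A}$. For group~(i), I use the induction hypothesis to lift the premises of $\rho$ from $J_k$ to $I$ for every $\delta$-preimage, and then the twin rule in $\Xi_\K$ derives the conclusion; this yields D1, D2, D3, the role-containment part of D5, and the role-hierarchy and transitivity sub-cases of D4, with intermediate individuals lifted to $I$ via~D7. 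For group~(ii), D1 lifts the trigger $A(u)$ to $A(w_1) \in I$ for every $w_1$ with $\delta(w_1) = u$, so the same equality rule in $\Xi_\K$ merges $w_1$ with $a$, which will establish~D6. For group~(iii), when the rule fires on $u = \delta(w_1)$ with $A_1(u) \in J_k$, the hypothesis gives $A_1(w_1) \in I$, and the matching existential rule in $\Xi_\K$ introduces the fresh Skolem term $f_{R,A}^{A_1}(w_1)$ with $\delta(f_{R,A}^{A_1}(w_1)) = o_{R,A}$; this single step simultaneously supplies the surjectivity witness for~D7 and the Skolem witnesses needed by the second bullet of~D4 and the first bullet of~D5, while $P \subrole R$ in~D5 follows from the syntactic shape of the type-7 and role-hierarchy rules in $\dat_\K$.

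The hardest part will be the second bullet of~D4: when $R(\delta(w_1), o_{P,A}) \in J$ is derived not directly by the rewritten type-7 rule but by a chain of role-hierarchy and transitivity steps, the Skolem term $w_2'$ required in $I$ with $\delta(w_2') = o_{P,A}$ and $R(w_1, w_2') \in I$ need not be $w_2$ itself. I plan to handle this by tracing the derivation of the $R$-atom in $J$ back to an initial application of the rewritten type-7 rule that introduced $o_{P,A}$, lifting its trigger to some $v \in I$ via~D7, and then reconstructing an $R$-path in $I$ from $w_1$ to $f_{P,A}^{A_1}(v)$ by composing the lifted role-hierarchy and transitivity steps supplied by the induction hypothesis. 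A short secondary induction on the number of $\rightsquigarrow$-steps used to compute $\norm{\delta(w_1)}{J}$ will close the argument for~D6, since each such step is triggered by a type-2 application already covered in group~(ii).
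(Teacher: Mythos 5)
Your overall strategy---induction on the chase sequence for $\dat_\K$ with a rule-by-rule case analysis, keeping the inductive hypothesis universally quantified over all $\delta$-preimages so that transitivity and role-hierarchy steps can be chained---is the same as the paper's. The genuine gap is in how you handle equality. First, the inductive invariant as you state it (``$C(w_1)\in I$'', ``$R(w_1,w_2')\in I$'') is too strong to be maintained through the induction: a preimage $w_1$ of a given individual may be a Skolem term that is merged with an individual during the chase of $\Xi_\K$, in which case $I$ contains only the normalized atom $\norm{C(w_1)}{I}$, not $C(w_1)$ itself. The paper therefore first proves a relaxed version of D1--D7 in which every conclusion is wrapped in $\norm{\cdot}{I}$ (Lemma~\ref{lemma:aux:dat-embed}) and only afterwards removes the normalization for terms in $\domain{I}$. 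Without this weakening, your group~(iii) case already fails when the witness $f_{R,A}^{A_1}(w_1)$ does not survive pruning, and your group~(ii) case is incomplete: applying an equality rule in $\dat_\K$ rewrites every atom of $J_k$ that mentions the merged individual, so you must re-establish D1--D5 for all of those rewritten atoms (the paper's proof devotes a long case analysis to exactly this), not merely record the new $\rightsquigarrow$-edge for D6.

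Second, discharging the normalization at the end---and proving D6 in particular---cannot be done from the $J$-to-$I$ direction alone. From $\norm{\delta(w_1)}{J}=a$ your secondary induction yields only $\norm{w_1}{I}=\norm{a}{I}$; to conclude $\norm{a}{I}=a$ you must rule out that $a$ is merged with a smaller individual in $I$ but not in $J$, and that requires the converse homomorphism lemma (the paper's Lemma~\ref{lemma:homomorphism}, property H4), which your proposal never invokes. The paper isolates exactly this interplay in Lemma~\ref{lemma:equality}, whose proof uses both directions. Finally, your plan for the second bullet of D4 (tracing each derived $R$-atom back to the originating type-7 application) is more machinery than needed: once the preimage-universal inductive hypothesis is stated up to $\norm{\cdot}{I}$, the transitivity and hierarchy cases close directly by re-instantiating the hypothesis at the intermediate auxiliary individual $o_{P,A}$ via the already-obtained witness $w_2'$.
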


We are now ready to show that $\dat_{\K}$ can be used to check the
satisfiability of $\K$.

\propsat*
\begin{proof}
    By the definition of \elhso semantics,  \K is unsatisfiable if and only if $\Xi_{\K}
    \models \exists x. \bot(x)$. By Lemmas \ref{lemma:homomorphism} and
    \ref{lemma:dat-embed}, for each term $w \in \domain{I}$, we have ${\bot(w)
    \in I}$ if and only if ${\bot(\delta(w)) \in J}$. Consequently, $\K$ is
    unsatisfiable if and only if ${\dat_{\K} \models \exists x. \bot(x)}$.
\end{proof}

\section{Proofs of Lemmas \ref{lemma:homomorphism} and \ref{lemma:dat-embed}}\label{sec:connection}

We prove Lemmas \ref{lemma:homomorphism} and \ref{lemma:dat-embed} in various
stages. To start, we next show that mapping $\delta$ satisfies a slightly
relaxed version of properties {H1}--{H4} from Lemma \ref{lemma:homomorphism}.

\begin{lemma}\label{lemma:aux:homomorphism}
    Mapping $\delta$ satisfies the following four properties for all terms
    $w_1$ and $w_2$ occurring in $I$, each individual $a\in\indnames$, each
    role ${R \in \rolenames}$, and each concept ${C \in \conceptnames \cup
    \setof{\top, \bot}}$.
    \begin{enumerate}[(i)]
        \item \label{aux:hom:cond1} ${C(w_1) \in \lpmodel}$ implies that
        ${\norm{C(\delta(w_1))}{J} \in \datalogmodel}$.

        \item \label{aux:hom:cond2} ${R(w_1, w_2) \in \lpmodel}$ implies that
        ${\norm{R(\delta(w_1), \delta(w_2))}{J} \in \datalogmodel}$.

        \item \label{aux:hom:cond3} $R(w_1, w_2)\in \lpmodel$ and $w$ is of
        the form $f_{P,A}^{A_1}(w_1)$ imply that
        ${\norm{\directedge{R}(\delta(w_1), \delta(w_2))}{J} \in J}$.

        \item \label{aux:hom:cond4} ${w_1 \rightsquigarrow a \in \lpmodel}$ implies
        that ${\norm{\delta(w_1)}{J} = \norm{a}{J}}$.
    \end{enumerate}
\end{lemma}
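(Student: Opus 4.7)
My plan is to proceed by induction on the length $i$ of a fair chase sequence $I_0, I_1, \ldots$ whose limit is the universal interpretation $I$ of $\Xi_\K$, proving all four properties simultaneously. The key structural fact I will exploit is that $J$, being a universal interpretation of $\dat_\K$, is closed under every rule in $\dat_\K$ (no rule is applicable to $J$), and that its own equality consequences are handled by the normalisation operator $\norm{\cdot}{J}$. This is precisely why the lemma is stated with $\norm{\cdot}{J}$ rather than outright membership in $J$: the two chases may normalise equalities at different rates, so a match is only guaranteed modulo $\norm{\cdot}{J}$.

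For the base case, $I_0$ contains only the ABox atoms of $\K$ and the ground atoms of $\Xi_\K$; these also lie in $\dat_\K$ and hence in $J$. Since $\delta$ is the identity on $\indnames$, properties (i), (ii), and (iv) hold immediately, and (iii) is vacuous because no Skolem term has been generated yet. For the inductive step, I would case-split on the rule in $\Xi_\T$ that produces $I_{i+1}$ from $I_i$ according to Table \ref{table:Xi}. The non-equality universal rules (types 1, 3, 4, 5, 6, 8, 11) have syntactically identical translations in $\dat_\T$, so the induction hypothesis applied to the premise instance, combined with closure of $J$ under the same translated rule, directly yields the $J$-normalised conclusion. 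Rules of types 9 and 10 are handled analogously using that the same atoms appear in both targets. For the existential rule of type 7, $A_1(x) \rightarrow \exists z.\,R(x,z)\wedge A(z)$, applied via $\sigma(x) = w$, $I_{i+1}$ contains $R(w,f_{R,A}^{A_1}(w))$ and $A(f_{R,A}^{A_1}(w))$; because $\delta(f_{R,A}^{A_1}(w)) = o_{R,A}$ and the $\dat_\T$-rule also asserts the $\directedge{R}$ atom, properties (i), (ii), and (iii) for the fresh atoms follow from closure of $J$ under ${A_1(x) \rightarrow R(x,o_{R,A}) \wedge \directedge{R}(x,o_{R,A}) \wedge A(o_{R,A})}$.

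The main obstacle is the equality rule of type 2, $A(x) \rightarrow x \approx a$: firing it on $w$ with $w > a$ introduces $w \rightsquigarrow a$ in $I_{i+1}$, merges $w$ into $a$ in surviving atoms, and prunes every atom containing $w$ as a proper subterm. Property (iv) follows because the induction hypothesis gives ${\norm{A(\delta(w))}{J} \in J}$, and the identical rule in $\dat_\K$ then forces ${\norm{\delta(w)}{J} = \norm{a}{J}}$. To preserve (i)--(iii) across the merge, I will argue that for every atom $\alpha$ surviving in $I_{i+1}$ and every occurrence of $w$ that has been replaced by $a$, the $\delta$-image normalised in $J$ coincides with the $J$-normalisation of the pre-merge $\delta$-image. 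There are two subcases: either $w \in \indnames$, so $\delta(w) = w$ and $\delta(a) = a$, and the claim is immediate from (iv); or $w = f_{R,A}^{A_1}(w')$, in which case $\delta(w) = o_{R,A}$, and (iv) together with closure of $J$ under the corresponding equality rule guarantees that $o_{R,A}$ and $a$ normalise to the same element of $\domain{J}$. Pruning is harmless because the lemma only needs the properties for atoms that actually remain. Taking $i \to \infty$ and using that $I = \bigcup_i \bigcap_{j \geq i} I_j$ transfers the properties from each $I_i$ to $I$, completing the proof.
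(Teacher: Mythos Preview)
Your proposal is correct and follows essentially the same route as the paper: induction along the chase sequence $I_0,I_1,\ldots$ for $\Xi_\K$, with a case split on the rule fired at each step and an appeal to closure of $J$ under the matching $\dat_\K$ rule.

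One point you skate over, which the paper treats explicitly, concerns the existential-rule case. The chase does not add $R(w,f_{R,A}^{A_1}(w))$ and $A(f_{R,A}^{A_1}(w))$ literally but rather their $I_i$-normal forms, and if an earlier equality application has already sent $f_{R,A}^{A_1}(w)$ to some individual $b\in\indnames$, the fresh atoms are $R(w,b)$ and $A(b)$. Then $\delta(b)=b\neq o_{R,A}$, so closure of $J$ under the rule $A_1(x)\to R(x,o_{R,A})\wedge\directedge{R}(x,o_{R,A})\wedge A(o_{R,A})$ does not by itself yield the required atoms at $b$; you must also invoke the inductive hypothesis on~(iv) along the $\rightsquigarrow$-chain from $f_{R,A}^{A_1}(w)$ to $b$ to obtain $\norm{o_{R,A}}{J}=\norm{b}{J}$, after which (i) and (ii) follow and (iii) becomes vacuous. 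A similar refinement is needed in the equality case: the term on which the rule fires need not dominate $a$ (both may be individuals), and the merge target is $\norm{a}{I_i}$ rather than $a$ itself; the paper handles this with a symmetric ``either/or'' argument that again relies on~(iv). With these two adjustments your outline coincides with the paper's proof.
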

\begin{proof}

    Let ${I_0, I_1, \ldots}$ be the chase for $\Xi_{\K}$ w.r.t.\ $>$ used to
    construct $I$. We show by induction on this sequence that each $I_n$
    satisfies the properties.

    \smallskip

    \basecase Consider chase instance $I_0$. By the definition, $I_0$ contains
    only ground atoms constructed using the predicates in ${\conceptnames \cup
    \setof{\top, \bot, \ind} \cup \rolenames}$ and the individuals in
    \indnames. Therefore, for each term $w$ occurring in $I_0$, we have
    $\norm{w}{I_0} = w$ and $w \in\indnames$, and so $\delta(w) = w$. Consider
    an arbitrary atom ${\phi \in I_0}$. By the definition of $\dat_{\K}$,
    $\phi$ is an atom in $\dat_{\K}$. Since $J$ satisfies all ground atoms in
    $\dat_{\K}$, we have ${\norm{\phi}{J} \in J}$, so properties
    (\ref{aux:hom:cond1})--(\ref{aux:hom:cond4}) hold.

    \medskip

    \indstep Consider an arbitrary ${n \in \nat}$ and assume that $I_n$
    satisfies properties (\ref{aux:hom:cond1})--(\ref{aux:hom:cond4}). By
    considering each rule in $\Xi_{\K}$, we assume that the rule is applicable
    to $I_n$, and we show that the properties hold for all fresh atoms in the
    resulting instance.

    \smallskip

    (Datalog Rule) Consider a datalog rule ${\varphi(\vec{x},\vec{y})
    \rightarrow \psi(\vec{x})}$ in $\Xi_{\K}$, and assume that a substitution
    $\sigma$ with ${\dom{\sigma} =\vec{x} \cup \vec{y}}$ exists such that
    ${\sigma(\varphi) \subseteq I_n}$. Let $\sigma'$ be the substitution such
    that ${\sigma'(x) = \delta(\sigma(x))}$ for each variable ${x \in \vec{x}
    \cup \vec{y}}$. By the inductive hypothesis, we have
    ${\norm{\sigma'(\psi)}{J} \subseteq J}$. Since ${\varphi(\vec{x}, \vec{y})
    \rightarrow \psi(\vec{x}) \in \dat_\K}$ and no rule is applicable to $J$,
    we have ${\norm{\sigma'(\psi)}{J} \subseteq J}$.

    \smallskip

    (Existential Rule) Consider ${A_1(x) \rightarrow \exists z. R(x, z)\wedge
    A(z)}$ in $\Xi_{\K}$, assume that ${A_1(w_1) \in I_n}$, and let ${w_2 =
    \norm{f_{R,A}^{A_1}(w_1)}{I_n}}$. By the inductive hypothesis,
    ${\norm{A_1(\delta(w_1))}{J} \in J}$ holds. Program $\dat_{\K}$ contains
    ${A_1(x) \rightarrow R(x, o_{R,A}) \wedge \directedge{R}(x, o_{R,A})\wedge
    A(o_{R,A})}$ and no rule is applicable to $J$, so ${\norm{R(\delta(w_1),
    o_{R,A})\wedge \directedge{R}(\delta(w_1), o_{R,A}) \wedge A(o_{R,A})}{J}
    \subseteq J}$ holds. We distinguish two cases.
    \begin{itemize}
        \item $w_2 = f_{R,A}^{A_1}(w_1)$. Then, $\delta(w_2) = o_{R,A}$, and so
        ${\norm{o_{R,A}}{J} = \norm{\delta(w_2)}{J}}$.

        \item $w_2 \not= f_{R,A}^{A_1}(w_1)$. By the form of equality rules in
        $\Xi_{\K}$ and due to ${w_2 = \norm{f_{R,A}^{A_1}(w_1)}{I_n}}$, we have
        $w_2 \in \indnames$. Then, terms ${u_0,\ldots, u_n}$ with ${u_0 =
        f_{R,A}^{A_1}(w_1)}$ and ${u_n = w_2}$ exist in $I_n$ such that
        $u_{i-1}\rightsquigarrow u_i\in I_n$ for each $i\in\interval{1}{n}$. By
        the inductive hypothesis, we have
        ${\norm{\delta(f_{R,A}^{A_1}(w_1))}{J} = \norm{\delta(w_2)}{J}}$,
         and so ${\norm{o_{R,A}}{J} = \norm{\delta(w_2)}{J}}$.
    \end{itemize}
    As stated above, we have ${\norm{R(\delta(w_1), o_{R,A})\wedge
    \directedge{R}(\delta(w_1), o_{R,A}) \wedge A(o_{R,A})}{J} \subseteq J}$,
    so properties (\ref{aux:hom:cond1})--(\ref{aux:hom:cond3}) are satisfied.

    \smallskip

    (Equality Rule) Consider a rule ${A(x) \rightarrow x \approx a}$ in
    $\Xi_{\K}$ and assume that $A(w_1) \in I_n$. As ${w_1 \in \domain{I_n}}$,
    we have $\norm{w_1}{I_n} = w_1$. Then let terms $w$ and $w'$ be such that
    ${\setof{w, w'} = \setof{w_1, \norm{a}{I_n}}}$ and $w > w'$. By the
    inductive hypothesis, we either have $\norm{A(\delta(w))}{J} \in J$ and
    $\norm{\delta(w')}{J} = \norm{a}{J}$, or $\norm{A(\delta(w'))}{J} \in J$
    and $\norm{\delta(w)}{J} = \norm{a}{J}$. In either cases, since ${A(x)
    \rightarrow x \approx a \in \dat_\K}$ and no rule is applicable to $J$, we
    have ${\norm{\delta(w)}{J} = \norm{\delta(w')}{J}}$, and property
    (\ref{aux:hom:cond4}) is satisfied. We next consider the atoms in $I_n$
    that get replaced by the application of this rule.
    \begin{itemize}
        \item $C(w) \in I_n$. By the inductive hypothesis, we have
        $\norm{C(\delta(w))}{J} \in J$; thus ${\norm{C(\delta(w'))}{J}
        \in J}$.

        \item $R(w, w) \in I_n$. By the inductive hypothesis, we
        have $\norm{R(\delta(w), \delta(w))}{J} \in J$; thus
        $\norm{R(\delta(w'), \delta(w'))}{J} \in J$.

        \item $R(w, w_2) \in I_n$. By the inductive hypothesis, we have
        $\norm{R(\delta(w), \delta(w_2))}{J} \in J$; thus
        $\norm{R(\delta(w'), \delta(w_2))}{J} \in J$.

        \item $R(w_2, w) \in I_n$. By the inductive hypothesis, we have
        $\norm{R(\delta(w_2), \delta(w))}{J} \in J$; thus,
        $\norm{R(\delta(w_2), \delta(w'))}{J} \in J$.        \qedhere

    \end{itemize}
\end{proof}

We next show that $J$ satisfies a slightly relaxed version of properties
{D1}--{D7} from Lemma \ref{lemma:dat-embed}.

\begin{lemma}\label{lemma:aux:dat-embed}
    Mapping $\delta$ satisfies the following properties for all terms ${w_1}$
    and ${w_2}$ occurring in $I$, each individual $a\in\indnames$, each role
    ${R \in \rolenames}$, and each concept ${C \in \conceptnames \cup
    \setof{\top, \bot}}$.
    \begin{enumerate}[(a)]
        \item \label{aux:dat-cond1} ${C(\delta(w_1)) \in \datalogmodel}$
        implies that ${\norm{C(w_1)}{I} \in \lpmodel}$.

        \item \label{aux:dat-cond3} ${\SELF_{R}(\delta(w_1)) \in \datalogmodel}$
        implies that ${\norm{R(w_1,w_1)}{I} \in \lpmodel}$.

        \item \label{aux:dat-cond2} ${R(\delta(w_1),\delta(w_2)) \in
        \datalogmodel}$ and ${\delta(w_2) \in \indnames}$ imply that
        ${\norm{R(w_1,w_2)}{I} \in \lpmodel}$.

        \item \label{aux:dat-cond4} ${R(\delta(w_1),\delta(w_2)) \in
        \datalogmodel}$ and $\delta(w_2)$ is of the form $o_{P,A}$
        imply that
        \begin{itemize}
            \item a term $w_1'$ from $I$  exists such that
            ${\norm{R(w_1', w_2)}{I} \in \lpmodel}$, and

            \item a term $w_2'$ from $I$ exists such that $\delta(w_2') =
            o_{P,A}$ and ${\norm{R(w_1, w_2')}{I} \in \lpmodel}$.
        \end{itemize}

        \item \label{aux:dat-cond5} ${\directedge{R}(\delta(w_1),\delta(w_2)) \in
        \datalogmodel}$ and $\delta(w_2)$ is of the form $o_{P,A}$ imply that
        \begin{itemize}
            \item a term ${w_3'}$ from $I$ of the form
            $f_{P,A}^{A_1}(\norm{w_1}{I})$ exists such that ${\norm{P(w_1,
            w_3')}{I} \in I}$, and

            \item $P\subrole R$.
        \end{itemize}
        \item \label{aux:dat-cond6} ${\delta(w_1) \rightsquigarrow  a\in J}$ implies that
        ${\norm{w_1}{I} = \norm{a}{I}}$.

        \item \label{aux:dat-cond7} For each individual $u$ occurring in $J$, a term
        ${w}$ occurring in $I$ exists such that ${\delta(w) = u}$.
    \end{enumerate}
\end{lemma}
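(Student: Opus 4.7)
The plan is to prove all seven properties (\ref{aux:dat-cond1})--(\ref{aux:dat-cond7}) simultaneously by induction on a chase sequence $J_0, J_1, \ldots$ for $\dat_\K$ whose limit is $J$, with $I$ taken as the fixed universal interpretation of $\Xi_\K$. In the base case, $J_0$ consists of the ABox atoms together with the ground atoms contributed by $\close{\K}$; the same atoms occur in $I_0$ and $\delta$ acts as the identity on $\indnames$, so properties~(\ref{aux:dat-cond1})--(\ref{aux:dat-cond7}) hold trivially. In the inductive step I would proceed by case analysis on the rule of $\dat_\K$ that fires in $J_n$.

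For rules arising from axioms of types~$1$, $3$, $4$, $5$, $6$, $8$, and~$11$, and for the auxiliary rules in $\close{\K}$, the rule in $\dat_\K$ is identical to a rule already present in $\Xi_\K$. So if the rule fires in $J_n$ under some substitution, I use the relevant inductive hypothesis---property~(\ref{aux:dat-cond1}) for unary premises, (\ref{aux:dat-cond3}) for $\SELF_R$ premises, and (\ref{aux:dat-cond2}) or~(\ref{aux:dat-cond4}) for binary premises---to lift each premise into $I$, invoke the same rule in $\Xi_\K$, and recover the desired conclusion at stage $n+1$. The second rule associated with $S \ISA R$ (namely $\SELF_S(x) \rightarrow \SELF_R(x)$), the reflexivity rule of type~$9$, and the self rule of type~$10$, are handled analogously, using the fact that the corresponding axioms in $\Xi_\K$ yield $R(w_1,w_1) \in I$ whenever their premise is satisfied in $I$.

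The main obstacle is the existential rule $A_1(x) \rightarrow R(x,o_{R,A}) \wedge \directedge{R}(x,o_{R,A}) \wedge A(o_{R,A})$, since this is the only source of auxiliary individuals $o_{R,A}$ in $J$ and it must simultaneously discharge properties~(\ref{aux:dat-cond4}), (\ref{aux:dat-cond5}), and~(\ref{aux:dat-cond7}). When the rule fires on $A_1(u) \in J_n$, I would invoke the inductive form of~(\ref{aux:dat-cond7}) to pick a term $w_1' \in \domain{I}$ with $\delta(w_1') = u$, and then~(\ref{aux:dat-cond1}) to conclude $A_1(w_1') \in I$; the corresponding existential rule $A_1(x) \rightarrow \exists z.\, R(x,z) \wedge A(z)$ in $\Xi_\K$ must therefore already have been applied, yielding $w_3' = f_{R,A}^{A_1}(w_1')$ in $I$ with $R(w_1',w_3') \in I$, $A(w_3') \in I$, and $\delta(w_3') = o_{R,A}$. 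The single term $w_3'$ witnesses~(\ref{aux:dat-cond7}) for the newly introduced $o_{R,A}$ as well as both clauses of~(\ref{aux:dat-cond4}); property~(\ref{aux:dat-cond5}) then follows from $w_3'$ together with the rule $\directedge{S}(x,y) \rightarrow \directedge{R}(x,y)$ derived from each $S \ISA R$, by unfolding the inductive definition of $\subrole$.

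The remaining case is the equality rule of type~$2$, namely $A(x) \rightarrow x \approx a$. When this fires in $J_n$ on $A(u)$ and adds $u \rightsquigarrow a$, property~(\ref{aux:dat-cond1}) ensures $A(w_1) \in I$ for every $w_1$ with $\delta(w_1) = u$, so the matching equality rule in $\Xi_\K$ is applicable and merges each such $w_1$ with $a$. Because the order $>$ is chosen so that $o_{R,A} > a$ for all $a \in \indnames$, merging is performed in the same direction on both sides, which keeps property~(\ref{aux:dat-cond6}) synchronised with the equality chains built in $I$. The replacement phase of the chase rewrites atoms consistently in $J$ and in $I$, so properties~(\ref{aux:dat-cond1})--(\ref{aux:dat-cond5}) are preserved; the only delicate point is that the existential witnesses demanded by~(\ref{aux:dat-cond4}) and~(\ref{aux:dat-cond5}) may themselves be rewritten, but this merely substitutes one valid witness with another, so their existence is maintained.
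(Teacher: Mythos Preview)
Your overall plan---induction on the chase sequence $J_0,J_1,\ldots$ for $\dat_\K$ with $I$ fixed, followed by case analysis on the rule of $\dat_\K$ that fires---is exactly the paper's approach, and your treatment of the base case, the ordinary datalog rules, and the equality rule is essentially correct.

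There is one genuine slip in the existential-rule case. You claim that your single term $w_3'=f_{R,A}^{A_1}(w_1')$ ``witnesses\dots\ both clauses of~(\ref{aux:dat-cond4})''. It does not. Property~(\ref{aux:dat-cond4}) is quantified over \emph{all} $w_1,w_2$ occurring in $I$ with $\delta(w_1)=u$ and $\delta(w_2)=o_{R,A}$. For the second bullet your idea works once you quantify correctly: for \emph{each} such $w_1$, property~(\ref{aux:dat-cond1}) gives $A_1(\norm{w_1}{I})\in I$, and the existential rule in $\Xi_\K$ supplies $w_2'=f_{R,A}^{A_1}(\norm{w_1}{I})$. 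But the first bullet asks for a predecessor of an \emph{arbitrary} $w_2$ with $\delta(w_2)=o_{R,A}$, and your fixed $w_3'$ is only one such $w_2$. The paper closes this by observing that any term $w_2$ in $I$ with $\delta(w_2)=o_{R,A}$ is necessarily of the form $f_{R,A}^{A_2}(w')$ for some $A_2$, hence was created by an existential rule and has $R(w',w_2)\in I$; you need this structural observation, not the witness $w_3'$.

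A smaller point: lumping transitivity (type~8) in with the ``identical rule in $\Xi_\K$'' cases hides real work. When you lift the two premises $R(\delta(w_1),\delta(w_2))$ and $R(\delta(w_2),\delta(w_4))$ via~(\ref{aux:dat-cond4}), the witnesses you obtain need not share a middle term in $I$; you must first use the second bullet of~(\ref{aux:dat-cond4}) on the left premise to get $w_2'$ with $\delta(w_2')=\delta(w_2)$, then re-apply the inductive hypothesis to $R(\delta(w_2'),\delta(w_4))$ before you can invoke transitivity in $I$. The paper spells out the four sub-cases according to whether $\delta(w_2)$ and $\delta(w_4)$ lie in $\indnames$ or are auxiliary, and you should expect to do the same.
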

\begin{proof}
    Let ${J_0, J_1, \ldots}$ be the chase for $\dat_{\K}$ used to
    construct $J$. We prove by induction on this sequence that each $J_n$ satisfies the properties.

    \smallskip

    \basecase Consider $J_0$. By the definition, $J_0$ does not contain
    assertions over $\rightsquigarrow$. Moreover, $\dat_{\K}$ and $\Xi_{\K}$
    contain the same ground atoms, all of which are constructed using the
    individuals from \indnames and the predicates in ${\conceptnames \cup
    \setof{\top, \bot, \ind} \cup \rolenames}$. Finally, $I$ satisfies all the
    ground atoms in $\Xi_{\K}$, so properties (\ref{aux:dat-cond1})--(\ref{aux:dat-cond7})
    are satisfied.

    \medskip

    \indstep Consider an arbitrary ${n \in \nat}$ and assume that $J_n$
    satisfies properties (\ref{aux:dat-cond1})--(\ref{aux:dat-cond7}). By
    considering each rule in $\dat_{\K}$, we assume that the rule is applicable
    to $J_n$, and we show that the properties hold for all fresh atoms in the
    resulting instance.

    \smallskip

    Consider a rule of the form $\SELF_{P}(x) \rightarrow \SELF_{R}(x)$ in
    $\dat_{\K}$ and assume that ${\SELF_{P}(\delta(w_1)) \in J_n}$. By the
    inductive hypothesis, we have ${\norm{\SELF_{P}(w_1)\wedge P(w_1, w_1)}{I}
    \subseteq I}$. By the definition of program $\dat_{\K}$ and $\Xi_{\K}$, we
    have ${P \ISA R \in \T}$, and therefore rules ${\SELF_{P}(x) \rightarrow
    \SELF_{R}(x)}$ and ${P(x,y) \rightarrow R(x,y)}$ are contained in
    $\Xi_{\K}$. Since no rule is applicable to $I$, we have
    ${\norm{\SELF_{R}(w_1) \wedge R(w_1, w_1)}{I} \subseteq I}$.

    \smallskip

    Consider a rule of the form $\varphi(x) \rightarrow B(x)$ in $\dat_{\K}$
    with $\varphi$ a conjunction of unary atoms over variable $x$ and $B$ a
    concept. Let $\sigma$ and $\sigma'$ be substitutions such that ${\sigma(x)
    = \delta(w_1)}$ and $\sigma'(x) = w_1$. Assume that ${\sigma(\varphi)
    \subseteq J_n}$. By the inductive hypothesis, we have
    ${\norm{\sigma'(\varphi)}{I} \subseteq I}$. Since no rule is applicable to
    $I$, we have ${\norm{B(w_1)}{I} \in I}$.

    \smallskip

    Consider a rule of the form $A(x) \rightarrow x \approx a$ and assume that
    ${A(\delta(w_1)) \in J_n}$. As $\delta(w_1)\in\domain{J_n}$, we have
    $\norm{\delta(w_1)}{J_n} = \delta(w_1)$. Let $w$ and $w'$ be terms such
    that ${\setof{\delta(w), \delta(w')} = \setof{\delta(w_1), \norm{a}{J_n}}}$
    and $\delta(w) > \delta(w')$. By the inductive hypothesis, we either have
    $\norm{A(w)}{I} \in I$ and $\norm{w'}{I} = \norm{a}{I}$, or
    $\norm{A(w')}{I} \in I$ and $\norm{w}{I} = \norm{a}{I}$. In either cases,
    since no rule is applicable to $I$, we have $\norm{w}{I} = \norm{w'}{I}$.
    We next consider the atoms in $J_n$ that get replaced by the application of
    this rule.
    \begin{itemize}
        \item ${\SELF_{R}(\delta(w)) }\in J_n$. By the inductive hypothesis, we
        have ${\norm{\SELF_{R}(w) \wedge R(w, w)}{I} \subseteq I}$; so
        ${\norm{\SELF_{R}(w') \wedge R(w', w')}{I} \subseteq I}$.

        \item $C(\delta(w)) \in J_n$. By the inductive hypothesis, we have
        ${\norm{C(w)}{I} \in I}$, and so ${\norm{C(w')}{I} \in I}$.

        \item $R(\delta(w), \delta(w)) \in J_n$.  We distinguish two cases.
        \begin{itemize}
            \item ${\delta(w) \in \indnames}$. By the inductive
            hypothesis, we have ${\norm{R(w, w)}{I} \in I}$, and so
            ${\norm{R(w', w')}{I} \in I}$.

            \item ${\delta(w)}$ is of the form $o_{P,A}$. By the inductive
            hypothesis, a term $w_2'$ exists such that $\delta(w_2') = o_{P,A}$
            and ${\norm{R(w, w_2')}{I} \in I}$. Since $\delta(w_2') =
            \delta(w)$, we have ${\norm{w_2'}{I} = \norm{w'}{I}}$,
            and so ${\norm{R(w', w')}{I} \in I}$.
        \end{itemize}

        \item $R(\delta(w_2), \delta(w)) \in J_n$.  We distinguish two cases.
        \begin{itemize}
            \item ${\delta(w) \in \indnames}$. By the inductive
            hypothesis, we have ${\norm{R(w_2, w)}{I} \in I}$, and so
            ${\norm{R(w_2, w')}{I} \in I}$.

            \item ${\delta(w)}$ is of the form $o_{P,A}$. By the inductive
            hypothesis, a term $w_2'$ exists such that $\delta(w_2') = o_{P,A}$
            and ${\norm{R(w_2, w_2')}{I} \in I}$. Since $\delta(w_2') =
            \delta(w)$, we have ${\norm{w_2'}{I} = \norm{w'}{I}}$, and so
            ${\norm{R(w_2, w')}{I} \in I}$.
        \end{itemize}

        \item $R(\delta(w), \delta(w_2)) \in J_n$. We distinguish two cases.
        \begin{itemize}
            \item ${\delta(w_2) \in \indnames}$. By the inductive hypothesis,
            ${\norm{R(w, w_2)}{I} \in I}$, thus ${\norm{R(w', w_2)}{I} \in I}$.

            \item ${\delta(w_2)}$ is of the form $o_{P,A}$. By the inductive
            hypothesis, a term ${w_2'}$ with ${\delta(w_2') = o_{P,A}}$ exists
            such that ${\norm{R(w, w_2')}{I} \in I}$, and so ${\norm{R(w',
            w_2')}{I} \in I}$.
        \end{itemize}

        \item $\directedge{R}(\delta(w), \delta(w_2)) \in J_n$ and
        $\delta(w_2)$ is of the form $o_{P,A}$. Property (\ref{aux:dat-cond5})
        is satisfied by the inductive hypothesis.

    \end{itemize}

    Consider a rule of the form $R(x, y) \wedge A_1(y) \rightarrow A(x)$ in
    $\dat_{\K}$ and assume that ${\setof{R(\delta(w_1),\delta(w_2)),
    A_1(\delta(w_2))} \subseteq J_n}$. We distinguish two cases.
    \begin{itemize}
        \item $\delta(w_2) \in \indnames$. By the inductive
        hypothesis, we have ${\norm{R(w_1, w_2)}{I}\subseteq I}$.

        \item $\delta(w_2) =o_{P,A}$. By the inductive
        hypothesis, a term $w_2'$ exists such that ${\delta(w_2') = o_{P,A}}$
        and ${\norm{R(w_1, w_2')\wedge A(w_2')}{I} \in I}$.
    \end{itemize}
    In either cases, since no rule is applicable to $I$, we have
    ${\norm{A(w_1)}{I} \in I}$.

    \smallskip

    Consider a rule $T(x, y) \rightarrow R(x, y)$ in $\dat_{\K}$ and assume
    that ${T(\delta(w_1), \delta(w_2)) \in J_n}$. We distinguish two
    cases.
    \begin{itemize}
        \item $\delta(w_2) \in \indnames$. By the inductive hypothesis,
        ${\norm{T(w_1, w_2)}{I}\in I}$. As no rule is applicable to $I$,
        we have ${\norm{R(w_1, w_2)}{I} \in I}$.

        \item $\delta(w_2)$ is of the form $o_{P,A}$. By the inductive
        hypothesis, we have that terms $w_1'$ and $w_2'$ exist such that
        ${\delta(w_2') = o_{P,A}}$ and ${\norm{T(w_1, w_2') \wedge T(w_1',
        w_2)}{I}\subseteq I}$. Since no rule is applicable to $I$, we have
        ${\norm{R(w_1, w_2')\wedge R(w_1', w_2)}{I} \subseteq I}$.
    \end{itemize}

    Consider a rule $\directedge{T}(x, y) \rightarrow \directedge{R}(x, y)$ in
    $\dat_{\K}$ and assume that ${\directedge{T}(\delta(w_1), \delta(w_2)) \in
    J_n}$ and $\delta(w_2)$ is of the form $o_{P,A}$. By the inductive
    hypothesis, a term ${w_3}$ of the form $f_{P,A}^{A_1}(\norm{w_1}{I})$
    exists such that ${\norm{P(w_1, w_3')}{I} \in I}$ and ${P \subrole T}$. By
    the definition of $\dat_{\K}$, we have $T\ISA R \in\T$. Consequently, we
    have $P\subrole R \in \T$ and property (\ref{aux:dat-cond5}) holds.

    \smallskip

    Consider a rule $R(x, y) \rightarrow A(y)$ in $\dat_{\K}$, and assume that
    ${R(\delta(w_1), \delta(w_2)) \in J_n}$.
    \begin{itemize}
        \item $\delta(w_2) \in \indnames$. By the inductive
        hypothesis, we have ${\norm{R(w_1, w_2)}{I}\subseteq I}$.

        \item $\delta(w_2)$ is of the form $o_{P,A}$. By the inductive
        hypothesis, a term $w_1'$ exists such that ${\norm{R(w_1', w_2)}{I} \in
        I}$.
    \end{itemize}
    In either cases, since no rule is applicable to $I$, we have
    ${\norm{A(w_2)}{I} \in I}$.

    \smallskip

    Consider a rule $A_1(x) \rightarrow R(x, o_{R,A}) \wedge \directedge{R}(x,
    o_{R,A})\wedge A(o_{R,A})$ in $\dat_{\K}$, and assume that ${A(\delta(w_1))
    \in J_n}$. Let $w_2$ be a term such that ${\delta(w_2) =
    \norm{o_{R,A}}{J_n}}$. By the inductive hypothesis, we have
    ${\norm{A(w_1)}{I}} \in I$. By the definition of $\dat_{\K}$, rule base
    $\Xi_{\K}$ contains ${A_1(x) \rightarrow \exists z. R(x, z) \wedge A(z)}$.
    Let $w_3' =f_{R,A}^{A_1}(\norm{w_1}{I})$. Since no rule is applicable to
    $I$, we have ${\norm{R(w_1,w_3')\wedge A(w_3')}{I} \subseteq I}$. We
    distinguish two cases.
    \begin{itemize}
        \item $\delta(w_2) \neq o_{R,A}$. By the form of equality rules
        occurring in $\dat_\K$, we have $\delta(w_2)\in\indnames$. Then
        individuals $u_0,\ldots, u_n$ with ${u_0 = o_{R,A}}$ and ${u_n =
        \delta(w_2)}$ exist such that ${u_{i-1}\rightsquigarrow u_i \in J_n}$
        for each ${i \in \interval{1}{n}}$. Moreover, given that ${\delta(w_3')
        = o_{R,A}}$, by the inductive hypothesis, we have ${\norm{w_3'}{I} =
        \norm{w_2}{I}}$. Hence, we have ${\norm{R(w_1, w_2)\wedge A(w_2)}{I}
        \subseteq I}$. By the inductive hypothesis, property
        (\ref{aux:dat-cond7}) is satisfied.

        \item $\delta(w_2) = o_{R,A}$. Then $w_2$ is of the form
        ${f_{R,A}^{A_2}(w')}$. Because such term can only be introduced in $I$
        by the application of a rule of the form $A_2(x) \rightarrow \exists z.
        R(x,z)\wedge A(z)$, a term $w_1'$ must exist such that ${\norm{R(w_1',
        w_2)\wedge A(w_2)}{I} \subseteq I}$. As stated above, we also have
        ${\norm{R(w_1, w_3')}{I} \in I}$. By the reflexivity of $\subrole$, we
        also have $R\subrole R$, so properties (\ref{aux:dat-cond4}) and
        (\ref{aux:dat-cond5}) are satisfied. As $\delta(w_3') = o_{R,A}$,
        property (\ref{aux:dat-cond7}) is also satisfied.
    \end{itemize}

    Consider a rule $R(x, y) \wedge R(y, z) \rightarrow R(x,z)$, and assume
    ${\setof{R(\delta(w_1), \delta(w_2)), R(\delta(w_3), \delta(w_4))}\subseteq
    J_n}$ and ${\delta(w_2) = \delta(w_3)}$. It follows that ${R(\delta(w_2),
    \delta(w_4)) \in J_n}$. We distinguish four cases.
    \begin{itemize}
        \item $\delta(w_2)\in \indnames$ and $\delta(w_4)\in \indnames$. By the
        inductive hypothesis, we have ${\norm{R(w_1, w_2)\wedge R(w_2, w_4)}{I}
        \in I}$. Since no rule is applicable to $I$, we have ${\norm{R(w_1,
        w_4)}{I} \in I}$.

        \item $\delta(w_2)$ is of the form $o_{P,A}$ and $\delta(w_4)\in
        \indnames$. By the inductive hypothesis, a term $w_2'$ exists such that
        $\delta(w_2') = o_{P,A}$ and $\norm{R(w_1, w_2')}{I} \in I$. Due to
        $\delta(w_2') = \delta(w_2)$, we have ${\norm{R(\delta(w_2'),
        \delta(w_4))}{J_n} \in J_n}$. By the inductive hypothesis, we have
        ${\norm{R(w_2', w_4)}{I} \in I}$. Since no rule is applicable to $I$,
        we have ${\norm{R(w_1, w_4)}{I} \in I}$.

        \item $\delta(w_2)\in \indnames$ and $\delta(w_4)$ is of the form
        $o_{R,B}$. By the inductive hypothesis, we have ${\norm{R(w_1, w_2)}{I}
        \in I}$ and terms $w_2'$ and $w_4'$ exist such that $\delta(w_4') =
        o_{R,B}$ and ${\norm{R(w_2', w_4) \wedge R(w_2, w_4')}{I} \subseteq
        I}$. As no rule is applicable to $I$, we have ${\norm{R(w_1, w_4')}{I}
        \in I}$.

        \item $\delta(w_2)$ is of the form $o_{P,A}$ and $\delta(w_4)$ is of
        the form $o_{R,B}$. By the inductive hypothesis, a term $w_2'$ exists
        such that $\delta(w_2') = o_{P,A}$ and $\norm{R(w_1, w_2')}{I} \in I$.
        Due to $\delta(w_2') = \delta(w_2)$, we have ${\norm{R(\delta(w_2'),
        \delta(w_4))}{J_n} \in J_n}$. By the inductive hypothesis, terms $u_2'$
        and $w_4'$ exist such that $\delta(w_4') = o_{R,B}$ and ${\norm{R(u_2',
        w_4) \wedge R(w_2', w_4')}{I} \subseteq I}$. As no rule is applicable
        to $I$, we have ${\norm{R(w_1, w_4')}{I} \in I}$.
    \end{itemize}

    Consider a rule $A(x) \rightarrow R(x, x) \wedge \SELF_{R}(x)$, assume
    ${A(\delta(w_1)) \in J_n}$, and let $w_2$ be a term such that ${\delta(w_2)
    = \delta(w_1)}$. By the inductive hypothesis, we have $\norm{A(w_1) \wedge
    A(w_2)}{I} \subseteq I$. Since no rule is applicable to $I$, we have
    ${\norm{R(w_1, w_1)\wedge \SELF_R(w_1)}{I} \subseteq I}$ and ${\norm{R(w_2,
    w_2)\wedge \SELF_R(w_2)}{I} \subseteq I}$, so property
    (\ref{aux:dat-cond1}) holds. We distinguish two cases.
    \begin{itemize}
        \item $\delta(w_2) \in \indnames$. Then individuals $u_0,\ldots, u_n$
        with $u_0 = \delta(w_1)$ and $u_n =\delta(w_2)$ exist such
        that ${u_{i-1}\rightsquigarrow u_i \in J_n}$ for each ${i \in
        \interval{1}{n}}$. By the inductive hypothesis, we have, $\norm{w_1}{I}
        = \norm{w_2}{I}$; thus ${\norm{R(w_1, w_2)}{I} \in I}$.

        \item $\delta(w_2)$ is of the form $o_{P,A}$. Thus, $\delta(w_1) =
        o_{P,A}$. As stated above, we have ${\norm{R(w_1, w_1)\wedge R(w_2,
        w_2)}{I} \subseteq I}$.
    \end{itemize}

    Consider a rule $R(x,x) \wedge \ind(x) \rightarrow \SELF_R(x)$. Assume that
    ${\setof{R(\delta(w_1), \delta(w_2)), \ind(\delta(w_2))}\subseteq J_n}$ and
    ${\delta(w_2) = \delta(w_1)}$. By the definition of \close{\K} and since no
    rule in $\dat_\T$ derives atoms over $\ind$, we have that $\delta(w_2) \in
    \indnames$. By the inductive hypothesis, we then have ${\norm{R(w_1,
    w_2)\wedge \ind(w_2)}{I} \subseteq I}$. Since $\delta$ is the identity on
    \indnames and $\delta(w_1) = \delta(w_2)$, we have $w_1 = w_2$, and so
    ${\norm{R(w_1, w_1) \wedge R(w_2, w_2)}{I} \subseteq I}$. As no rule is
    applicable to $I$, we have ${\norm{\SELF_R(w_1) \wedge \SELF_R(w_2)}{I}
    \subseteq I}$.
\end{proof}

Lemmas \ref{lemma:homomorphism} and \ref{lemma:dat-embed} follow immediately
from Lemmas \ref{lemma:aux:homomorphism} and \ref{lemma:aux:dat-embed}, and the
following result.
\begin{lemma}\label{lemma:equality}
    For each term $w$ occurring in $I$ and each individual $a\in\indnames$, the following two properties hold.
    \begin{enumerate}[E1.]
        \item  $\norm{w}{I} = a$ if and only if $\norm{\delta(w)}{J} = a$.
        \item  $\norm{w}{I} = w$ if and only if $\norm{\delta(w)}{J} = \delta(w)$.
    \end{enumerate}
\end{lemma}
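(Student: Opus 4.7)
The plan is to combine property (iv) of Lemma \ref{lemma:aux:homomorphism} with property (f) of Lemma \ref{lemma:aux:dat-embed}, exploiting two structural observations. First, all equality rules in $\Xi_\K$ and $\dat_\K$ come from axioms of type 2 and hence have the form ${A(x) \to x \approx a}$ with $a \in \indnames$, so every $\rightsquigarrow$-arrow in both $I$ and $J$ has its target in $\indnames$. Second, the ordering $>$ satisfies ${o_{R,A} > a}$ and ${f_{R,A}^{A_1}(u) > a}$ for every individual $a \in \indnames$, so any equivalence class containing an individual has its normal form in $\indnames$. Together with property \ref{aux:dat-cond7} (which forces the preimage under $\delta$ of any individual $u \in \indnames$ occurring in $J$ to be $u$ itself), these observations justify iterating the single-step properties (iv) and (f) along $\rightsquigarrow$-chains.

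For E1, consider first the forward direction. If $\norm{w}{I} = a$, then either $w = a$ or there is a chain ${w \rightsquigarrow u_1 \rightsquigarrow \cdots \rightsquigarrow u_n = a}$ in $I$ with each ${u_i \in \indnames}$. Applying property (iv) to each step gives $\norm{\delta(w)}{J} = \norm{a}{J}$. It remains to show $\norm{a}{J} = a$. Suppose instead $\norm{a}{J} = b$ with $b \ne a$; then $b \in \indnames$ and $b < a$. Iterating property (f) along the $J$-chain from $a$ to $b$ yields $\norm{a}{I} = \norm{b}{I}$, and since $\norm{a}{I} = a$ by idempotence, we get $\norm{b}{I} = a$, forcing $a \le b$ and contradicting $b < a$. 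For the backward direction, if $\norm{\delta(w)}{J} = a$, iterating property (f) gives $\norm{w}{I} = \norm{a}{I}$; let $c = \norm{a}{I} \in \indnames$. Applying the already-proved forward direction with $c$ in place of $a$ yields $\norm{\delta(w)}{J} = c$, so $c = a$ and $\norm{w}{I} = a$.

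For E2, I split on the form of $w$. If $w \in \indnames$ then $\delta(w) = w$ and the claim reduces to E1 by taking $a = w$. If instead ${w = f_{R,A}^{A_1}(v)}$ then $\delta(w) = o_{R,A}$, and I argue by contradiction in each direction. For the forward direction, if $\norm{w}{I} = w$ but ${\norm{o_{R,A}}{J} = b \ne o_{R,A}}$ then $b \in \indnames$; iterating property (f) gives $w = \norm{w}{I} = \norm{b}{I}$, so $w \le b$, contradicting the ordering since $b \in \indnames$ and $w$ is a function term. For the backward direction, if $\norm{\delta(w)}{J} = o_{R,A}$ but $\norm{w}{I} = b \ne w$ then $b \in \indnames$; iterating property (iv) along the $I$-chain from $w$ to $b$ gives $o_{R,A} = \norm{o_{R,A}}{J} = \norm{b}{J}$, hence $o_{R,A} \le b$, again contradicting the ordering.

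The main obstacle is the iteration step itself, since properties (iv) and (f) are stated only for single $\rightsquigarrow$-assertions; the bookkeeping above---observing that all intermediate targets are individuals, so $\delta$ acts as the identity on them and they occur in both $I$ and $J$---is what legitimately chains the invocations and lets each ordering-based contradiction close.
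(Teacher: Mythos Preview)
Your proof is correct and, for E1, essentially coincides with the paper's: both chain the single-step properties (iv) and (f) along $\rightsquigarrow$-paths, using that all targets lie in $\indnames$ so $\delta$ acts as the identity on the intermediate terms. Your backward direction of E1 is in fact slightly slicker than the paper's---you reuse the forward direction, whereas the paper repeats the symmetric contradiction argument.

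For E2 the two proofs diverge. The paper argues purely by contraposition and reduces both directions directly to E1: if $\norm{\delta(w)}{J} \neq \delta(w)$ then (since all $\rightsquigarrow$-targets are in $\indnames$) $\norm{\delta(w)}{J} = b$ for some $b \in \indnames$ with $\delta(w) \neq b$, hence $w \neq b$ by the definition of $\delta$, and E1 gives $\norm{w}{I} = b \neq w$; the other direction is symmetric. This avoids your case split on the shape of $w$ and the explicit ordering comparisons. Your route works too, but is a bit more laborious.

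One small correction: your appeal to property (\ref{aux:dat-cond7}) is misplaced. That property only asserts that \emph{some} preimage of each individual in $J$ exists; it does not say the preimage of $u \in \indnames$ is $u$ itself. What you actually need (and use) for the chaining is simply that $\delta$ is the identity on $\indnames$ by definition, together with the observation that each intermediate $u_i$ occurs in $I$ (which follows because it appears in a $\rightsquigarrow$-assertion in $I$).
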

\begin{proof}
    By the definition of $\Xi_{\K}$ and $\dat_{\K}$, each equality rule
    occurring in these rule bases is of the form $A(x) \rightarrow x\approx a$
    with ${a \in \indnames}$. Consequently, for all terms $u$ and $u'$ such
    that $u\rightsquigarrow u' \in I \cup J$, we have $u' \in \indnames$. Let
    $w$ be an arbitrary term occurring in $I$, and let $a\in\indnames$ be an
    arbitrary individual.

    We first prove E1. $(\Rightarrow)\;$ Assume that $\norm{w}{I} = a$. Then
    terms $w_0,\ldots, w_n$ with $w_0 = w$ and $w_n =a$ exist such that for
    each $i \in \interval{1}{n}$ we have $w_i\in\indnames$ and
    $w_{i-1}\rightsquigarrow w_i\in I$. By Lemma \ref{lemma:aux:homomorphism},
    we have $\norm{\delta(w_0)}{J} = \norm{\delta(w_n)}{J}$; that is,
    $\norm{\delta(w)}{J} = \norm{a}{J}$. We show that $\norm{a}{J} = a$. Assume
    the opposite; hence, an individual $b$ exists such that $a \rightsquigarrow
    b \in J$ and $a >b$. By Lemma \ref{lemma:aux:dat-embed}, we have
    $\norm{a}{I} = \norm{b}{I}$. Since $a> b$, we have $\norm{w}{I} = b$, which
    is a contradiction. $(\Leftarrow)\;$ Assume that $\norm{\delta(w)}{I} = a$.
    Then individuals $u_0,\ldots, u_n$ with $u_0 = \delta(w)$ and $u_n =a$
    exist such that for each $i \in \interval{1}{n}$ we have $u_i\in\indnames$
    and $u_{i-1}\rightsquigarrow u_i\in I$. By Lemma \ref{lemma:aux:dat-embed},
    we have $\norm{w}{I} = \norm{a}{I}$. We show that $\norm{a}{I} = a$. Assume
    the opposite; hence, an individual $b$ exists such that $a \rightsquigarrow
    b \in I$ and $a >b$. By Lemma \ref{lemma:aux:homomorphism}, we have
    $\norm{a}{J} = \norm{b}{J}$. Since $a> b$, we have $\norm{\delta(w)}{I} =
    b$, which is a contradiction.

    \smallskip

    Next, we prove property E2 by contraposition. $(\Rightarrow)\;$ Assume that
    $\norm{\delta(w)}{J} \neq \delta(w)$; hence, an individual $b\in\indnames$
    exists such that $\delta(w) \neq b$ and $\norm{\delta(w)}{J} =b$. By the
    definition of $\delta$, we have $w \neq b$. By property E1, we have
    $\norm{w}{I} = b$, as required. $(\Leftarrow)\;$ Assume that $\norm{w}{I}
    \neq w$; hence, an individual $b\in\indnames$ exists such that $w \neq b$
    and $\norm{w}{I} =b$. By the definition of $\delta$, we have $\delta(w)
    \neq b$. By property E1, we have $\norm{\delta(w)}{I} = b$, as required.
\end{proof}

\section{Proof of Theorem \ref{th:correctness}}

Let KB $\K =\tuple{\T,\A}$ be a satisfiable \elhso KB, let $\Xi_{\K}$, and
$\dat_{\K}$ be the rule base and the datalog program associated with $\K$,
respectively; moreover, let $I$ and $J$ be universal interpretations of
$\Xi_{\K}$ and $\dat_{\K}$, respectively. To prove Theorem
\ref{th:correctness}, we first show that our function is sound, after which we
show that it is also complete.

\subsection{Soundness}

Let $q' = \exists \vec{y}.\, \psi(\vec{x}, \vec{y})$ be a CQ, let $\tau'$ be a
candidate answer to $q'$ over $\dat_\K$, and let $\pi' =
\sproj{\tau'}{\vec{x}}$. Assume that the two following conditions hold:
\begin{enumerate}
    \item for each $x\in\vec{x}$, we have $\tau'(x)\in\indnames$, and

    \item a nondeterministic computation exists such that function
    ${\sound{q}{\dat_{\K}}{\tau}}$ returns $\true$.
\end{enumerate}
By the definition of candidate answer, we have ${\dom{\tau'} = \vars{q'}}$,
each element of $\rng{\tau'}$ is an individual occurring in $\dat_\K$, and
${\dat_\K \models \tau'(q')}$. Since $\sproj{\tau'}{\vec{x}} \subseteq
\indnames$, we have that ${\pi'(x) \in \indnames}$ for each $x\in\vec{x}$. In
the rest of this proof we show that ${\Xi_\K \models \pi'(q')}$.

\smallskip

Let CQ $q$ and substitution $\tau$ be as specified in Definition
\ref{def:dat-order}; and let relation $\sim$, CQ $q_{\sim}$ and the connection graph
${\mathsf{cg} = \tuple{V, E_s, E_t}}$ be as determined by $\mathsf{isSound}$.
By the construction of $q$ and $\tau$, we have $\dat_{\K} \models \tau(q)$ and
$\tau(q) \subseteq J$. We next construct a substitution $\pi$ with $\dom{\pi} =
\vars{q}$ such that $\pi(q) \subseteq I$ and the following property holds.
\begin{enumerate}[(1)]
  \item \label{eq:compatible} For each term $t\in\terms{q}$, we have $\delta(\pi(t)) = \tau(t)$.
\end{enumerate}
Later, we will show that property \eqref{eq:compatible} and $\pi(q)\subseteq I$
imply that $\Xi_\K \models \pi'(q')$, thus proving the soundness claim.

To construct substitution $\pi$, we proceed in two steps: we first show how to
construct $\pi$ in case our algorithm returns \true in step $2$; after which we
show how to construct $\pi$ in case our algorithm returns \true in step $18$.

\subsubsection{Case 1: $\mathsf{isSound}$  returns \true in step $2$ }

Assume that ${\mathsf{isSound}(q, \dat_{\K}, \tau)}$ returns \true in step $2$.
By condition \ref{spur:cond2} in Definition \ref{def:dsound}, directed graph $\tuple{V,
E_s}$ is acyclic; we next show that $\tuple{V,E_s}$ is a forest, after which we will
show how to construct substitution $\pi$ by structural induction on this forest.
\begin{lemma}
    Directed graph $\tuple{V, E_s}$ is a forest.
\end{lemma}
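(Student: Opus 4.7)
The plan is to combine the acyclicity guarantee from Definition~\ref{def:dsound} (condition~\ref{spur:cond2}) with a separate argument that every vertex in $\tuple{V, E_s}$ has in-degree at most one; together these give the forest property. Acyclicity is free, since $\mathsf{dSound}$ has already returned \true, so the real work lies in the in-degree bound.

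To establish in-degree at most one, I would argue by contradiction: assume there exist two distinct vertices $v_1', v_2' \in V$ and a vertex $v \in V$ such that $\tuple{v_1', v}, \tuple{v_2', v} \in E_s$. By the definition of $E_s$, there are roles $R_1, R_2$ such that $R_1(v_1',v)$ and $R_2(v_2',v)$ are aux-simple atoms of $q_\sim$. By construction of $q_\sim$, these atoms come from atoms $R_1(s_1,t_1)$ and $R_2(s_2,t_2)$ of $q$ whose $\sim$-class representatives are, respectively, $v_1', v$ and $v_2', v$. In particular, $t_1 \sim v \sim t_2$, hence $t_1 \sim t_2$.

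Before applying the fork rule, I have to verify that $R_1(s_1,t_1)$ and $R_2(s_2,t_2)$ are themselves aux-simple in $q$. Here the key checks are: (i) $s_i \neq t_i$, which follows because $s_i$ and $t_i$ have distinct representatives $v_i' \neq v$; (ii) $\tau(t_i) \in \aux{\dat_\K}$, inherited from aux-simpleness of $R_i(v_i',v)$ in $q_\sim$ via $\tau(t_i) = \tau(v)$ (using condition~\ref{spur:cond1} of Definition~\ref{def:dsound}); (iii) simplicity of $R_i$; and (iv) the $\SELF_{R_i}$-condition, again transferring from $q_\sim$ via the equalities on $\tau$. This is the delicate part of the argument: it relies essentially on condition~\ref{spur:cond1} of $\mathsf{dSound}$ to transport aux-simpleness between $q$ and $q_\sim$.

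Once aux-simpleness in $q$ is established, the fork rule applied to $t_1 \sim t_2$ and to the aux-simple atoms $R_1(s_1,t_1)$, $R_2(s_2,t_2)$ yields $s_1 \sim s_2$. Since $v_1'$ and $v_2'$ are the unique representatives of the $\sim$-classes of $s_1$ and $s_2$ respectively, we conclude $v_1' = v_2'$, contradicting our assumption. Hence every vertex of $\tuple{V,E_s}$ has at most one predecessor, and, together with acyclicity, this yields the forest property. I expect the main obstacle to be just the bookkeeping around condition~\ref{spur:cond1} when moving between $q$ and $q_\sim$; the structural argument itself is a short application of the fork rule.
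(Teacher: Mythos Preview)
Your proposal is correct and follows essentially the same approach as the paper: acyclicity comes for free from condition~\ref{spur:cond2} of Definition~\ref{def:dsound}, and in-degree at most one is obtained by contradiction via the fork rule. The paper's proof is terser---it applies the fork rule directly to the aux-simple atoms $R(v_1,v)$ and $P(v_2,v)$ in $q_\sim$ and immediately concludes $v_1 \sim v_2$, hence $v_1 = v_2$---whereas your careful pull-back to atoms of $q$ and verification of aux-simpleness there (using condition~\ref{spur:cond1}) makes explicit exactly the bookkeeping the paper leaves implicit.
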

\begin{proof}
    Since directed graph $\tuple{V, E_s}$ is acyclic, we are left to show that
    for each $v\in V$, there exists at most one vertex $v'$ such that
    $\tuple{v',v}\in E_s$. Assume the opposite; hence, vertices $v_1$, $v_2$,
    and $v$ exist in $V$ such that $v_1 \neq v_2$ and $\setof{\tuple{v_1, v},
    \tuple{v_2, v}}\subseteq E_s$. Then, roles $R$ and $P$ exist such that
    $R(v_1, v)$ and $P(v_2, v)$ are aux-simple atoms in $q_{\sim}$ and
    ${\tau(v) \in \aux{\dat_\K}}$. By the definition of $\sim$, we have $v_1
    \sim v_2$; and, by the construction of $q_{\sim}$, we have $v_1 = v_2$,
    which is a contradiction.
\end{proof}

We next construct substitution $\pi$ with $\dom{\pi} = \vars{q}$ that will
satisfy \eqref{eq:compatible} and the two following properties:
\begin{enumerate}
    \item[(2)] for all terms $s,t \in \terms{q}$ such that $s\sim t$, we have $\pi(s) =
    \pi(t)$, and

    \item[(3)] for each $\tuple{v',v} \in E_s$ with $\tau(v)$ of the form
    $o_{P,A}$, we have $P(\pi(v'), \pi(v)) \in I$.
\end{enumerate}
We define $\pi$ by structural induction on the forest $\tuple{V,E_s}$; later we
show that $\pi(q)\subseteq I$.

\smallskip

\basecase Consider a root $v\in V$. Fix an arbitrary term $w\in\domain{I}$ such
that $\delta(w) = \tau(v)$. For each term $s\in \terms{q}$ with $s \sim v$, let
$\pi(s) = w$. By condition \ref{spur:cond1} in Definition \ref{def:dsound}, we
have $\tau(s) = \tau(v)$. Thus property \eqref{eq:compatible} and {(2)} hold.

\smallskip

\indstep Consider an arbitrary $\tuple{v',v} \in E_s$ with $\pi(v')$ defined
and $\pi(v)$ undefined. By the definition of $E_s$, a role $R\in \rolenames$
exists such that $R(v',v)$ is an aux-simple atom in $q_{\sim}$. Hence, we have
${\directedge{R}(\tau(v'), \tau(v)) \in J}$ and $\tau(v)$ is of the form
$o_{P,A}$. Since by property \eqref{eq:compatible} we have ${\delta(\pi(v')) =
\tau(v')}$, and due to property D\ref{dat-cond5} in Lemma
\ref{lemma:dat-embed}, a term ${w \in \domain{I}}$ exists such that
${P(\pi(v'), w) \in I}$. Then, for each term $s \in\terms{q}$ with $s \sim v$,
let $\pi(s) = w$. Properties (2) and (3) immediately hold. By condition $2$ in
Definition \ref{def:dsound}, we have $\tau(s) = \tau(v)$, and so property
\eqref{eq:compatible} holds.

\begin{lemma}\label{lemma:issound2}
      Substitution $\pi$ satisfies $\pi(q)\subseteq I$.
\end{lemma}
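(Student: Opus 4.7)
The plan is to verify atom-by-atom that each atom of $q$ is mapped into $I$ by $\pi$. First, property (2) guarantees that $\pi(s)=\pi(t)$ whenever $s\sim t$, so $\pi(q)=\pi(q_\sim)$ and it suffices to show $\pi(q_\sim)\subseteq I$. Because the algorithm returned $\true$ at step~$2$, every binary atom of $q_\sim$ is either good or aux-simple, so I will split into three cases: unary atoms, good binary atoms, and aux-simple binary atoms. Throughout, I will exploit that $\dat_\K\models\tau(q)$ and that by condition~\ref{spur:cond1} of Definition~\ref{def:dsound} we have $\tau(q_\sim)=\tau(q)\subseteq J$, together with property~\eqref{eq:compatible} which says $\delta(\pi(t))=\tau(t)$.

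For a unary atom $C(t)\in q_\sim$, we have $C(\tau(t))=C(\delta(\pi(t)))\in J$, so D\ref{dat-cond1} of Lemma~\ref{lemma:dat-embed} yields $C(\pi(t))\in I$. For a good binary atom $R(s,t)\in q_\sim$, either $\tau(t)\in\indnames$ or $s=t$ and $\dat_\K\models\SELF_R(\tau(s))$. In the first case, by~\eqref{eq:compatible} and the definition of $\delta$ we get $\pi(t)=\tau(t)\in\indnames$, so $R(\delta(\pi(s)),\delta(\pi(t)))\in J$ and D\ref{dat-cond3} yields $R(\pi(s),\pi(t))\in I$. In the second case, $\SELF_R(\delta(\pi(s)))\in J$, so D\ref{dat-cond2} yields $R(\pi(s),\pi(s))\in I$, which equals $R(\pi(s),\pi(t))$ since $s=t$.

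For an aux-simple atom $R(s,t)\in q_\sim$, let $v'$ and $v$ be the representatives of $s$ and $t$ in $q_\sim$, so $R(v',v)$ is aux-simple and $\tuple{v',v}\in E_s$. By the definition of aux-simple atoms and of $\dat_\K$, $\tau(v)$ is of the form $o_{P_0,A_0}$, and the inductive construction of $\pi$ picked $\pi(v)$ so that property~(3) holds, namely $P_0(\pi(v'),\pi(v))\in I$. Moreover, D\ref{dat-cond5} applied to $\directedge{R}(\delta(\pi(v')),\delta(\pi(v)))\in J$ yields $P_0\subrole R$, and since $I$ satisfies the role inclusion rules of $\Xi_\K$, we conclude $R(\pi(v'),\pi(v))\in I$, hence $R(\pi(s),\pi(t))\in I$ by property~(2).

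The only subtlety I expect is the last case: one must observe that the auxiliary individual $\tau(v)$ determines a \emph{single} pair $(P_0,A_0)$, so the witness $\pi(v)$ chosen inductively via D\ref{dat-cond5} serves \emph{all} aux-simple atoms $R(v',v)\in q_\sim$ simultaneously, because D\ref{dat-cond5} then guarantees $P_0\subrole R$ for each such $R$. Everything else is a routine unpacking of the embedding properties of $\delta$ established in Lemma~\ref{lemma:dat-embed}.
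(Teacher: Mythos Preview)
Your proof is correct and follows essentially the same approach as the paper's: reduce to $q_\sim$ via property~(2), then handle unary atoms by D\ref{dat-cond1}, good binary atoms by D\ref{dat-cond3} and D\ref{dat-cond2}, and aux-simple atoms by combining property~(3) with the role inclusion $P_0\subrole R$ obtained from D\ref{dat-cond5}. The only cosmetic difference is that the paper works atom-by-atom in $q$ and passes to the $\sim$-representative for each binary atom, whereas you first observe $\pi(q)=\pi(q_\sim)$ globally; both are equivalent. Your remark that the representatives $v',v$ are already $s,t$ once you are in $q_\sim$ could be streamlined, but the argument is sound.
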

\begin{proof}
    Recall that $\tau(q)\subseteq J$ and ${\mathsf{isSound}(q, \dat_{\K},
    \tau)}$ returns \true in step $2$; we next show that $\pi(q) \subseteq I$.

    \smallskip

    Consider an atom $A(s)$ in $q$. By assumption, we have ${A(\tau(s)) \in
    J}$. By Lemma \ref{lemma:dat-embed}, for each $w \in\domain{I}$ with
    $\delta(w) = \tau(s)$, we have $A(w) \in I$. By property
    \eqref{eq:compatible} in the definition of $\pi$, we have $A(\pi(s)) \in I$.

    \smallskip

    Consider an atom $R(s', t')$ in $q$. By the definition of $q_{\sim}$, an
    atom $R(s, t)$ occurs in $q_{\sim}$ such that $s' \sim s$ and $t' \sim t$.
    By condition \ref{spur:cond1} in the definition of $\mathsf{isDSound}$, we
    have ${\tau(s')= \tau(s)}$ and ${ \tau(t') =\tau(t)}$. By assumption, we
    have ${R(\tau(s'), \tau(t')) \in J}$ and so ${R(\tau(s), \tau(t)) \in J}$.
    By property (2) in the definition of $\pi$, it suffices to show that
    $R(\pi(s), \pi(t)) \in I$. Given that our algorithm returns \true in step
    $2$, exactly one of the following holds.

    \smallskip $R(s,t)$ is such that $\tau(t) \in \indnames$. By Lemma
    \ref{lemma:dat-embed}, for all terms $w',w \in\domain{I}$ with ${\delta(w')
    = \tau(s)}$ and ${\delta(w) = \tau(t)}$, we have $R(w', w) \in I$. By
    property \eqref{eq:compatible} in the definition of $\pi$, we have
    $R(\pi(s), \pi(t)) \in I$.

    \smallskip $R(s,t)$ is such that $s=t$, $\tau(t)\in\aux{\dat_{\K}}$, and
    $\SELF_{R}(\tau(t)) \in J$. By Lemma \ref{lemma:dat-embed}, for each term
    $w \in\domain{I}$ with $\delta(w) = \tau(t)$, we have $R(w, w) \in I$. By
    property \eqref{eq:compatible} in the definition of $\pi$, we have
    $R(\pi(t), \pi(t)) \in I$.

    \smallskip $R(s,t)$ is aux-simple. It follows that $\tau(t)$ is of the form
    $o_{P,A}$ and ${\directedge{R}(\tau(s), \tau(t)) \in J}$. By property
    D\ref{dat-cond5} of Lemma \ref{lemma:dat-embed}, we have $P\subrole R$. By
    the definition of $E_s$, we have $\tuple{s, t} \in E_s$. By property (3) in
    the definition of $\pi$, we have ${P(\pi(s), \pi(t)) \in I}$. Since no rule
    is applicable to $I$ and $P\subrole R$, we have ${R(\pi(s), \pi(t)) \in
    I}$.
\end{proof}

\subsubsection{Case 2: $\mathsf{isSound}$  returns \true in step $18$ }

We are left to show that, if our function returns \true in step $18$, then a
substitution $\pi$ with $\dom{\pi} = \vars{q}$ exists such that $\pi(q)
\subseteq I$ and property \eqref{eq:compatible} is satisfied.

\smallskip

Assume that ${\mathsf{isSound}(q, \dat_{\K}, \tau)}$ returns \true in step
$18$. Let variable renaming $\sigma$, skeleton $\S = \tuple{\V, \E}$, and
function $L$ be as determined by $\mathsf{isSound}$. By the definition of a
skeleton for $q$ and $\sigma$, graph $\S$ is a forest rooted in
$\V\cap\setind{\dat_{\K}}$. We next define substitution $\pi$ that will satisfy
property \eqref{eq:compatible} as well as the following properties:
\begin{enumerate}
    \item[(2)] for all terms $s,t\in\terms{q}$ such that $s\sim t$ or $\sigma(s) = t$,
    we have $\pi(s) = \pi(t)$,

    \item[(3)] for each $\tuple{v',v}\in\E$ and each role $P\in L(v',v)$, we have $P(\pi(v'),\pi(v)) \in I$.
\end{enumerate}
We
define $\pi$ by structural induction on the forest $\S = \tuple{\V, \E}$;
later we show that $\pi(q) \subseteq I$.

\basecase Consider a root $v\in \V\cap\setind{\dat_{\K}}$. Given that each
element in $\rng{\sigma}$ is a variable, no term $s\in\terms{q}$ exists such
that $\sigma(s) = v$. Then, for each term $s\in \terms{q}$ with $s \sim v$, let
$\pi(s) =v$. By condition \ref{spur:cond1} in Definition \ref{def:dsound}, we
have $\tau(s) = \tau(v)$. Thus, properties \eqref{eq:compatible} and (2) are
satisfied.

\smallskip

\indstep Consider ${\tuple{v',v}\in \E}$ such that $\pi(v')$ has been defined,
but $\pi(v)$ has not; and let ${u_0 = \tau(v')}$ and ${w_0 = \pi(v')}$. By the
definition of $\mathsf{exist}$, individuals $\setof{u_1, \ldots, u_n}\subseteq
\aux{\dat_\K}$ with $n>0$ and ${u_n = \tau(v)}$ exist such that for each ${i
\in \interval{1}{n}}$, we have $u_i$ is of the form $o_{T_i, A_i}$, and
${\directedge{P_j}(u_{i-1}, u_i) \in J}$ for each $P_j \in L(v',v)$. Then, for
each ${i \in \interval{1}{n}}$, by property D\ref{dat-cond5} of Lemma
\ref{lemma:dat-embed}, a term $w_i$ of the form ${f_{T_i, A_i}^{B_i}(w_{i-1})}$
exists in $\domain{I}$ such that $T_i(w_{i-1}, w_i) \in I$; moreover, $T_i
\subrole P_j$ for each role $P_j \in L(v',v)$. Since no rule is applicable to
$I$ and $T_i \subrole P_j$, we have $P_j(w_{i-1}, w_i)\in I$. For each term $s
\in \terms{q}$ such that $s\sim v$ or $\sigma(s) = v$, let $\pi(s) = w_n$.
Property (2) is clearly satisfied. Property \eqref{eq:compatible} is
also satisfied, since $\sigma(s) = v$ implies that $\tau(s) = \tau(v)$, by
construction of $\sigma$, and $s\sim v$ implies that ${\tau(s) = \tau(v)}$, by
condition $1$ in Definition \ref{def:dsound}. For property (3) we
distinguish two cases.
\begin{itemize}
    \item A role $P \in L(v', v)$ exists such that $\TRANS{P} \not \in \T$. By
    the definition of function $\mathsf{exist}$, we then have $n=1$.
    Consequently, $\pi(v') = w_0$ and $\pi(v) = w_1$; thus, $P_j(\pi(v'),
    \pi(v)) \in I$ for each $P_j \in L(v',v)$.

    \item For each $P_j \in L(v',v)$ we have $\TRANS{P_j} \in \T$. Since no
    rule is applicable and $P_j(w_{i-1}, w_i)\in I$ for each
    $i\in\interval{1}{n}$, we have ${P_j(w_0, w_n) \in I}$; that is,
    ${P_j(\pi(v'), \pi(v)) \in I}$.
\end{itemize}

\begin{lemma}\label{lemma:issound}
    Substitution $\pi$ satisfies $\pi(q)\subseteq I$.
\end{lemma}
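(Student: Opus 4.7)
The plan is to verify $\pi(q) \subseteq I$ atom by atom, exploiting the three invariants established when $\pi$ was built: (1) compatibility with $\tau$, namely $\delta(\pi(t)) = \tau(t)$ for every $t \in \terms{q}$; (2) $\pi$ respects both $\sim$ and the variable renaming $\sigma$, so that $\pi(s) = \pi(\sigma(s))$ and $s \sim t$ implies $\pi(s) = \pi(t)$; and (3) for every $\tuple{v',v} \in \E$ and every role $P \in L(v',v)$ we have $P(\pi(v'), \pi(v)) \in I$. All three drop out of the inductive construction, and the whole argument is then a case analysis on how each atom of $q$ is certified.

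Unary atoms are immediate: for $A(s) \in q$, $\tau(q) \subseteq J$ yields $A(\tau(s)) \in J$, and invariant (1) together with property D\ref{dat-cond1} of Lemma~\ref{lemma:dat-embed} gives $A(\pi(s)) \in I$. For a binary atom $R(s,t) \in q$, I would first use the definition of $q_\sim$ and invariant (2) to reduce to the corresponding atom $R(\sigma(s'), \sigma(t')) \in \sigma(q_\sim)$, and then split on its classification in Definition~\ref{def:atom-types}. Good atoms with $\tau(t) \in \indnames$ fall out of D\ref{dat-cond3}, and the remaining good sub-case ($s = t$ with $\SELF_R(\tau(s)) \in J$) out of D\ref{dat-cond2}, each via invariant (1). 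Aux-simple atoms lie in $\sigma(E_s) \subseteq \E$ with $R \in L(\sigma(s'), \sigma(t'))$ by step~6, so invariant (3) closes them at once.

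The principal obstacle is the case of an atom $R(\sigma(s'), \sigma(t'))$ that is neither good nor aux-simple, handled by the loop in steps~7--15. The algorithm there commits to a role $P \subrole R$ with $\dat_\K \models P(\tau(s), \tau(t))$ and distributes $P$ along a path in $\E$, and I would argue in two sub-cases. If $s$ reaches $t$ in $\E$ via $v_0, \ldots, v_n$, invariant (3) produces $P$-edges all along the path, which collapse to $P(\pi(s), \pi(t)) \in I$ by transitivity of $P$ when $\TRANS{P} \in \T$; when $P$ is not transitive, the check at step~9 forces $\tuple{s,t} \in \E$ and hence $n = 1$, so no collapse is needed. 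If $s$ does not reach $t$ in $\E$, step~9 forces $\TRANS{P} \in \T$, and step~14 guarantees $\dat_\K \models P(\tau(s), a_t)$ for a skeleton root $a_t \in \setind{\dat_\K}$; I would then invoke D\ref{dat-cond3} (noting $\delta$ is the identity on individuals) to obtain $P(\pi(s), a_t) \in I$, apply invariant (3) along $a_t = v_0, \ldots, v_n = t$ together with $\pi(a_t) = a_t$ to obtain $P(a_t, \pi(t)) \in I$, and close with two applications of transitivity. In either sub-case $P \subrole R$ combined with the saturation of $I$ under the sub-role rule in $\Xi_\K$ promotes $P$ to $R$, yielding $R(\pi(s), \pi(t)) \in I$. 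The delicate point throughout is the second sub-case: one must keep straight that $a_t$ is an individual (so both $\delta$ and $\pi$ act trivially on it), that the $P$-chain built via invariant (3) actually starts at $\pi(a_t) = a_t$, and that transitivity of $P$ is what lets the two partial chains join.
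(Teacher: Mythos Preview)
Your proposal is correct and follows essentially the same approach as the paper's own proof: the same reduction through $\sim$ and $\sigma$ to an atom of $\sigma(q_\sim)$, the same four-way case split (two good sub-cases via D\ref{dat-cond3} and D\ref{dat-cond2}, aux-simple via step~6 and invariant~(3), and the residual case via steps~7--15), and the same handling of the reachable/non-reachable split using step~9, step~14, D\ref{dat-cond3} for the link $P(\pi(s),a_t)$, invariant~(3) along the skeleton path, and transitivity of $P$ followed by $P \subrole R$. The only cosmetic imprecision is the phrase ``two applications of transitivity'' in the non-reachable sub-case---collapsing the path $a_t = v_0,\ldots,v_n = t$ already uses transitivity $n-1$ times before the final join with $P(\pi(s),a_t)$---but this does not affect correctness.
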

\begin{proof}
    We show that $\pi(q) \subseteq I$ by considering the various atoms occurring in $q$.

    \smallskip

    Consider an atom $A(s)$ in $q$. By assumption, we have ${A(\tau(s)) \in
    J}$. By Lemma \ref{lemma:dat-embed}, for each $w \in\domain{I}$ with
    $\delta(w) = \tau(s)$, we have $A(w) \in I$. By property
    \eqref{eq:compatible} in the definition of $\pi$, we have $A(\pi(s)) \in I$.

    \smallskip

    Consider an atom $R(s', t')$ in $q$. By assumption, we have ${R(\tau(s'),
    \tau(t')) \in J}$. By the definition of $q_{\sim}$, terms $s''$ and $t''$
    occur in $q_{\sim}$ such that $s'\sim s''$, $t'\sim t''$, and $R(s'', t'')$
    is an atom in $q_{\sim}$. By condition \ref{spur:cond1} in the definition
    of $\mathsf{isDSound}$, we have ${\tau(s') = \tau(s'')}$ and ${\tau(t') =
    \tau(t'')}$. Therefore, ${R(\tau(s''), \tau(t'')) \in J}$. By the
    definition of $\sigma(q_{\sim})$, terms $s$ and $t$ occur in
    $\sigma(q_\sim)$ such that $\sigma(s'') = s$, $\sigma(t'') = t$, and $R(s,
    t)$ is an atom in $\sigma(q_{\sim})$. By the definition of variable
    renaming, we have $\tau(t'') = \tau(t)$ and $\tau(s'') = \tau(s)$. Thus
    ${R(\tau(s), \tau(t)) \in J}$. By property (2) in the definition of $\pi$,
    it suffices to show that $R(\pi(s), \pi(t)) \in I$. Towards this goal, we
    consider four distinct cases.

    \smallskip

    $R(s,t)$ is such that $\tau(t) \in\indnames$. By Lemmas
    \ref{lemma:dat-embed}, for all terms $w',w \in\domain{I}$ with $\delta(w')
    = \tau(s)$ and $\delta(w) = \tau(t)$, we have $R(w', w) \in I$. By property
    \eqref{eq:compatible} in the definition of $\pi$, we have $R(\pi(s),
    \pi(t)) \in I$.

    \smallskip

    $R(s,t)$ is such that $s=t$, $\tau(t)\in\aux{\dat_\K}$, and
    $\SELF_{R}(\tau(t)) \in J$. By Lemma \ref{lemma:dat-embed}, for each term
    $w \in\domain{I}$ with $\delta(w) = \tau(t)$ , we have $\norm{R(w, w)}{I}
    \in I$. By property \eqref{eq:compatible} in the definition of $\pi$, we
    have $R(\pi(t), \pi(t)) \in I$.

    \smallskip

    $R(s,t)$ is aux-simple. By the definition of $E_s$ and $\E$, we have
    $\tuple{s, t} \in \sigma(E_s)\cap\E$, and $R\in L(s,t)$ by step $6$. By
    property (3) in the definition of $\pi$, we have ${R(\pi(s), \pi(t)) \in
    I}$.

    \smallskip

    $R(s,t)$ is neither good nor aux-simple. Let $P$ and $v_0,\ldots, v_n$
    be as determined in steps $8$--$15$ when Algorithm \ref{algo:issound}
    considers atom $R(s,t)$. Then by step $8$ we have ${P \subrole R}$.
    Furthermore, for each $i \in \interval{1}{n}$, we have ${P \in L(v_{i-1},
    v_i)}$ by step $15$; but then, by property (3) we have
    $P(\pi(v_{i-1}),\pi(v_i)) \in I$. Next, we distinguish two cases.
    \begin{itemize}
        \item $s$ reaches $t$ in \E. If $\tuple{s,t}\in \E$, then $n =1$ and,
        by property {(3)} in the definition of $\pi$, we have $P(\pi(s),
        \pi(t)) \in I$. Otherwise, we have $\TRANS{P} \in \T$ and, since no
        rule is applicable to $I$, we have ${P(\pi(s), \pi(t)) \in I}$.

        \item $s$ does not reach $t$ in $\E$. Then, we have $\TRANS{P} \in
        \T$, ${v_0 = a_t}$, and ${P(\tau(s), v_0) \in J}$. As $v_0 \in
        \indnames$ and ${\pi(v_0) = v_0}$, by Lemma \ref{lemma:dat-embed}, we
        have ${P(\pi(s), \pi(v_0)) \in I}$. Due to ${\TRANS{P}\in\T}$ and no
        rule is applicable to $I$, we have ${\norm{P(\pi(s), \pi(t))}{I} \in
        I}$.
    \end{itemize}
    Since $P\subrole R$ and no rule is applicable to $I$ we have
    ${R(\pi(s), \pi(t)) \in I}$.
\end{proof}

Finally, we prove the soundness claim.
\begin{lemma}
    Substitution $\pi'$ satisfies $\Xi_\K \models \pi'(q')$.
\end{lemma}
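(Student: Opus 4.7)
The plan is to lift the substitution $\pi$, which so far is defined only on $\vars{q}$, to a substitution $\pi_*$ with $\dom{\pi_*} = \vars{q'}$ that extends $\pi'$, and then to verify that $\norm{\pi_*(q')}{I} \subseteq I$. Once this is done, the Fact about universal interpretations stated at the end of Appendix \ref{app:chase} immediately yields $\Xi_\K \models \pi'(q')$.

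I would define $\pi_*$ as follows: for each answer variable $x \in \vec{x}$, set $\pi_*(x) := \pi'(x) = \tau'(x)$; for each existential variable $y \in \vec{y}$ with $\tau'(y) \in \aux{\dat_\K}$, the variable $y$ is preserved in $q$ by Definition \ref{def:dat-order}, so set $\pi_*(y) := \pi(y)$; for each remaining $y \in \vec{y}$, set $\pi_*(y) := \tau'(y)$. By construction $\pi_* \supseteq \pi'$, and condition $1$ guarantees that $\tau'(x) \in \indnames \subseteq \domain{I}$, so $\pi_*$ ranges in $\domain{I}$.

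The core verification is that $\norm{\pi_*(\phi')}{I} \in I$ for every atom $\phi' \in q'$. Let $\phi \in q$ denote the corresponding atom obtained from $\phi'$ by replacing each term $t \in \terms{\phi'}$ satisfying $\tau'(t) \notin \aux{\dat_\K}$ with $\tau'(t)_\approx$, and write $\widehat{t}$ for this replacement (so $\widehat{t} = t$ when $\tau'(t) \in \aux{\dat_\K}$). Since by Lemmas \ref{lemma:issound2} and \ref{lemma:issound} we already know $\pi(\phi) \in I$, and since normalization in $I$ distributes over atoms, it suffices to show $\norm{\pi_*(t)}{I} = \pi(\widehat{t})$ for every such $t$. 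When $\tau'(t) \in \aux{\dat_\K}$ this is immediate: $\pi_*(t) = \pi(t) = \pi(\widehat{t})$ is already a term in $\domain{I}$, hence fixed by normalization. Otherwise $\pi_*(t) = \tau'(t)$ and $\pi(\widehat{t}) = \tau'(t)_\approx$ (using that $\pi$ is the identity on constants), and the required identity $\norm{\tau'(t)}{I} = \tau'(t)_\approx$ follows from property E\textup{1} of Lemma \ref{lemma:equality} applied to $w = \tau'(t)$, together with the observation that $\norm{\tau'(t)}{J} = \tau'(t)_\approx$ by the choice of the order $>$ in Definition \ref{def:dat-order} (which makes $\tau'(t)_\approx$ the least individual of $\indnames$ in the $\approx$-class of $\tau'(t)$).

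The main obstacle is precisely this last bridging step: the term $\tau'(t)_\approx$ is defined via the equality closure of $\dat_\K$, whereas $\norm{\pi_*(t)}{I}$ is controlled by the chase of $\Xi_\K$, so one must reconcile the two. Everything else is bookkeeping over the case split in Definition \ref{def:dat-order}. Once the identity $\norm{\pi_*(t)}{I} = \pi(\widehat{t})$ is established termwise, the inclusion $\norm{\pi_*(\phi')}{I} \subseteq \pi(\phi) \subseteq I$ holds for every $\phi' \in q'$, and the Fact about universal interpretations concludes $\Xi_\K \models \pi'(q')$, as required.
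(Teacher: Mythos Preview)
Your overall strategy matches the paper's: lift $\pi$ to a substitution $\pi_*$ on $\vars{q'}$ extending $\pi'$, show that $\norm{\pi_*(t)}{I}$ agrees with $\pi$ applied to the replacement term in $q$, and conclude via the universal-interpretation fact. The case analysis and the use of Lemma~\ref{lemma:equality} are also the right ingredients.

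There is, however, a genuine gap in your definition of $\pi_*$ on the ``remaining'' existential variables. You set $\pi_*(y):=\tau'(y)$ whenever $\tau'(y)\notin\aux{\dat_\K}$, and then invoke property~E1 of Lemma~\ref{lemma:equality} with $w=\tau'(t)$. But $\tau'(y)\notin\aux{\dat_\K}$ does \emph{not} force $\tau'(y)\in\indnames$: it may be an auxiliary individual $o_{R,A}$ that happens to satisfy $\dat_\K\models o_{R,A}\approx a$ for some $a\in\indnames$. Such an $o_{R,A}$ is an individual of $\dat_\K$ only; it never occurs in $I$, so Lemma~\ref{lemma:equality} (which is stated for terms occurring in $I$) does not apply, and in fact $\norm{o_{R,A}}{I}=o_{R,A}\neq \tau'(t)_\approx$. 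Your claim that ``$\pi_*$ ranges in $\domain{I}$'' is justified only for the answer variables via condition~1, not for these existential ones.

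The paper repairs exactly this point: instead of putting $\pi_*'(y)=\tau'(y)$, it picks an arbitrary term $w$ \emph{occurring in $I$} with $\delta(w)=\tau'(y)$ (such $w$ exists by property~D\ref{dat-cond7} of Lemma~\ref{lemma:dat-embed}), and then Lemma~\ref{lemma:equality} legitimately yields $\norm{w}{I}=\gamma(y)=\tau'(y)_\approx$. With that single change your argument goes through and is essentially the paper's proof.
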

\begin{proof}

    To prove the lemma, we show that a substitution $\pi'_*$ with $\dom{\pi'_*}
    = \vars{q'}$ exists such that $\pi' \subseteq \pi'_*$ and
    ${\norm{\pi_*'(q')}{I}\subseteq I}$. By the construction of $q$ and $\tau$,
    we have $\dat_{\K} \models \tau(q)$ and $\tau(q) \subseteq J$. Let $\pi$ be
    the substitution specified just above Lemma \ref{lemma:issound2}, if
    $\mathsf{isSound}(q,\dat_\K, \tau)$ returns \true in step $2$, otherwise,
    let $\pi$ be the substitution specified just above Lemma
    \ref{lemma:issound}. By Lemmas \ref{lemma:issound2} and
    \ref{lemma:issound}, we have $\pi(q) \subseteq I$; furthermore, $\pi$
    satisfies property \eqref{eq:compatible}. Let $\gamma$ be the mapping from
    $\terms{q'}$ to $\terms{q}$ such that $q$ is obtained by replacing each
    $t\in\terms{q'}$ with $\gamma(t)$.

    Let $\pi_*'$ be the substitution such that for each $z\in\vars{q'}$, we
    have $\pi_*'(z) = \pi(z)$, if $\gamma(z) =z$; otherwise, we define
    $\pi_*'(z)$ as an arbitrary term $w$ occurring in $I$ such that $\delta(w)
    = \tau'(z)$. Since $\pi$ satisfies property \eqref{eq:compatible} and by
    construction $\pi_*'$, for each term $t\in\terms{q'}$, we have
    $\delta(\pi'_*(t)) = \tau'(t)$. Since $\tau'(x)\in\indnames$ for each
    $x\in\vec{x}$ and given that $\delta$ is the identity on \indnames, we have
    $\pi'\subseteq \pi'_*$. To prove that $\norm{\pi_*'(q')}{I}\subseteq I$, we
    show that for each term $t\in\terms{q'}$, we have $\norm{\pi_*'(t)}{I} =
    \pi(\gamma(t))$. The property clearly holds for each term $t\in\terms{q'}$
    such that $\gamma(t) = t$. Then consider an arbitrary term $t\in\terms{q'}$
    such that $\gamma(t)\neq t$. By the definition of $\gamma$, we have
    $\gamma(t) = \norm{\tau'(t)}{J}$ and $\gamma(t)\in\indnames$. By Lemma
    \ref{lemma:equality}, for each term $w$ occurring in $I$ with $\delta(w) =
    \tau'(t)$, we have $\norm{w}{I} = \gamma(t)$. By the definition of
    $\pi_*'$, we then have $\norm{\pi_*'(t)}{I} = \gamma(t) = \pi(\gamma(t))$.
    \end{proof}

\subsection{Completeness}

To prove the completeness claim, we start by establishing two properties of the
universal interpretation $I$. To this end, we start with a couple of
definitions.

Let $\prec$ be the smallest irreflexive and transitive relation on the set of
terms occurring in $I$ such that $w \prec f_{R,A}^{A_1}(w)$ for each term
occurring in $I$, each role $R$, and all concepts ${A, A_1 \in \setof{\top}
\cup \conceptnames}$. Furthermore, an atom $R(w',w) \in I$ is a
\emph{self-loop} if $w' = w$ and $\SELF_{R}(w) \in I$.

\begin{lemma}\label{lemma:univ}
    Interpretation $I$ satisfies the following properties for each role
    ${R \in \rolenames}$ and all terms ${w',w \in \domain{I}}$ with $w
    \not \in \indnames$.
    \begin{enumerate}
        \item ${R(w', w) \in I}$ is not a self-loop and $R$ is simple imply
        that $w$ is of the form $f_{T,A}^{A_1}(w')$.

        \item ${R(w', w) \in I}$ is not a self-loop implies that a role
        $P\in\rolenames$ and terms $w_0,\ldots,w_m$ from $\domain{I}$ with $w_m
        = w$ exist where
        \begin{enumerate}[{2}a.]
            \item if $m > 1$ or $w' \not \prec w$, then $\TRANS{P} \in
            \T$,
            \item $P \subrole R$,

            \item $w_0 = w'$, if $w'\prec w$; otherwise, $w_0$ is the unique
            individual $a\in\indnames$ with $a\prec w$, and ${P(w',
            w_0) \in I}$, and

            \item for each $i \in \interval{1}{m}$, $w_i$ is of the form
            $f_{S_i,A_i}^{B_i}(w_{i-1})$ and $P(w_{i-1}, w_{i}) \in I$.
        \end{enumerate}
    \end{enumerate}
\end{lemma}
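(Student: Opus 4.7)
I would prove both claims simultaneously by induction on the chase sequence $I_0, I_1, \ldots$ used to construct $I$, showing that in every $I_n$ they hold for every non-self-loop atom $R(w',w)$ with $w \notin \indnames$. The base case $I_0$ is vacuous since its atoms come only from the ABox and have individual arguments, so the hypothesis $w\notin\indnames$ fails. In the inductive step, unary-producing rules and the reflexivity rules of types 9 and 10 (which yield only self-loops) are immediate. Equality rules of type 2 are handled by the following observation: because the ordering $>$ places every function term above every individual, an equality merge always replaces a function term with an individual, and the chase's pruning step removes every atom whose argument contains the replaced term as a proper subterm; thus any role atom with $w\notin\indnames$ that survives the merge is either untouched or re-derived by a later rule application with appropriately normalised arguments, so neither invariant is broken.

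The substantive cases are types 5, 7, and 8. Type 7 adds $R(w_1, f^{A_1}_{R,A}(w_1))$ to $I$, which directly satisfies property 1 with $T = R$ and property 2 with $P = R$, $m = 1$, and $w_0 = w_1$; since $w'\prec w$ and $m = 1$, the transitivity side condition 2a is vacuous. For a type 5 axiom $S\ISA R$ the new atom $R(w',w)$ comes from $S(w',w)$, and since $\subrole$ is transitive, simplicity of $R$ entails simplicity of $S$, so property 1 transfers verbatim; likewise, the IH chain for $S$ is reused for $R$ with the same witness role $P$, now with $P\subrole S\subrole R$.

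The hard case, and the main obstacle, is type 8: from $R(w',w'')$ and $R(w'',w)$ we derive $R(w',w)$ under $\TRANS{R}\in\T$. Property 1 is vacuous because $R$ is non-simple. For property 2, I apply IH to $R(w'',w)$ to obtain a chain $w_0^{(1)},\ldots,w_{m_1}^{(1)}=w$ under some $P_1\subrole R$; since no rule is applicable to $I$, the $P_1$-atoms between consecutive chain terms force the corresponding $R$-atoms, so I may take $P = R$, and $\TRANS{R}\in\T$ makes condition 2a automatic. What remains is to choose the start of the chain, exploiting that the intermediate skolem terms between $w_0^{(1)}$ and $w$ are uniquely determined by the nested decomposition of $w$. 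I split on the position of $w'$: if $w_0^{(1)}\preceq w' \prec w$, then $w' = w_k^{(1)}$ for some $k$ and I truncate; if $w'\prec w_0^{(1)}$, then $w_0^{(1)}$ must equal $w''$ (otherwise $w_0^{(1)}\in\indnames$ would make $w'$ a proper subterm of an individual, which is impossible), and I concatenate the chain for $w$ with the chain obtained by applying IH to $R(w',w'')$; finally if $w' \not\prec w$, the chain begins at the root individual $a\prec w$, and the required hand-off atom $R(w',a) \in I$ is produced either by $w_0^{(1)} = a$ together with the $P_1(w'',a)$ clause given by IH, or by fusing the IH chain for $R(w',w'')$ with an unfolding down to the ABox root of $w''$.

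The principal technical difficulty lies in this case split and in verifying that the selected chain terminates exactly at $w$ rather than at some deeper subterm; a secondary subtlety is that when concatenating two IH chains one must align their witness roles, which is resolved by promoting both to $R$ using $P_1,P_2\subrole R$ together with $\TRANS{R}\in\T$. Beyond this, the argument is bookkeeping, and the structural invariants carry through every chase step.
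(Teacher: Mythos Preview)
Your overall strategy—induction on the chase sequence with the transitivity rule as the substantive case—is exactly the paper's approach, and your treatment of types~5, 7, and~8 is essentially correct; your three-way split for transitivity is organised differently from the paper's $2\times 2$ split on the Booleans $w'\prec w''$ and $w''\prec w$, but both cover the same ground.

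The genuine gap is your treatment of the equality rule. You assert that any surviving role atom with second argument $w\notin\indnames$ is ``either untouched or re-derived by a later rule application,'' but this overlooks the case where the merged term $u'$ occurs as the \emph{first} argument. The atom $R(u',w)$ with $u'\not\prec w$ is not pruned (only atoms containing $u'$ as a \emph{proper subterm} of an argument are removed); it is rewritten in place to $R(w',w)$ with $w'\in\indnames$, and nothing in the chase re-derives it later—it is simply present in $I_{n+1}$, and the invariant must be checked for it directly. The paper does this by invoking the inductive hypothesis on $R(u',w)$: since $u'\not\prec w$, one obtains a witness role $P$ with $\TRANS{P}\in\T$, $P\subrole R$, a chain rooted at the unique individual $c$ with $c\prec w$, and $P(u',c)\in I$ (after normalisation). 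One then case-splits on whether $w'\prec w$, which forces $w'=c$ so the same chain serves, or $w'\not\prec w$, in which case the merge itself gives $\norm{P(w',c)}{I}\in I$. This argument also shows $R$ is non-simple (since $P\subrole R$ and $\TRANS{P}\in\T$), so property~1 is vacuous for the rewritten atom. Without this step your induction does not close.
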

\begin{proof}
    Let $I_0, I_1, \ldots$ be the chase sequence used to construct $I$. Please
    observe that, by virtue of the pruning step in the application of equality
    rules, for each term $w$ occurring in $I$ of the form $w = f_{T,A}^{A_1}(w')$
    we either have $\norm{w}{I}= a$ for some individual $a\in\indnames$, or
    $\norm{w'}{I} = w'$ and $\norm{w}{I} = w$. Then to prove the lemma, we show
    by induction on this sequence that each $I_n$ satisfies the following
    properties for each role ${R \in \rolenames}$ and all terms $w'$ and $w$
    occurring in $I$ such that $w\not\in\indnames$.
    \begin{enumerate}[A.]
        \item ${R(w', w) \in I_n}$ is not a self-loop and $R$ is simple imply
        that $w$ is of the form $f_{T,A}^{A_1}(w')$.

        \item ${R(w', w) \in I_n}$ is not a self-loop implies that a role $P
        \in \rolenames$ and terms $w_0,\ldots,w_m$ from $I$ with ${w_m =
        w}$ exist where
        \begin{enumerate}[(i)]
            \item if $m > 1$ or $w' \not \prec w$, then $\TRANS{P} \in
            \T$,
            \item $P \subrole R$,

            \item $w_0 = w'$, if $w'\prec w$; otherwise, $w_0$ is the unique
            individual $a\in\indnames$ with $a\prec w$, and ${\norm{P(w',
            w_0)}{I} \in I}$, and

            \item for each $i \in \interval{1}{m}$, $w_i$ is of the form
            $f_{S_i,A_i}^{B_1}(w_{i-1})$ and $\norm{P(w_{i-1}, w_{i})}{I} \in I$.
        \end{enumerate}
    \end{enumerate}

    \basecase Consider $I_0$. All atoms in $I_0$ are over the individuals in
    $\indnames$, so properties A and B hold vacuously.

    \smallskip

    \indstep Consider an arbitrary ${n \in \nat}$ and assume that $I_n$
    satisfies properties A and B. By considering each rule in $\Xi_{\K}$ that
    derives binary atoms, we assume that the rule is applicable to $I_n$, and
    we show that the properties hold for all fresh atoms in the resulting
    instance.

    \smallskip

    Consider a rule $A_1(x) \rightarrow \exists z. R(x, z) \wedge A(z)$, and
    assume that ${A_1(w') \in I_n}$. Furthermore, let $w = f_{R,A}^{A_1}(w')$,
    and assume that $w'$ and $w$ occur in $I$. Clearly, if $\norm{w}{I_n} = a$
    for some individual $a\in\indnames$, then the properties hold vacuously;
    hence we consider the case in which $\norm{w}{I_n}=w$. Please note that $w'
    \prec w$. Since $w'$ occurs in $I$, we have ${\norm{A_1(w')}{I} \in I}$. As
    no rule is applicable to $I$, we have ${\norm{R(w', w)}{I} \in I}$.
    Property A holds, and property B is satisfied for role $R$ and terms $w_0 =
    w'$ and $w_1 = w$.

    \smallskip

    Consider a rule $A(x) \rightarrow x \approx a$, and assume that ${A(u) \in
    I_n}$. Let $u'$ and $w'$ be terms occurring in $I$ such that ${\setof{u',
    w'} = \setof{\norm{u}{I_n}, \norm{a}{I_n}}}$ and $u' > w'$. By the
    definition of $>$, we have $w'\in\indnames$. Since $u$ occurs in $I$, we
    have $\norm{A(u)}{I} \in I$. Since no rule is applicable to $I$, we have
    $\norm{u'}{I} = \norm{w'}{I}$. We next show that the properties are preserved
    for all atoms in $I_n$ that get replaced by the application of this rule.
    Please observe that the properties hold vacuously for each atom $R(w,
    u')\in I_n$ and each atom $R(u',b)\in I_n$ with $b\in\indnames$. Then
    consider an atom $R(u', w) \in I_n$ with $w\neq u'$ and
    $w\not\in\indnames$. Then $R(u', w) \in I_n$ is not a self-loop. Since this
    atom is replaced, not removed, by the application of the rule, we must have
    that $u' \not \prec w$. Let $c\in\indnames$ be the unique individual such that
    $c\prec w$. By the inductive hypothesis, a role ${P \in \rolenames}$ and
    terms ${w_0,\ldots, w_m}$ with $w_0 = c$ and $w_m = w$ exist satisfying
    properties {\it (i)}--{\it (iv)}. Then $\norm{P(u', w_0)}{I} \in I$,
    $\TRANS{P} \in \T$, and $P\subrole R$. Role $R$ is not simple, thus
    property A holds. We next distinguish two cases.
    \begin{itemize}
        \item $w' \prec w$. Since $w'\in\indnames$, we have $w' = w_0$. Then
        role $P$ and terms $w_0,\ldots, w_m$ satisfy properties {\it (i)}--{\it
        (iv)}.

        \item $w'\not \prec w$. As stated above, we have $\TRANS{P} \in \T$ and
        $\norm{P(u',w_0)}{I} \in I$. Thus, $\norm{P(w',w_0)}{I} \in I$, and
        role $P$ and terms $w_0,\ldots, w_m$ satisfy properties {\it (i)}--{\it
        (iv)}.
    \end{itemize}

    Consider a rule $T(x, y) \rightarrow R(x, y)$, and assume that ${T(w', w)
    \in I_n}$ is not a self-loop. For property A, assume that $R$ is a simple
    role. Hence, $T$ is a simple role as well, and property A follows from the
    inductive hypothesis. For property B, by the inductive hypothesis, a role
    $P$ and terms $w_0, \ldots, w_m$ with $w_m = w$ exist satisfying {\it
    (i)}--{\it (iv)}. By the definition of $\Xi_{\K}$, we have $T\ISA R\in \T$.
    Since $P\subrole T$, we then have $P\subrole R$, and properties {\it
    (i)}--{\it (iv)} are satisfied.

    \smallskip

    Consider a rule $R(x, y) \wedge R(y, z) \rightarrow R(x,z)$, and assume
    that ${\setof{R(w', w''), R(w'', w)}\subseteq I_n}$. Property A holds
    vacuously, as $R$ is not simple. For property B, we distinguish four cases.
    \begin{itemize}

        \item $w' \prec w''$ and $w'' \prec w$. By the inductive hypothesis, a
        role $P_1$ and terms $w_0^1,\ldots, w_{m_1}^1$ with $w_0^1 = w'$ and
        $w_{m_1}^1 = w''$ exist satisfying properties {\it (i)}--{\it (iv)};
        moreover, a role $P_2$ and terms $w_0^2,\ldots, w_{m_2}^2$ with $w_0^2
        = w''$ and $w_{m_2}^2 = w$ exist satisfying properties {\it (i)}--{\it
        (iv)}. Then, for $j = 1,2$, we have $P_j \subrole R$. Furthermore, for
        each $i \in\interval{1}{m_j}$, we have $\norm{P_j(w_{i-1}^j,
        w_{i}^j)}{I} \in I$. As no rule is applicable to $I$, we have
        $\norm{R(w_{i-1}^j, w_{i}^j)}{I} \in I$. Then, property B is
        satisfied for role $R$ and terms $w_0^1, \ldots, w_{m_1}^1,
        w_1^2,\ldots w_{m_2}^2$.

        \item $w' \prec w''$ and $w'' \not \prec w$. By the inductive
        hypothesis, a role $P_1$ and terms $w_0^1,\ldots, w_{m_1}^1$ with
        $w_0^1 = w'$ and $w_{m_1}^1 = w''$ exist satisfying properties {\it
        (i)}--{\it (iv)}. Let ${b \in \indnames}$ be the unique individual such
        that $b \prec w$. Since ${w \not \in \indnames}$, by the inductive
        hypothesis, a role $P_2$ and terms $w_0^2,\ldots, w_{m_2}^2$ with
        $w_0^2 = b$ and $w_{m_2}^2 = w$ exist satisfying properties {\it
        (i)}--{\it (iv)}. By property {\it (iii)}, we have
        ${\norm{P_2(w_{m_1}^1, w_0^2)}{I} \in I}$. Then, for $j = 1,2$, we have
        $P_j \subrole R$. Moreover, for each $i \in\interval{1}{m_j}$, we have
        $\norm{P_j(w_{i-1}^j, w_{i}^j)}{I} \in I$. As no rule is applicable to
        $I$, we have $\norm{R(w_{i-1}^j, w_{i}^j)}{I} \in I$ and
        $\norm{R(w_{m_1}^1, w_0^2)}{I} \in I$. Since $\TRANS{R}\in\T$ and no
        rule is applicable to $I$, we have $\norm{R(w_0^1, w_{0}^2)}{I} \in I$.
        Property B is satisfied for role $R$ and terms ${w_0^2,\ldots,
        w_{m_2}^2}$.

        \item $w' \not \prec w''$ and $w'' \prec w$. By
        the inductive hypothesis, a role $P_2$ and terms $w_0^2,\ldots,
        w_{m_2}^2$ with $w_0^2 = w''$ and $w_{m_2}^2 = w$ exist satisfying
        properties {\it (i)}--{\it (iv)}. Then, we have ${P_2 \subrole R}$.
        Furthermore, for each $i \in\interval{1}{m_2}$, we have
        $\norm{P_2(w_{i-1}^2, w_{i}^2)}{I} \in I$. As no rule is applicable to
        $I$, we have $\norm{R(w_{i-1}^2, w_{i}^2)}{I} \in I$. We distinguish
        two cases.
        \begin{itemize}
            \item $w'' \in \indnames$. It follows that $w_0^2 = w''$. Then
            property B is satisfied for role $R$ and terms $w_0^2,\ldots,
            w_{m_2}^2$.

            \item $w'' \not \in \indnames$. Let $a \in\indnames$ be the unique
            individual such that $a\prec w''$. By the inductive hypothesis, a
            role $P_1$ and terms $w_0^1,\ldots, w_{m_1}^1$ with $w_0^1 = a$ and
            $w_{m_1}^1 = w''$ exist satisfying properties {\it (i)}--{\it (iv)}.
            Then, we have ${P_1 \subrole R}$ and ${\norm{P_1(w', a)}{I} \in I}$.
            Furthermore, for each ${i \in \interval{1}{m_1}}$, we have
            ${\norm{P_1(w_{i-1}^1, w_{i}^1)}{I} \in I}$. As no rule is applicable
            to $I$, we have $\norm{R(w', a)}{I} \in I$ and $\norm{R(w_{i-1}^1,
            w_{i}^1)}{I} \in I$. Then the property is satisfied for role $R$
            and terms $w_0^2,\ldots, w_{m_2}^2$.
        \end{itemize}
        \item $w' \not \prec w''$ and $w'' \not \prec w$. By assumption, we
        have $w\not \in\indnames$. Let $b\in\indnames$ be the unique individual such
        that $b\prec w$. By the inductive hypothesis, a
        role $P_2$ and terms $w_0^2,\ldots, w_{m_2}^2$ with $w_0^2 = b$ and
        $w_{m_2}^2 = w$ exist satisfying properties {\it (i)}--{\it (iv)}. Then,
        we have ${P_2 \subrole R}$ and $\norm{P_2(w'', w_{0}^2)}{I} \in I$.
        Furthermore, for each $i \in\interval{1}{m_2}$, we have
        $\norm{P_2(w_{i-1}^2, w_{i}^2)}{I} \in I$. As no rule is applicable to
        $I$, we have $\norm{R(w'', w_{0}^2)}{I} \in I$ and $\norm{R(w_{i-1}^2,
        w_{i}^2)}{I} \in I$. We distinguish two cases.
        \begin{itemize}
            \item $w'' \in \indnames$. As stated above, we have $\norm{R(w',
            w'')}{I} \in I$ and $\norm{R(w'', w_{0}^2)}{I} \in I$. Since
            $\TRANS{R} \in I$ and no rule is applicable to $I$, we have
            ${\norm{R(w', w_{0}^2)}{I} \in I}$. Then property B is satisfied
            for role $R$ and terms $w_0^2,\ldots, w_{m_2}^2$.

            \item $w'' \not \in \indnames$. Let $a \in\indnames$ be the unique
            individual such that $a\prec w''$. By the inductive hypothesis, a
            role $P_1$ and terms $w_0^1,\ldots, w_{m_1}^1$ with $w_0^1 = a$ and
            $w_{m_1}^1 = w''$ exist satisfying properties {\it (i)}--{\it (iv)}.
            Then, we have ${P_1 \subrole R}$ and ${\norm{P_1(w', w_{0}^1)}{I}
            \in I}$. Furthermore, for each $i \in\interval{1}{m_1}$, we have
            ${\norm{P_1(w_{i-1}^1, w_{i}^1)}{I} \in I}$. As no rule is applicable
            to $I$, we have ${\norm{R(w', w_{0}^1)}{I} \in I}$ and
            $\norm{R(w_{i-1}^1, w_{i}^1)}{I} \in I$. Since $\TRANS{R}\in\T$
            and no rule is applicable to $I$, we have $\norm{R(w', w_{0}^2)}{I}
            \in I$. Property B is satisfied for role $R$ and terms
            $w_0^2,\ldots, w_{m_2}^2$.  \qedhere
        \end{itemize}
    \end{itemize}
\end{proof}

Let $q' = \exists \vec{y}. \psi(\vec{x}, \vec{y})$ be a CQ and let $\pi'$ be a
substitution such that $\dom{\pi'} = \vec{x}$ and each element in $\rng{\pi'}$
is an individual from \indnames. Assume that $\Xi_{\K} \models \pi'(q')$; we
next show that a substitution $\tau'$ with $\dom{\tau'} = \vars{q'}$ exists
such that ${\sproj{\tau'}{\vec{x}} = \pi'}$, each element in ${\rng{\tau'}}$ is
an individual occurring in $\dat_\K$, and all of the following
conditions hold:
\begin{enumerate}
    \item for each $x\in\vec{x}$, we have $\tau'(x)\in\indnames$,

    \item ${\dat_{\K} \models \tau'(q')}$, and

    \item a nondeterministic computation exists such that function
    ${\sound{q}{\dat_{\K}}{\tau}}$ returns $\true$.
\end{enumerate}

Since $\Xi_{\K} \models \pi'(q)$, a substitution $\pi_{*}'$ with
$\dom{\pi_{*}'} = \vars{q'}$ exists such that $\pi'\subseteq \pi_*'$ and
$\norm{\pi_{*}'(q')}{I} \subseteq I$. Let substitution $\tau'$ be such that,
for each variable $z \in \vars{q'}$, we have ${\tau'(z) =
\delta(\pi_{*}'(z))}$. Since $\delta$ is the identity on \indnames and
$\pi_{*}'(x) \in \indnames$ for each $x\in\vec{x}$, we have $\pi' \subseteq
\tau'$ and $\tau'(x)\in \indnames$. By Lemmas \ref{lemma:homomorphism} and
\ref{lemma:equality}, we have ${\norm{\tau'(q')}{J} \subseteq J}$, and so
$\dat_\K \models \tau'(q')$. Hence, in the following, we show that property (3)
holds.

Let $q$ and $\tau$ be as specified in Definition \ref{def:dat-order}. Then, we
have $\dat_\K\models \tau(q)$ and $\tau(q)\subseteq J$. Let $\pi_*$ be the
restriction of $\pi_*'$ to only those variables that occur in $q$. By the
definition of $\tau$ and $\pi_*$, for each variable $z\in\vars{q}$, we have
$\delta(\pi_*(z)) = \tau(z)$. We next show that $\pi_*(q)\subseteq I$.

\begin{lemma}
    Substitution $\pi_*$ satisfies $\pi_*(q)\subseteq I$
\end{lemma}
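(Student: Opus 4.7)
The plan is to show $\pi_*(q) \subseteq I$ by reading off each atom of $q$ from the corresponding atom of $q'$, whose $\norm{\cdot}{I}$-image we already know lies in $I$. Since $q$ is obtained from $q'$ by replacing certain terms with their $_\approx$-representatives, it suffices to establish, for every term $t \in \terms{q'}$, the pointwise identity
\begin{equation*}
\pi_*(\gamma(t)) = \norm{\pi_*'(t)}{I},
\end{equation*}
where $\gamma$ denotes the replacement producing $q$ from $q'$; given this, $\pi_*(\phi) = \norm{\pi_*'(\phi')}{I} \in I$ for each atom $\phi$ of $q$ coming from an atom $\phi'$ of $q'$, yielding the lemma.

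To establish the pointwise identity, I would split on whether $\gamma(t) = t$ or not. If $\gamma(t) \neq t$, then by Definition~\ref{def:dat-order} we have $\tau'(t) \notin \aux{\dat_\K}$, $\gamma(t) = \tau'(t)_\approx = a$ for some $a \in \indnames$ with $\dat_\K \models \tau'(t) \approx a$, and $\pi_*(a) = a$ since substitutions act as identity on ground terms. Recall $\delta(\pi_*'(t)) = \tau'(t)$, so $\norm{\delta(\pi_*'(t))}{J} = \norm{\tau'(t)}{J} = a$; property E1 of Lemma~\ref{lemma:equality} then yields $\norm{\pi_*'(t)}{I} = a$, matching $\pi_*(\gamma(t)) = a$.

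If, on the other hand, $\gamma(t) = t$, then $\tau'(t) \in \aux{\dat_\K}$, so $t$ is a variable still occurring in $q$ and $\pi_*(t) = \pi_*'(t)$. It remains to show $\pi_*'(t) = \norm{\pi_*'(t)}{I}$. Because every equality rule in $\Xi_\K$ is of the form $A(x) \rightarrow x \approx a$ with $a \in \indnames$, the normalisation $\norm{\pi_*'(t)}{I}$ is either $\pi_*'(t)$ itself or some $b \in \indnames$. In the latter case, property E1 applied in the other direction would give $\norm{\tau'(t)}{J} = b$, contradicting $\tau'(t) \in \aux{\dat_\K}$; hence $\norm{\pi_*'(t)}{I} = \pi_*'(t)$, as required.

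The only delicate point is verifying that the two universal interpretations $I$ and $J$ stay in sync on equalities, but this is precisely what Lemma~\ref{lemma:equality} packages up, so the argument reduces to the bookkeeping sketched above. Once the pointwise identity holds for all $t$, applying it simultaneously inside each atom of $q'$ gives $\pi_*(q) = \norm{\pi_*'(q')}{I} \subseteq I$, completing the proof.
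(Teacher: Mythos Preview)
Your approach is exactly the paper's: both reduce to the pointwise identity $\pi_*(\gamma(t)) = \norm{\pi_*'(t)}{I}$ for every term $t$ of $q'$, and both appeal to Lemma~\ref{lemma:equality} to keep the equalities in $I$ and $J$ aligned. There is, however, a small slip in your case analysis. You split on whether $\gamma(t)=t$ and then claim that $\gamma(t)=t$ forces $\tau'(t)\in\aux{\dat_\K}$. That implication fails for a constant $c\in\terms{q'}$ with $c_\approx=c$: since $c\in\indnames$ we always have $\dat_\K\models c\approx c$, so $c\notin\aux{\dat_\K}$, yet $\gamma(c)=c_\approx=c$. For such a $c$ your second-case contradiction (``$\norm{\tau'(t)}{J}=b$ contradicts $\tau'(t)\in\aux{\dat_\K}$'') is not available. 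The paper avoids this by splitting directly on whether $\tau'(t)\in\aux{\dat_\K}$; all constants then land in the $\tau'(t)\notin\aux{\dat_\K}$ branch, where your first-case argument already applies verbatim. With that adjustment to the case split, your proof is correct and matches the paper's.
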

\begin{proof}
    Let $\pi_*'$ and $\pi_*$ be as specified above; furthermore, let $\gamma$
    be the mapping from $\terms{q'}$ to $\terms{q}$ such that $q$ is obtained
    by replacing each term $t\in\terms{q'}$ with $\gamma(t)$. To prove that
    $\pi_*(q)\subseteq I$, we show that for each term $t\in\terms{q'}$, we have
    ${\pi_*(\gamma(t)) = \norm{\pi_*'(t)}{I}}$. We distinguish two cases.
    \begin{itemize}
        \item Consider an arbitrary variable $z\in\vars{q'}$ such that
        $\tau'(z)\in\aux{\dat_\K}$. Then $\gamma(z) = z$. By the definition of
        $\aux{\dat_\K}$, for each individual $u$ such that $\dat_{\K}\models
        \tau'(z) \approx u$, we have $u\not\in\indnames$ and $\tau'(z) = u$;
        therefore, $\norm{\tau'(z)}{J} = \tau'(z)$. But then, by the
        construction of $\tau'$ and by Lemma \ref{lemma:equality}, we have
        $\norm{\pi_{*}'(z)}{I} = \pi_{*}(z)$.

        \item Consider an arbitrary term $t\in\terms{q'}$ such that
        $\tau'(t)\not \in \aux{\dat_\K}$. By the definition of $q$, we have
        $\gamma(t) = \norm{\tau'(t)}{J}$ and $\gamma(t)\in\indnames$. By Lemma
        \ref{lemma:equality}, for each term $w$ occurring in $I$ with
        $\delta(w) = \tau'(t)$, we have $\norm{w}{I} = \gamma(t)$. By the
        definition of $\tau'$, we then have $\pi(\gamma(t)) = \gamma(t) =
        \norm{\pi_*'(t)}{I}$.\qedhere
    \end{itemize}
\end{proof}

Let relation $\sim$ and query $q_{\sim}$ be as specified in Definition
\ref{def:fork}; moreover, let connection graph ${\mathsf{cg} = \tuple{V, E_s,
E_t}}$ be as specified in Definition \ref{def:connection-graph}. We next show
that function $\mathsf{isDSound}(q, \dat_{\K}, \tau)$ returns \true.

\begin{lemma}\label{lemma:complete-dsound}
    Function $\mathsf{isDSound}(q, \dat_{\K}, \tau)$ returns \true.
\end{lemma}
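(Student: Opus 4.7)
The plan is to leverage the substitution $\pi_*$ from the completeness setup just established, which satisfies $\pi_*(q) \subseteq I$ and $\delta(\pi_*(z)) = \tau(z)$ for every $z \in \vars{q}$, together with Lemma~\ref{lemma:univ}(1), which tells us that each non--self-loop binary atom over a simple role in $I$ has a target of Skolem form $f^{A_1}_{T,A}(\cdot)$ built on top of its source. Both conditions of $\mathsf{isDSound}$ should follow from the rigid shape that aux-simple atoms inherit in $I$ via this Skolem structure.

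For condition~\ref{spur:cond1}, I would prove the stronger statement that $s \sim t$ implies $\pi_*(s) = \pi_*(t)$; condition~\ref{spur:cond1} then follows by applying $\delta$ and using $\tau(z) = \delta(\pi_*(z))$. The proof goes by induction on the derivation of $\sim$: the reflexive, symmetric, and transitive cases are immediate, so the work is in the fork rule. Assume $R(s,s')$ and $P(t,t')$ are aux-simple in $q$ with $\pi_*(s') = \pi_*(t')$ by induction. Both atoms lie in $\pi_*(q) \subseteq I$. Moreover $\pi_*(s'),\pi_*(t') \notin \indnames$, because $\tau(s'),\tau(t') \in \aux{\dat_\K}$ and $\delta$ is the identity on $\indnames$ (Lemma~\ref{lemma:equality}). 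The aux-simple clause $\tau(s) = \tau(t') \Rightarrow \dat_{\K}\not\models \SELF_{R}(\tau(s))$ of Definition~\ref{def:atom-types}, combined with Lemma~\ref{lemma:homomorphism}, rules out $\SELF_R(\pi_*(s'))$ and $\SELF_P(\pi_*(t'))$ in $I$, so neither atom is a self-loop. Lemma~\ref{lemma:univ}(1) then yields $\pi_*(s') = f^{A_1}_{T,A}(\pi_*(s))$ and $\pi_*(t') = f^{B_1}_{T',A'}(\pi_*(t))$, and the equality $\pi_*(s') = \pi_*(t')$ forces $\pi_*(s) = \pi_*(t)$ by comparing the unique outermost function symbol and its argument.

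For condition~\ref{spur:cond2}, take an arbitrary edge $\tuple{v',v} \in E_s$, realized by an aux-simple atom $R(v',v) \in q_\sim$. By the construction of $q_\sim$ there exist $s',s \in \terms{q}$ with $s' \sim v'$, $s \sim v$, and $R(s',s)$ aux-simple in $q$; the already-established condition~\ref{spur:cond1} (in its stronger form) gives $\pi_*(v') = \pi_*(s')$ and $\pi_*(v) = \pi_*(s)$, so $R(\pi_*(v'),\pi_*(v)) \in I$ and, by the same self-loop argument as above, this atom is not a self-loop. Lemma~\ref{lemma:univ}(1) then yields $\pi_*(v) = f^{A_1}_{T,A}(\pi_*(v'))$, hence $\pi_*(v') \prec \pi_*(v)$. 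Any cycle $v_0, v_1, \ldots, v_n = v_0$ in $\tuple{V, E_s}$ would therefore give $\pi_*(v_0) \prec \pi_*(v_1) \prec \cdots \prec \pi_*(v_n) = \pi_*(v_0)$, contradicting the irreflexivity of $\prec$. Thus $\tuple{V, E_s}$ is acyclic and $\mathsf{isDSound}(q,\dat_\K,\tau)$ returns \true. The main obstacle is the self-loop exclusion: both the induction step for $\sim$ and the acyclicity argument must rule out the degenerate case $\pi_*(s) = \pi_*(s')$ with $\SELF_R(\pi_*(s)) \in I$, and this is exactly what the extra conjunct in the definition of aux-simple atoms is tailored to provide.
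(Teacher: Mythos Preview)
Your proposal is correct and follows essentially the same route as the paper: prove the stronger invariant $s\sim t \Rightarrow \pi_*(s)=\pi_*(t)$ by induction on the $\sim$-derivation (handling the fork step via Lemma~\ref{lemma:univ}(1) after excluding the self-loop case through the aux-simple side condition and Lemma~\ref{lemma:homomorphism}), then deduce condition~\ref{spur:cond1} by applying $\delta$, and finally obtain acyclicity of $\tuple{V,E_s}$ from $\pi_*(v')\prec\pi_*(v)$ along every $E_s$-edge. The only cosmetic differences are that the paper phrases the self-loop exclusion more tersely and works directly with atoms of $q_\sim$ in the acyclicity argument rather than pulling back to $q$; also, in your fork step the clause should read $\tau(s)=\tau(s')$ (not $\tau(s)=\tau(t')$), and note that this clause is conditional, so the clean way to exclude a self-loop is the case split ``$\pi_*(s)\neq\pi_*(s')$'' versus ``$\pi_*(s)=\pi_*(s')$, whence $\tau(s)=\tau(s')$''.
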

\begin{proof}
    We next prove that $\mathsf{isDSound}(q, \dat_{\K}, \tau)$ return \true by
    showing that the two conditions of Definition \ref{def:dsound} are
    satisfied.

    \smallskip

    (Condition \ref{spur:cond1}) We prove that, for each ${s \sim t}$, we have
    ${\tau(s) = \tau(t)}$ and ${\pi_*(s) = \pi_*(t)}$. We proceed by induction
    on the number of steps required to derive ${s\sim t}$. For the base case,
    the empty relation $\sim$ clearly satisfies the two properties. For the
    inductive step, consider an arbitrary relation $\sim$ obtained in $n$ steps
    that satisfies these constraints; we show that the same holds for all
    constraints derivable from $\sim$. We focus on the $(\mathsf{fork})$ rule,
    as the derivation of ${s \sim t}$ due to reflexivity, symmetry, or
    transitivity clearly preserves the required properties. Let $s_1'$, $s_2$,
    $s_2'$, and $s_2$ be arbitrary terms in \terms{q}, and let $R_1$ and $R_2$
    be arbitrary roles such that ${s_1' \sim s_2'}$ is obtained in $n$ steps,
    atoms $R_1(s_1,s_1')$ and $R_2(s_2,s_2')$ occur in $q$ and are aux-simple,
    and ${\tau(s_2')\in\aux{\dat_{\K}}}$. By the definition of $\tau$, we have
    $\pi(s_2')\not\in\indnames$. Moreover, by the definition of aux-simple
    atom, for $i=1,2$ we have ${s_i\neq s_i'}$, role $R_i$ is simple, and
    ${\tau(s_i) = \tau(s_i')}$ implies that ${\SELF_{R_i}(\tau(s_i')) \not\in
    J}$. By the inductive hypothesis, we have ${\tau(s_1') = \tau(s_2')}$ and
    ${\pi_*(s_1') = \pi_*(s_2')}$. By Lemma \ref{lemma:homomorphism}, atoms
    ${\setof{R_1(\pi_*(s_1), \pi_*(s_1')), R_2(\pi_*(s_2), \pi_*(s_2'))}
    \subseteq \lpmodel}$ are not self-loops. Moreover, since $R_1$ and $R_2$
    are simple roles, by property 1 in Lemma \ref{lemma:univ}, we have
    ${\pi_*(s_1) = \pi_*(s_2)}$. Therefore, ${\pi_*'(s_1) = \pi_*'(s_2)}$. By
    the construction of $\tau'$, we have ${\tau'(s_1) = \tau'(s_2)}$, and so
    ${\tau(s_1) = \tau(s_2)}$.

    \smallskip

    (Condition \ref{spur:cond2}) We show that $\tuple{V,E_s}$ is a DAG. Assume
    the opposite; hence, vertices ${v_0, \ldots, v_m \in V}$ with $v_m = v_0$
    exist such that $m > 0$ and ${\tuple{v_{i-1},v_{i}} \in E_s}$ for each
    $i\in \interval{1}{m}$. By the definition of $E_s$, we have ${E_s\subseteq
    V\times (V\cap \vars{q})}$. Thus, for each $i \in\interval{0}{m}$ we have
    ${v_i \not\in \setind{\dat_\K}}$, and so $\tau(v_i)\in\aux{\dat_\K}$. Consider an
    arbitrary ${i \in \interval{1}{m}}$ and the corresponding edge
    ${\tuple{v_{i-1},v_{i}} \in E_s}$. By the definition of $E_s$, a role $R_i$
    exists such that $R_i(v_{i-1},v_{i})$ is an aux-simple atom in $q_{\sim}$.
    By the definition of aux-simple atom, we have ${v_{i-1}\neq v_i}$, role
    $R_i$ is simple, and ${\tau(v_{i-1}) = \tau(v_i)}$ implies that
    $\SELF_{R_i}(\tau(v_i)) \not\in J$. By Lemma \ref{lemma:homomorphism} and
    by the construction of $\tau$, atom ${R_i(\pi_*(v_{i-1}),\pi_*(v_i))\in
    \lpmodel}$ is not a self-loop. Since $\tau(v_i)\in\aux{\dat_{\K}}$, we have
    ${\pi_*(v_i)\not\in\indnames}$. Then, by property 1 in Lemma
    \ref{lemma:univ}, we have ${\pi_*(v_{i-1}) \prec \pi_*(v_i)}$. Thus,
    $\pi_*(v_0) \prec \pi_*(v_m)$ which contradicts ${v_m = v_0}$.
\end{proof}

We are now ready to prove the completeness claim.

\begin{lemma}
    A nondeterministic computation exists such that $\mathsf{isSound}(q, \dat_{\K}, \tau)$ returns \true.
\end{lemma}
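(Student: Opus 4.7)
The plan is to drive each nondeterministic choice of $\mathsf{isSound}$ from the substitution $\pi_*$ constructed above, which satisfies $\pi_*(q)\subseteq I$ and $\delta(\pi_*(z))=\tau(z)$ for each $z\in\vars{q}$. Lemma \ref{lemma:complete-dsound} already ensures step~1 passes; if every atom of $q_\sim$ is good or aux-simple then step~2 returns $\true$ and we are done, so I assume otherwise. I first guess the variable renaming $\sigma$ by selecting, for each $v\in V\cap\vars{q}$, a canonical representative of the class $\setof{v' \in V\cap\vars{q} \mid \pi_*(v')=\pi_*(v)}$. Conditions~1 and~2 of Definition~\ref{def:var-renaming} are immediate, and condition~3 follows from Lemma~\ref{lemma:univ} property~1, which characterises aux-simple atoms as immediate $\prec$-edges in the $I$-forest: hence $\sigma(E_s)$ can contain neither a cycle (by irreflexivity of $\prec$) nor two edges sharing a target (since each $I$-term has a unique immediate $\prec$-predecessor).

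Next, I build the skeleton $\E$ by choosing, for each $v\in\sigma(V)\cap\vars{q}$, the $\prec$-closest element $w$ on the path from $\pi_*(v)$ to its root in $I$ such that either $w=\pi_*(u)$ for some $u\in\sigma(V)\cap\vars{q}$ distinct from $v$, or $w\in\indnames$ is the root of that tree (so $w\in\setind{\dat_\K}$ by Lemma~\ref{lemma:equality}). This $\E$ is a forest rooted in individuals; the bound $\sigma(E_s)\subseteq\E$ uses the direct-successor characterisation of aux-simple atoms, while $\E\subseteq\sigma(E_t)$ follows by applying Lemma~\ref{lemma:homomorphism} property~H\ref{hom:cond3} along each $I$-chain to obtain a $\directedge{R_i}$-chain in $J$. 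For each non-good, non-aux-simple atom $R(s,t)\in\sigma(q_\sim)$ the atom $R(\pi_*(s),\pi_*(t))$ lies in $I$ and is not a self-loop, so Lemma~\ref{lemma:univ} property~2 supplies a role $P\subrole R$ and a chain $w_0,\ldots,w_m=\pi_*(t)$ that I use to guess $P$ in step~8; the side condition $\dat_\K\models P(\tau(s),\tau(t))$ is Lemma~\ref{lemma:homomorphism} property~H\ref{hom:cond2}. If $\pi_*(s)\prec\pi_*(t)$ then $s$ reaches $t$ in $\E$ by construction, and either $\tuple{s,t}\in\E$ when $m=1$ or $\TRANS{P}\in\T$ when $m>1$, so step~9 passes; otherwise $\TRANS{P}\in\T$ is forced by Lemma~\ref{lemma:univ} property~2a, the root $a_t$ reaching $t$ in $\E$ coincides with $\delta(w_0)$, and step~14's check $\dat_\K\models P(\tau(s),a_t)$ again holds by property~H\ref{hom:cond2}.

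The main obstacle is the $\mathsf{exist}$ check at step~17, because $L(v',v)$ may be populated by several atoms whose requirements must be met simultaneously by a single chain of $\aux{\dat_\K}$-individuals in $J$. The decisive observation is that the $I$-chain of direct successors corresponding to each $\E$-edge $\tuple{v',v}$ is \emph{unique}, so every atom contributing a role to $L(v',v)$ must use this same chain; applying $\delta$ and Lemma~\ref{lemma:homomorphism} property~H\ref{hom:cond3} then yields one shared chain $\tau(v')=\delta(w_a),\ldots,\delta(w_b)=\tau(v)$ of $\aux{\dat_\K}$-individuals satisfying $\directedge{R}$ for every $R\in L(v',v)$. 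Finally, if some $S\in L(v',v)$ has $\TRANS{S}\notin\T$, then Lemma~\ref{lemma:univ} property~2a forces the contributing chain, and hence the shared chain, to have length one, which is exactly the constraint $n=1$ imposed by the definition of $\mathsf{exist}$.
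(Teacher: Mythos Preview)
Your proposal is correct and follows essentially the same approach as the paper's own proof: you drive the guesses of $\sigma$, $\S$, and each role $P$ from the witness substitution $\pi_*$ and the $\prec$-structure of $I$, invoking Lemma~\ref{lemma:univ} for the structural facts and Lemma~\ref{lemma:homomorphism} to transfer them to $J$. The paper organises the verification of step~17 around an explicit ``compatibility'' predicate $(\lozenge)$ for roles in $L(v',v)$, but this is just a packaging of your observation that the unique $I$-chain underlying each skeleton edge witnesses $\mathsf{exist}$ for all roles simultaneously.
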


\begin{proof}
Recall that $\pi_*(q)\subseteq I$ and that $\tau(q)\subseteq J$. By Lemma
\ref{lemma:complete-dsound}, the condition in step $1$ in Algorithm
\ref{algo:issound} is not satisfied. If each binary atom $R(s, t)$ occurring in
$q_{\sim}$ is either good or aux-simple, then our algorithm returns \true in
step $2$; hence, in the rest of this proof, we assume that this is not the case.

\smallskip

For the variable renaming $\sigma$ in step $3$, for each variable $z \in V$,
let $\sigma(z)$ be an arbitrary, but fixed, variable $z'\in V$ such that
$\pi_*(z) = \pi_*(z')$. It is straightforward to see that
${\pi_*(\sigma(q_{\sim})) \subseteq I}$.

\smallskip

For the skeleton $\S = \tuple{\V, \E}$ in step $3$, let $\V = \sigma(V)$ and
let $\E$ be the smallest set containing ${\tuple{v',v}\in \E}$ for all
${v',v\in\V}$ such that ${\pi_*(v') \prec \pi_*(v)}$ and no vertex ${v''\in
\V}$ exists such that ${\pi_*(v') \prec \pi_*(v'')\prec \pi_*(v)}$. By the
definition of $\prec$, graph $\tuple{\V,\E}$ is a forest rooted in
$\V\cap\setind{\dat_{\K}}$.

We next show that for each $\tuple{v', v}\in\sigma(E_s)$, we have
$\tuple{v',v}\in\E$. Consider an arbitrary edge $\tuple{v',v} \in\sigma(E_s)$.
Then vertices $u'$ and $u$ exist in $V$ such that $\tuple{u',u}\in E_s$,
$\pi_*(u') = \pi_*(v')$, and $\pi_*(u) = \pi_*(v)$. By the definition of $E_s$,
role $R$ exist such that $R(u', u)$ is an aux-simple atom in $q_{\sim}$. By the
definition of aux-simple atom, we have $\tau(u)\in\aux{\dat_\K}$, ${u' \neq
u}$, role $R$ is simple, and ${\tau(u') = \tau(u)}$ implies that
$\SELF_{R}(\tau(u)) \not\in J$. By Lemma \ref{lemma:homomorphism} and by the
construction of $\tau$, atom ${R(\pi_*(u),\pi_*(u))\in \lpmodel}$ is not a
self-loop. Since $\tau(u)\in\aux{\dat_{\K}}$, we have ${\pi_*(u) \not \in
\indnames}$. Then, by property 1 in Lemma \ref{lemma:univ}, we have $\pi_*(u)$
is of the form $f_{T,A}^B(\pi_*(u'))$. Thus, $\pi(u')\prec \pi(u)$ and no
vertex $v''\in\V$ exists such that $\pi(u')\prec \pi(v'') \prec \pi(u)$. Since
$\pi_*(u) = \pi_*(v)$ and $\pi_*(u') = \pi_*(v')$, we have $\tuple{v',v}\in\E$,
as required. Therefore, $\sigma(E_s)\subseteq \E$. Please note that since
$\tuple{\V, \E}$ is a forest, so is $\tuple{\sigma(V), \sigma(E_s)}$, as
required by condition $3$ in Definition \ref{def:var-renaming}.

It remains to show that each edge $\tuple{v',v}\in \E$ occurs in $\sigma(E_t)$.
Consider an arbitrary edge $\tuple{v',v} \in \E$. By the definition of $\E$, we
have $\pi_*(v')\prec \pi_*(v)$, and so $\pi_*(v)$ is a functional term. Then
let $w_0,\ldots,w_k$ be terms such that $w_0 = \pi_*(v')$, $w_k = \pi_*(v)$,
and $w_i$ is of the form ${f_{T_i,A_i}^{B_i}(w_{i-1})}$ for each $i \in
\interval{1}{k}$. Note that these terms are uniquely defined by the edge, and
that, by the construction of $I$, for each $i \in \interval{1}{k}$, we have
$T_i(w_{i-1}, w_i) \in I$. By Lemma \ref{lemma:homomorphism}, we have
$\directedge{T_i}(\delta(w_{i-1}), \delta(w_i)) \in J$. By the definition of
$\tau$, we have ${\delta(w_0) = \tau(v')}$ and ${\delta(w_k) = \tau(v)}$. Thus,
$\tuple{v', v} \in \sigma(E_t)$, as required by Definition \ref{def:skeleton}.

Finally, consider an edge $\tuple{v',v}\in\E$ and let $w_0,\ldots,w_k$ be terms
uniquely associated with this edge. Then  a role $R$ is \emph{compatible} with $\tuple{v,v'}$ if
$\directedge{R}(\delta(w_{i-1}), \delta(w_i)) \in J$ for each
$i\in\interval{1}{k}$, and $\TRANS{R}\not\in\T$ implies that $k =1$. In the
rest of this proof we will show the following property.
\begin{itemize}
    \item[($\lozenge$)] Each role $R \in L(v',v)$ is compatible with the edge
    $\tuple{v',v}$.
\end{itemize}
By the definition of function $\mathsf{exist}$ (Definition \ref{def:exist}) and
the above definition of compatibility, property $(\lozenge)$ implies that the
condition in step $17$ is not satisfied for edge $\tuple{v',v} \in \E$.

\smallskip

For the loop in step $6$; let $R(s,t)$ be an arbitrary aux-simple atom in
$\sigma(q_{\sim})$. By the definition of aux-simple atom, we have
$\tau(t)\in\aux{\dat_\K}$, role $R$ is simple, and $s \neq t$. By the
definition of $\sigma$, we have $\pi_*(s) \neq \pi_*(t)$. Thus,
$R(\pi_*(s),\pi_*(t))\in I$ is not a self-loop. Since $R$ is simple, by
property $1$ in Lemma \ref{lemma:univ}, $\pi_*(t)$ is of the form
$f_{T,A}^{A_1}(\pi_*(s))$. By Lemma \ref{lemma:homomorphism}, we have
$\directedge{R}(\delta(\pi_*(s)), \delta(\pi_*(t))) \in I$. Let $w_0,\ldots,
w_k$ be the terms associated with $\tuple{s,t}\in\E$. By the form of
$\pi_*(t)$ and since $w_0 = \pi_*(s)$ and $w_k = \pi_*(t)$, we have $k=1$.
Thus, property ($\lozenge$) is satisfied.

\smallskip

For the loop in steps $7$--$15$; let $R(s,t)$ be an arbitrary atom in
$\sigma(q_{\sim})$ that is neither good nor aux-simple. We next determine the
nondeterministic choices that preserve $(\lozenge)$ in step $15$, and that
satisfy the conditions in steps $8$, $9$, and $14$. By assumption, we have
$R(\pi_*(s),\pi_*(t)) \in I$. Since $R(s,t)$ is not good, we have
$\tau(t)\in\aux{\dat_{\K}}$, and either $s\neq t$ or $\SELF_{R}(\tau(t))\not
\in J$. By Lemma \ref{lemma:homomorphism} and by the definition of $\tau$, atom
$R(\pi_*(s),\pi_*(t)) \in I$ is not a self-loop.
Hence, a role $P$ and terms ${w_0, \ldots, w_m}$ with $w_m = \pi_*(t)$ exist in
$\domain{I}$ that satisfy property $2$ in Lemma \ref{lemma:univ}. Since
$P\subrole R$ and by properties {\it (2c)} and {\it (2d)}, we have $P(\pi_*(s),
\pi_*(t))\in I$. By Lemma \ref{lemma:homomorphism} and by the construction of
$\tau$, we have $P(\tau(s), \tau(t))\in J$; consequently, the conditions in
step $8$ are satisfied. Assume that $\TRANS{P} \not \in \T$. By property $2$
in Lemma \ref{lemma:univ}, we have $m =1$ and $\pi_*(v)\prec \pi_*(t)$. Thus,
$\pi_*(t)$ is of the form $f_{T,A}^B(\pi_*(s))$. By the definition of skeleton
$\S$, we have $\tuple{s,t}\in \E$; thus, the condition in step $9$ is not
satisfied. Next, we consider two cases.
\begin{itemize}
    \item $s$ reaches $t$ in \E. It follows that $\pi_*(s)\prec \pi_*(t)$. Let
    $v_0,\ldots, v_n$ be the unique path connecting $s$ to $t$ in \S. Since
    ${\pi_*(s)\prec \pi_*(t)}$, we have $w_0 = \pi_*(v_0)$ and $w_m = \pi_*(v_n)$. Thus,
    for each $i \in \interval{1}{n}$, a unique index $\ell_i$ exists such that
    ${\pi_*(v_i) = w_{\ell_i}}$. By property {\it (2d)} in Lemma \ref{lemma:univ}
    and by Lemma \ref{lemma:homomorphism}, role $P$ is compatible with
    $\tuple{v_{i-1}, v_i} \in \E$.

    \item $s$ does not reach $t$ in \E. Hence, $\pi_*(s) \not \prec \pi_*(t)$.
    Let $a_t$ be the root of $t$ in $\E$, and let $v_0,\ldots, v_n$ be the
    unique path connecting $a_t$ to $t$ in $\E$. Since $\pi_*(s)\not \prec
    \pi_*(t)$, we have $w_0 =a_t$, $w_0 \prec w_m$, and $w_m = \pi_*(v_n)$;
    moreover, $\TRANS{P}\in\T$, and $P(\pi_*(s), a_t) \in I$. Thus, for each $i
    \in \interval{1}{n}$, a unique index $\ell_i$ exists such that ${\pi_*(v_i)
    = w_{\ell_i}}$. By Lemma \ref{lemma:homomorphism}, we have $P(\tau(s), a_t)
    \in J$, so condition in step $17$ is not satisfied. Then, by property {\it
    (2d)} in Lemma \ref{lemma:univ} and by Lemma \ref{lemma:homomorphism}, role
    $P$ is compatible with ${\tuple{v_{i-1}, v_i} \in \E}$.\qedhere
\end{itemize}
\end{proof}

\section{Proof of Theorem \ref{th:complexity-ubound}}

Let $\K = \tuple{\T,\A}$ be a satisfiable \elhso KB and let $\dat_\K$ be the
datalog program for $\K$; furthermore, let $q = \exists \vec{y}. \psi(\vec{x},
\vec{y})$ be a CQ, and let $\tau$ be a candidate answer for $q$ and $\dat_\K$.
\ucomplexity*
\begin{proof}

    Please note that since the number of variables occurring in each rule in
    $\dat_{\K}$ is fixed, we can compute the set of all consequences of
    $\dat_\K$ in polynomial time~\cite{DBLP:journals/csur/DantsinEGV01}. Thus
    all the following operations can be implemented to run in polynomial time:
    \begin{itemize}

        \item computing sets $\aux{\dat_\K}$ and $\setind{\dat_{\K}}$,

        \item computing the connection graph $\mathsf{cg}$ for $q$ and $\tau$,

        \item given two individuals $u'$ and $u$ and a set of role $L$,
        checking whether $\mathsf{exist}(u',u,L)$ returns \true, and

        \item checking whether a binary atom is good or aux-simple w.r.t.\
        $\tau$.
    \end{itemize}

    Next, we argue that we can compute relation $\sim$ in polynomial time. As
    stated above, we can evaluate in polynomial time the precondition of the
    $(\mathsf{fork})$ rule. In addition, the size of relation $\sim$ is bounded
    by $|\terms{q}|^2$, the rules used to compute it are monotonic, and each
    inference can be applied in polynomial time, so  the claim follows.

    Also, we can check in linear time whether a directed graph is a acyclic by
    searching for a topological ordering of its
    vertices~\cite{DBLP:books/daglib/0023376}. Therefore, condition
    \ref{spur:cond2} in Definition \ref{def:dsound} can be checked in
    polynomial time. Therefore, steps $1$ and $2$ in Algorithm
    \ref{algo:issound} run in time polynomial in the input size, and property 2
    holds.

    Since steps $3$--$15$ in Algorithm \ref{algo:issound} can all clearly be
    implemented to run in nondeterministic polynomial time in the input size,
    property 1 also holds.

    For property 3, assume that the TBox $\T$ and the query $q$ are fixed. Then
    the set of all consequences of $\dat_\K$ can be computed in time polynomial
    in $\A$. Given that the number of variables occurring in $q$ is fixed, the
    number of guessing steps required in steps $3$ and $4$ is fixed; also, the
    number of alternatives for these steps is linear in the size of $\A$. Thus,
    steps $3$ and $4$ require polynomial time. Moreover, the maximum number of
    iterations of the for-loop in steps $7$–-$15$ is fixed. The number of
    alternatives for the guessing steps in line $8$ is fixed as well.
    Therefore, steps $7$–-$15$ require time polynomial in the size of $\A$. All
    other steps can clearly be implemented in time polynomial in the size of
    $\A$, thus $\mathsf{isSound}$ runs in time polynomial in the size of $\A$.
    \end{proof}

\section{Proof of Theorem \ref{th:complexity-lbound}}

\lbound*
\begin{proof}

    The proof is by reduction from the \np-hard problem of checking the
    satisfiability of a 3CNF formula~\cite{Garey:1979:CIG:578533}. Let $\varphi =
    \bigwedge_{j=1}^{m} C_j$ be a 3CNF formula over variables
    $\setof{v_1,\ldots, v_n}$, where each $C_j$ is a set of three literals $C_j
    = \setof{l_{j,1}, l_{j,2}, l_{j, 3}}$. A sequence $\nu = l_{j_1,
    k_1},\ldots, l_{j_\ell, k_\ell}$ of literals from $\varphi$ is
    \emph{consistent} if $\setof{v_i, \neg v_i}\not \subseteq \nu$ for each
    $i\in\interval{1}{n}$. Such a $\nu$ is a \emph{truth assignment} for $\varphi$
    if for each $j\in\interval{1}{m}$, a literal $l\in \nu$ exists such that $l
    = l_{j,k}$ for some $k\in\interval{1}{3}$. Then $\varphi$ is satisfiable if
    and only if there exists a consistent truth assignment for $\varphi$.

    Let $\varphi$ be a 3CNF formula. We next define an \elhso KB $\K_\varphi$
    and a Boolean CQ $q_\varphi$ such that $\K_\varphi$ does not contain axioms
    of type $2$, and $\K_\varphi \models q_\varphi$ if and only if there exists
    a consistent truth assignment for $\varphi$. Let $\dat_\varphi$ be the
    translation of $\K_\varphi$ into datalog. By Theorem \ref{th:correctness}
    and since $\K_\varphi$ does not contain axioms of type $2$, we have
    $\K_\varphi \models q_\varphi$ if and only if a candidate answer $\tau$ for
    $q_\varphi$ and $\dat_{\varphi}$ exists such that
    $\mathsf{isSound}(q_\varphi, \dat_{\varphi}, \tau)$ returns \true. Then, to
    prove the theorem, we show that there exists a unique candidate answer
    $\tau_\varphi$ for $q_\varphi$ and $\dat_{\varphi}$. Thus, $\K_\varphi
    \models q_\varphi$ if and only if $\mathsf{isSound}(q_\varphi,
    \dat_{\varphi}, \tau_\varphi)$ returns \true, hence,
    $\mathsf{isSound}(q_\varphi, \dat_{\varphi}, \tau_\varphi)$ returns \true
    if and only if $\varphi$ is satisfiable.

    For convenience, in the following we will specify $\K_\varphi$ using its
    equivalent formalisation as a rule base $\Xi_{\varphi}$. Moreover, rule
    base $\Xi_{\varphi}$ will not contain equality rules; consequently
    $\Xi_{\varphi}$ has a unique universal interpretation $I$. We will present
    our construction of $\Xi_{\varphi}$ in stages, and for each we will
    describe how it affects the universal interpretation $I$.

    Our encoding of $\Xi_{\varphi}$ uses a fresh individual $a$, fresh
    concepts $A$ and $G$, fresh roles $R$ and $T$, a fresh concept $L_{j,k}$
    and a fresh role $S_{j,k}$ uniquely associated to each literal $l_{j,k}$, a
    fresh concept $C_j$ uniquely associated to each clause $C_j$, and fresh
    roles $P_i$, $N_i$, and $T_i$ uniquely associated to each variable $v_i$.

    Before presenting $\Xi_{\varphi}$, we need a couple of definitions. Given
    two terms $w'$ and $w$ from $I$, and a word $\rho = T_1 \cdots T_\ell$ over
    \rolenames, we write $\rho(w', w)\in I$, if terms $w_0,\ldots,w_\ell$ with
    $w_0 = w'$ and $w_m =w$ exist such that $T_{i}(w_{i-1}, w_i)\in I$ for each
    $i\in\interval{1}{\ell}$. In the following, we uniquely associate to each
    sequence $\nu = l_{j_1, k_1},\ldots, l_{j_\ell, k_\ell}$ of literals from
    $\varphi$ the word ${\rho_\nu = R \cdot S_{j_1, k_1} \cdot R \cdot S_{j_2,
    k_2}\cdots R\cdot S_{j_\ell, k_\ell} \cdot R}$.

    We next present $\Xi_{\varphi}$ which consists of four parts.

    The first part of rule base $\Xi_{\varphi}$ contains atom \eqref{eq:hard:1} and
    rules \eqref{eq:hard:2}--\eqref{eq:hard:5}. Then for each sequence $\nu$ of
    literals from $\varphi$, a term $w_\nu$ exists in $I$ such that $\rho_\nu(a,
    w_\nu) \in I$ and $G(w_\nu)\in I$.
    \begin{align}
        \label{eq:hard:1}                        & A(a)                                                                      & \\
        \label{eq:hard:2} A(x)\rightarrow        & \exists z. R(x,z) \wedge C_j(z)                           & \forall j\in\interval{1}{m} \\
        \label{eq:hard:3} A(x)\rightarrow        & \exists z. R(x,z) \wedge G(z)                             &  \\
        \label{eq:hard:4} C_j(x)\rightarrow      & \exists z. S_{j,k}(x,z) \wedge L_{j,k}(z)                 & \forall j\in\interval{1}{m}\; \forall k\in\interval{1}{3} \\
        \label{eq:hard:5} L_{j,k}(x)\rightarrow  & A(x)                                                      & \forall j\in\interval{1}{m}\; \forall k\in\interval{1}{3}
    \end{align}

    The second part of rule base $\Xi_{\varphi}$ contains rules
    \eqref{eq:hard:6} and \eqref{eq:hard:7}. Consider an arbitrary literal
    $l_{j,k}$ and arbitrary terms $w'$ and $w$ in $I$ such that $S_{j,k}(w',
    w)\in I$. Then, for each variable $v_i$, we have $P_i(w', w)\in I$ if and
    only if sequence $v_i, l_{j,k}$ is consistent; and $N_i(w', w)\in I$ if and
    only if  sequence $\neg v_i, l_{j,k}$ is consistent.
    \begin{align}
        \label{eq:hard:6} S_{j,k}(x,y)   & \rightarrow P_i(x,y)                                             & \forall j\in\interval{1}{m},\forall k\in\interval{1}{3}, \forall i\in\interval{1}{n} \text{ with } l_{j,k} = v_i \text{ or $l_{j,k}$ is not over } v_i\\
        \label{eq:hard:7} S_{j,k}(x,y)   & \rightarrow N_i(x,y)                                             & \forall j\in\interval{1}{m},\forall k\in\interval{1}{3}, \forall i\in\interval{1}{n} \text{ with } l_{j,k} = \neg v_i \text{ or $l_{j,k}$ is not over } v_i
    \end{align}

    The third part of rule base $\Xi_{\varphi}$ contains rules
    \eqref{eq:hard:8}--\eqref{eq:hard:11}. Then for each sequence $\nu$ of
    literals from $\varphi$ with ${\rho_\nu(a, w_\nu) \in I}$ and each
    $i\in\interval{1}{n}$, we have ${P_i(a, w_\nu)\in I}$ if and only if ${\nu,
    v_i}$ is consistent; and ${N_i(a, w_\nu)\in I}$ if and only if ${\nu, \neg
    v_i}$ is consistent.
    \begin{align}
        \label{eq:hard:8} R(x,y)      & \rightarrow P_i(x,y)                                                 & \forall i\in\interval{1}{n}\\
        \label{eq:hard:9} R(x,y)      & \rightarrow N_i(x,y)                                                 & \forall i\in\interval{1}{n}\\
        \label{eq:hard:10} P_i(x,y)\wedge P_i(y,z)      & \rightarrow P_i(x,z)                                                 & \forall i\in\interval{1}{n}\\
        \label{eq:hard:11} N_i(x,y)\wedge N_i(y,z)      & \rightarrow N_i(x,y)                                                 & \forall i\in\interval{1}{n}
    \end{align}

    The fourth part of rule base $\Xi_{\varphi}$ contains rules
    \eqref{eq:hard:12}--\eqref{eq:hard:14}. Then for each sequence $\nu$ of literals
    from $\varphi$ with $\rho_\nu(a, w_\nu) \in I$ and each $i\in\interval{1}{n}$,
    we have $T_i(a, w_\nu)\in I$ if and only if $\setof{v_i, \neg
    v_i}\not\subseteq \nu$; furthermore, for each clause $C_j$, a term $w_{j}$
    exists such that $T(w_{j}, w_\nu) \in I$ if and only if an index
    $k\in\interval{1}{3}$ exists such that $l_{j,k} \in \nu$.
    \begin{align}
        \label{eq:hard:12} P_i(x,y)   & \rightarrow T_i(x,y)                                               & \forall i\in\interval{1}{n} \\
        \label{eq:hard:13} N_i(x,y)   & \rightarrow T_i(x,y)                                               & \forall i\in\interval{1}{n}\\
        \label{eq:hard:14} L_i(x,y)   & \rightarrow T(x,y)                                                  & \forall i\in\interval{1}{n}
    \end{align}
    Query $q_\varphi$ is given in \eqref{eq:hard:15}. Then $\K_\varphi \models
    q_\varphi$ if and only if a sequence $\nu$ of literals from $\varphi$
    exists such that $T_i(a, w_\rho) \in I$ for each $i\in\interval{1}{n}$,
    and, for each $j\in\interval{1}{m}$, a term $w_j$ exists such that
    $C_j(w_j) \in I$ and $T(w_j, w_{\rho})\in I$; hence, $\K_\varphi \models
    q_\varphi$ if and only if there exists a consistent truth assignment $\nu$
    for $\varphi$.
    \begin{align}
        \label{eq:hard:15} q_\varphi = \exists y \exists z_1,\ldots \exists z_{m}.\; G(y) \wedge \bigwedge_{i=1}^n T_i(a,y)  \wedge \bigwedge_{j=1}^{m} C_j(z_j) \wedge T(z_{j}, y)
    \end{align}

    Program $\dat_{\varphi}$ contains all the atoms and the rules in $\Xi_{\varphi}$
    apart from  rules \eqref{eq:hard:2}--\eqref{eq:hard:4} which
    are replaced by rules \eqref{eq:hard:16}--\eqref{eq:hard:18}.
    \begin{align}
        \label{eq:hard:16} A(x)        & \rightarrow  R(x,o_{R,C_j}) \wedge C_j(o_{R,C_j})                                              & \forall j\in\interval{1}{m} \\
        \label{eq:hard:17} A(x)        & \rightarrow  R(x,o_{R,G}) \wedge G(o_{R,G})                                                  &\\
        \label{eq:hard:18} C_j(x)      & \rightarrow S_{j,k}(x,o_{S_{j,k},  L_{j,k}}) \wedge L_{j,k}(o_{S_{j,k},  L_{j,k}})             & \forall j\in\interval{1}{m}, \forall k\in\interval{1}{3}
    \end{align}

    We next define substitution $\tau_\varphi$, and we show that
    $\dat_{\varphi} \models \tau_\varphi(q_\varphi)$. Substitution $\tau_\varphi$ is given in
    \eqref{eq:hard:19}.
    \begin{align}
        \label{eq:hard:19} \tau(y) = o_{R, G} \text{ and } \tau(z_j) = o_{R,C_j} \qquad \forall j\in\interval{1}{m}
    \end{align}
    Consider an arbitrary $i\in\interval{1}{n}$; we show that ${\dat_\varphi
    \models T_i(a, \tau(y))}$. By atom \eqref{eq:hard:1}, and by rule
    \eqref{eq:hard:17} we have ${\dat_\varphi \models R(a, \tau(y))}$. But
    then, by rules \eqref{eq:hard:8}, \eqref{eq:hard:9}, \eqref{eq:hard:12}
    \eqref{eq:hard:13}, we immediately have ${\dat_\varphi \models T_i(a,
    \tau(y))}$, as required. Next, consider an arbitrary $j\in\interval{1}{m}$;
    we show that ${\dat_\varphi \models T(\tau(z_j), \tau(y))}$. Please note
    that by atom \eqref{eq:hard:1} and by rules \eqref{eq:hard:16}, we have
    ${\dat_\varphi \models C_j(\tau(z_j))}$. Consider an arbitrary
    literal $l_{j,k}$ in $C_j$. By rules \eqref{eq:hard:18} and
    \eqref{eq:hard:5}, there exists an individual $u$ such that $\dat_\varphi
    \models S_{j,k}(\tau(z_j),u)$ and $\dat_\varphi \models A(u)$. Then, by
    rule \eqref{eq:hard:17} we have ${\dat_\varphi \models R(u, \tau(y))}$. By
    rules \eqref{eq:hard:8}--\eqref{eq:hard:14}, we have ${\dat_\varphi \models
    T(\tau(z_j), \tau(y))}$, as required.

    We are left to show that $\tau_\varphi$ is unique. Consider an arbitrary
    candidate answer $\xi$ for $q_\varphi$ and $\dat_\varphi$. Since $G(y)$ is
    an atom in $q$ and only rule \eqref{eq:hard:17} derives assertions over
    $G$, we must have $\xi(y) = o_{R,G}$. Consider an arbitrary
    $j\in\interval{1}{m}$. Due to atom $C_j(z_j)$ in $q$, and only rule
    \eqref{eq:hard:16} derives assertions over concept $C_j$, we must have
    $\xi(y) = o_{R,C_j}$. Thus, $\xi = \tau_\varphi$, as required.
\end{proof}

\section{Proof of Theorem \ref{th:arborescent}}

Let $\K$ be a satisfiable $\elho$ KB, and let $\Xi_\K$ and $\dat_\K$ be the
rule base and the datalog program associated with $\K$, respectively;
furthermore, let $\setind{\dat_\K}$ and $\aux{\dat_\K}$ be as specified in
Definition \ref{def:dat-order}, and let $q$ be an arborescent query. We next
show that function $\mathsf{entails}(\dat_\K, q)$ returns \true if and only if
$\Xi_\K\models q$, and that $\mathsf{entails}(\dat_\K, q)$ runs in time
polynomial in the input size.

To this end, we start with a couple of definitions. Let $I$ and $J$ be
universal interpretations for $\Xi_\K$ and $\dat_\K$, respectively. Moreover,
let $\delta$ be the mapping as specified at the beginning of Section
\ref{sec:proofsat}. For each set $V\in\mathsf{RT}$, let $q_V$ be the
arborescent query obtained from $q$ by replacing each variable $y\in V$ with a
fresh variable $y_V$, and by removing each variable $z$, and all the atoms
involving $z$, in the resulting query such that $z \neq y_V$ and $z$ is not a
descendant of $y_V$ in $\dgraph{q}$.

\begin{lemma}
    Function $\mathsf{entails}(\dat_\K, q)$ returns \true if and only if $\Xi_\K\models q$.
    Furthermore, $\mathsf{entails}(\dat_\K, q)$ runs in time polynomial in the input size.
\end{lemma}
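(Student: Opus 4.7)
The plan is to establish a semantic characterisation of the sets $\mathsf{A}_V$ by reverse induction on the level of $V \in \mathsf{RT}$, and then derive both correctness and polynomial complexity from it. The invariant I would prove is: $u \in \mathsf{A}_V$ if and only if some $w \in \domain{I}$ with $\delta(w) = u$ exists such that, for every $v \in V$, the subtree of $q$ rooted at $v$ admits a homomorphism to $I$ mapping $v$ to $w$. Applied to $V = \{r\}$, this invariant immediately yields both directions of the theorem: if $\mathsf{A}_{\{r\}}$ is nonempty we obtain a homomorphism $q \to I$ and hence $\Xi_\K \models q$, while from any $\pi \colon q \to I$ witnessing $\Xi_\K \models q$ we get $u = \delta(\pi(r)) \in \mathsf{A}_{\{r\}}$.

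For the base case, sets $V \in \mathsf{RT}$ of maximal level satisfy $\pred{V} = \emptyset$, so the subtrees reduce to the unary atoms at $V$; the invariant then follows from Lemmas~\ref{lemma:homomorphism} and~\ref{lemma:dat-embed}, since an individual $u$ satisfies those atoms in $J$ if and only if some $w$ with $\delta(w) = u$ satisfies them in $I$. For the inductive step, the decisive case analysis is whether $u \in \setind{\dat_\K}$ or $u \in \aux{\dat_\K}$, corresponding to whether $w$ is a named individual or an auxiliary element of $I$. In the first sub-case children $y \in \pred{V}$ may be mapped to independent witnesses, and $\mathsf{i}_V$ captures exactly this: for each $y$ the IH provides via $\mathsf{A}_{\{y\}}$ an independent witness, and the required $R$-atoms translate between $J$ and $I$ using property D\ref{dat-cond3} (which applies because $\setind{\dat_\K} \subseteq \indnames$). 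In the second sub-case the \elho canonical model is a forest with named individuals as roots, so the auxiliary $w$ has a \emph{unique} tree-parent $w'$; this forces every $y \in \pred{V}$ to map to that same $w'$, which is exactly the joint witness $u' \in \mathsf{A}_{\pred{V}}$ demanded by $\mathsf{a}_V$.

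Polynomial complexity is immediate once the invariant is established: $|\mathsf{RT}| = O(|\vars{q}|)$, the consequence set of $\dat_\K$ can be materialised in polynomial time since its rules have bounded arity, and each of $\mathsf{c}_V$, $\mathsf{i}_V$, $\mathsf{a}_V$ is then a polynomial-size conjunction of atom lookups. The main obstacle lies in the auxiliary sub-case, where $o_{P,A}$ collapses potentially many canonical-model elements $f_{P,A}^{A_1}(w')$---one per rule $A_1 \ISA \SOME{P}{A}$ applicable at $w'$---into a single datalog individual, so a priori different roles $R \in \roles{y}$ for various $y \in \pred{V}$ could seem to demand different $A_1$'s. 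The key is to invoke property D\ref{dat-cond5}: each $\directedge{R}(u', u) \in J$ forces $P \subrole R$ for the fixed $P$ from $u = o_{P,A}$, so any single $A_1$ with $A_1(w') \in I$ yields a witness $w = f_{P,A}^{A_1}(w')$ satisfying $R(w', w) \in I$ simultaneously for every required role by role-hierarchy propagation in $I$.
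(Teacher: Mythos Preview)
Your proposal is correct and follows essentially the same route as the paper's own proof: reverse induction on the level of $V\in\mathsf{RT}$, with the invariant characterising $\mathsf{A}_V$ via homomorphisms of the subtrees rooted at $V$ into $I$ (this is equivalent to the paper's phrasing using the merged subquery $q_V$), and the inductive step split according to whether $u\in\setind{\dat_\K}$ or $u\in\aux{\dat_\K}$; your handling of the auxiliary case via property~D\ref{dat-cond5} is exactly the paper's argument. One small inaccuracy: $|\mathsf{RT}|$ is in general $O(|\vars{q}|^2)$, not $O(|\vars{q}|)$, since $\mathsf{RT}$ contains $\pred^k(\{y\})$ for each vertex $y$ and each depth $k$; this does not affect your polynomial-time claim.
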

\begin{proof}

    Since $q$ does not contain individuals, we have $\Xi_\K\models q$ if and
    only if a substitution $\pi$ with ${\dom{\pi} = \vars{q}}$ exists such that
    ${\pi(q) \subseteq I}$. In the following, we show that the latter is the
    case if and only if function $\mathsf{entails}(\dat_\K, q)$ returns \true .

    Let $\mathsf{RT}$ be as specified in Definition \ref{def:entails}. Let $M$
    be the largest $n\in\nat$ for which a set $V\in \mathsf{RT}$ exists whose
    level is $n$. By the definition, $M$ is the length of the longest path in
    $\dgraph{q}$, and so it is linearly bounded by the size of $q$.

    Next, we argue that we can compute set $\mathsf{RT}$ in polynomial time.
    This follows from the fact that the rules used to compute it are monotonic,
    each rule can be applied at most $M$ times, and each rule application
    introduces at most $|q|$ sets in $\mathsf{RT}$, and the size of each set is
    linearly bounded by $|q|$.

    By Lemmas \ref{lemma:homomorphism} and \ref{lemma:dat-embed}, and since
    each rule in $\dat_\K$ contains a fixed number of variables, each set
    $V\in\mathsf{RT}$ satisfies the two following properties for each
    $u\in\setind{\dat_\K}\cup \aux{\dat_\K}$ and each term $w\in\domain{I}$
    with $\delta(w) =u$.
    \begin{enumerate}[\it A.]
        \item $u \in \mathsf{c}_V$ if and only if $B(w) \in I$ for
        each concept $B$ such that $B(y)\in q$ for some $y\in V$.
        \item $\mathsf{c}_V$ can be computed in time polynomial in the size of
        $\dat_\K$ and $q$.
    \end{enumerate}

    To prove the lemma, we next show that each set $V\in \mathsf{RT}$ satisfies the
    following properties for each term $u\in\setind{\dat_\K}\cup\aux{\dat_\K}$.
    \begin{enumerate}
        \item $u \in \mathsf{A}_V$ if and only if a substitution $\pi$ with
        $\dom{\pi}=\vars{q_V}$ exists such that $\pi(q_V)\subseteq I$ and
        $\delta(\pi(y_V)) = u$.
        \item $\mathsf{A}_V$ can be computed in time polynomial in the size
        of $\dat_\K$ and $q$.
    \end{enumerate}

    The proof goes by reverse-induction on the level of sets in $\mathsf{RT}$.

    \smallskip

    \basecase Consider an arbitrary set $V\in\mathsf{RT}$ of level $M$. By the
    definition of $\mathsf{A}_V$, we have $\mathsf{A}_V = \mathsf{c}_V$.
    Furthermore, an atom $B(y_V)$ is in $q_V$ if and only if a variable $y\in
    V$ exists such that $B(y)$ in $q$. Properties $1$ and $2$ follow
    immediately from A and B.

    \smallskip

    \indstep Consider an arbitrary $n\in\nat$. Let $V\in\mathsf{RT}$ be an
    arbitrary set of level $n$, and assume that properties $1$ and $2$ hold for
    each set ${W\in \mathsf{RT}}$ of level $n+1$; we show that the properties
    are satisfied by $V$.

    For property $1$, let $u$ be an arbitrary term in $\setind{\dat_\K} \cup
    \aux{\dat_\K}$. By the definition of $\mathsf{A}_V$ we have ${\mathsf{A}_V=
    \mathsf{c}_V \cap (\mathsf{i}_V \cup \mathsf{a}_V)}$. By the definition of
    $q_V$ and from property A, it suffices to show that $u\in (\mathsf{i}_V
    \cup \mathsf{a}_V)$ if and only if a substitution $\pi$ exists such that
    $\pi(q_V)\subseteq I$ and $\delta(\pi(y_V)) = u$. We consider the two
    directions of $1$ separately.

    \smallskip
    $(\Rightarrow)$ Assume that $u \in (\mathsf{i}_V \cup \mathsf{a}_V)$.
    We distinguish two cases.
    \begin{itemize}
        \item $u \in\mathsf{i}_V$. Thus, $u\in\setind{\dat_\K}$ and
        $u\in\indnames$. Consider an arbitrary variable $y\in\pred{V}$ and an
        arbitrary role ${R\in\roles{y}}$. Then, an individual
        ${u'\in\setind{\dat_\K} \cup \aux{\dat_\K}}$ exists such that ${u'\in
        \mathsf{A}_{\setof{y}}}$ and ${R(u', u)\in J}$. By the inductive
        hypothesis, a substitution $\pi_{\setof{y}}$ exists such that
        ${\pi_{\setof{y}}(q_{\setof{y}})\subseteq I}$ and $\delta(w') = u'$,
        where ${w' = \pi_{\setof{y}}(y_{\setof{y}})}$. By Lemma
        \ref{lemma:dat-embed}, we have ${R(w', u) \in I}$. Then let $\pi$ be
        the substitution such that ${\pi(y_V) = u}$ and
        ${\pi_{\setof{y}}\subseteq \pi}$ for each ${y \in \pred{V}}$. Then $\pi(q_V)\subseteq I$, as required.

        \item $u \in\mathsf{a}_V$. Thus, $u\in\aux{\dat_\K}$ and $u$ is of the
        form $o_{P,A}$. Consider an arbitrary role $R$ such that
        $R\in\roles{y}$ for some $y\in \pred{V}$. Then, an individual
        $u'\in\setind{\dat_\K} \cup \aux{\dat_\K}$ exists such that $u'\in
        \mathsf{A}_{\pred{V}}$ and $\directedge{R}(u', u)\in J$. By the
        inductive hypothesis, a substitution $\pi_{\pred{V}}$ exists such that
        $\pi_{\pred{V}}(q_{\pred{V}})\subseteq I$ and $\delta(w') =u'$, where
        $w'=\pi_{\pred{V}}(y_{\pred{V}})$. By Lemma \ref{lemma:dat-embed}, a
        term $w\in\domain{I}$ exists such that $\delta(w) = o_{P,A}$ and $P(w',
        w) \in I$ and $P\subrole R$. Since no rule is applicable to $I$, we
        have $R(w', w) \in I$. Then let $\pi$ be the substitution such that
        $\pi(y_V) = w$, $\pi_{\pred{V}}\subseteq \pi$, and $\pi(y) = w'$ for
        each $y\in\pred{V}$. Then $\pi(q_V)\subseteq I$, as required.
    \end{itemize}

    \smallskip
    $(\Leftarrow)$ Assume that a substitution $\pi$ exists such that
    $\pi(q_V)\subseteq I$ and $\delta(\pi(y_V)) = u$. We distinguish two cases.
    \begin{itemize}
        \item $\pi(y_V)\in\indnames$. By Lemma \ref{lemma:equality}, we have $u
        \in\setind{\dat_\K}$ and $u\in\indnames$. Consider an arbitrary
        variable $y\in\pred{V}$ and an arbitrary role $R\in\roles{y}$. By the
        definition of $q_V$, atom $R(y, y_V)$ occurs in $q_V$. Since $R(\pi(y),
        \pi(y_V)) \in I$, by Lemma \ref{lemma:homomorphism}, we have
        $R(\delta(\pi(y)), \delta(\pi(y_V)))\in J$ and
        ${\delta(\pi(y))\in\setind{\dat_\K}\cup \aux{\dat_\K}}$. Due to the
        construction of $q_V$, $q_{\setof{y}}$ is a subquery of $q_V$; thus,
        $\pi(q_{\setof{y}}) \subseteq I$. Hence, by the inductive hypothesis,
        we have $\delta(\pi(y))\in \mathsf{A}_{\setof{y}}$. Due to $\delta(\pi(y_V)) =u$
        and $u\in\setind{\dat_\K}$, we have $u\in \mathsf{i}_V$, as required.

        \item $\pi(y_V)\not\in\indnames$. By Lemma \ref{lemma:equality}, we
        have $u \in\aux{\dat_\K}$. Consider an arbitrary variable $y\in
        \pred{V}$ and an arbitrary role $R\in\roles{y}$. By the definition of
        $q_V$, atom $R(y, y_V)$ occurs in $q_V$. By assumption, we have
        $R(\pi(y), \pi(y_V))\in I$. Since $\K$ is an \elho knowledge base, role
        $R$ is simple and $\SELF_R(\pi(y_V))\not \in I$. By Lemma
        \ref{lemma:univ}, term $\pi(y_V)$ is of the form $f_{P,A}^B(\pi(y))$.
        Therefore, for each variable $z\in\pred{V}$, we have $\pi(z)=\pi(y)$.
        Furthermore, by Lemma \ref{lemma:homomorphism}, we have
        $\directedge{R}(\delta(\pi(y)), \delta(\pi(y_V)))\in J$ and
        ${\delta(\pi(y))\in\setind{\dat_\K}\cup \aux{\dat_\K}}$. Let
        substitution $\pi_{\pred{V}}$ for $q_{\pred{V}}$ be obtained from $\pi$
        by setting $\pi_{\pred{V}}(y_{\pred{V}}) = \pi(y)$. Then, we have
        $\pi_{\pred{V}}(q_{\pred{V}}) \subseteq I$. Hence, by the inductive
        hypothesis, we have $\delta(\pi(y))\in \mathsf{A}_{\pred{V}}$. Due to
        $\delta(\pi(y_V)) =u$ and $u\in\aux{\dat_\K}$, we have $u\in
        \mathsf{a}_V$, as required.
     \end{itemize}

     For property $2$, we show that $\mathsf{A}_V$ can be computed in time
     polynomial in the size of $\dat_\K$ and $q$. By property B, set
     $\mathsf{c}_V$ can be computed in time polynomial in $q$ and $\dat_\K$,
     hence in the rest of this proof we focus on  sets $\mathsf{i}_V$ and $\mathsf{a}_V$.
     \begin{itemize}
         \item $\mathsf{i}_V$. Please note that the number of variables in
         $\pred{V}$ is bounded by the size of $q$. Then consider an arbitrary
         variable $y\in \pred{V}$. By the inductive hypothesis, set
         $\mathsf{A}_{\setof{y}}$ can be computed in time polynomial in $q$ and
         $\dat_\K$. Then, since each rule in $\dat_\K$ contains a constant
         number of variables, $\mathsf{i}_V$ can also be computed in time
         polynomial in the size of $q$ and $\dat_\K$.

         \item $\mathsf{a}_V$. By the inductive hypothesis, set
         $\mathsf{A}_{\pred{V}}$ can be computed in time polynomial in the size
         of $q$ and $\dat_\K$. Then, since each rule in $\dat_\K$ contains a
         constant number of variables, $\mathsf{a}_V$ can also be computed in time
         polynomial in the size of $q$ and $\dat_\K$.\qedhere
     \end{itemize}
 \end{proof}

\section{Proof of Theorem \ref{th:lowerbound-acyclic}}

In this section, we prove the lower bounds established in Theorem
\ref{th:lowerbound-acyclic}, all of which are proved by reducing the \np-hard
problem of checking the satisfiability of a CNF formula
$\psi$~\cite{Garey:1979:CIG:578533}. For the rest of this section, we fix a CNF
formula $\psi = \bigwedge_{j=1}^{m} C_j$ where each $C_j$ is a clause over
variables $\setof{v_1,\ldots, v_n}$.

\medskip

For convenience, we will assume that \elho TBoxes can contain axioms of the
form $A_1 \ISA \SOME{R}{\setof{a}}$ with $A_1\in\conceptnames\cup\setof{\top}$,
$R$ a role, and $\setof{a}$ a nominal. The translation into rules for this type
of axiom is given by $A_1(x) \rightarrow R(x,a)$. Moreover, in the following we
will specify DL knowledge bases using their equivalent formalisation as rule
bases. Furthermore, all the rules from this section do not contain equality,
and so each rule base has exactly one universal interpretation.

Then to prove the theorem, we will first define a rule base $\Xi_0$
containing only rules of types $1$ and $7$ from Table \ref{table:Xi}, and a
Boolean CQ $q_0$ over $\Xi_0$, after which we will show that
\begin{enumerate}
    \item a rule base $\Xi_1$ and a query $q_1$ exist such that $\Xi_1$ is in
    \elho, query $q_0 \wedge q_1$ is acyclic, and $\psi$ is
    satisfiable if and only if $\Xi_0 \cup \Xi_1 \models q_0
    \wedge q_1$,

    \item a rule base $\Xi_2$ and a query $q_2$ exist such that $\Xi_2$
    contains a single rule of type $8$, query $q_0 \wedge q_2$ is
    arborescent, and $\psi$ is satisfiable if and only if $\Xi_0 \cup \Xi_2
    \models q_0 \wedge q_2$, and

    \item a rule base $\Xi_3$ and a query $q_3$ exist such that $\Xi_3$
    contains a single rule of type $9$, query $q_0 \wedge q_3$ is
    arborescent, and $\psi$ is satisfiable if and only if $\Xi_0 \cup \Xi_3
    \models q_0 \wedge q_3$.
\end{enumerate}

\subsection{Construction of $\Xi_0$ and $q_0$} Our encoding uses a fresh role
$R$, a fresh concepts $G$, fresh concepts $T_i$, $F_i$, and $A_i$ uniquely
associated with each variable $v_i$ in $\psi$, and fresh concepts $C_j$
uniquely associated to each clause $C_j$ in $\psi$. Rule base $\Xi_0$ contains
a ground atom \eqref{eq:xi:psi:atom}, and rules
\eqref{eq:xi:psi:rule1}--\eqref{eq:xi:psi:rule8}. Let $I_0$ be the universal
interpretation of $\Xi_0$, we next describe how the rules in $\Xi_0$ model the
structure of $I_0$. Atom \eqref{eq:xi:psi:atom} and rules
\eqref{eq:xi:psi:rule1}--\eqref{eq:xi:psi:rule6} encode a binary tree of depth
$n+1$ in $I_0$ rooted in individual $a$ in which edges are labelled by role
$R$, each leaf node satisfies concept $G$, and each node $w$ at depth $1\leq
i\leq n$ satisfies exactly one of $T_i$ and $F_i$. Then node $w$ represents a
positive truth assignment to $v_i$, if $T_i(w)\in I_0$; otherwise, it
represents a negative truth assignment to $v_i$. Consequently, a path from $a$
to a leaf node in the tree represents a truth assignment to the variables in
$\psi$. Finally, rules \eqref{eq:xi:psi:rule7} and \eqref{eq:xi:psi:rule8}
ensure that, for each node $w$ at depth $1\leq i\leq n$ in the tree, if the
truth assignment to $v_i$ represented by $w$ makes clause $C_j$ evaluate to
true, then $C_j(w)\in I_0$.
\begin{align}
                & A_0(a)                                          &                                  \label{eq:xi:psi:atom}               \\
    A_{i-1}(x) \rightarrow & \exists z. R(x,z)\wedge T_i(z)  & \forall i\in\interval{1}{n}      \label{eq:xi:psi:rule1}\\
    A_{i-1}(x) \rightarrow & \exists z. R(x,z)\wedge F_i(z)  & \forall i\in\interval{1}{n}      \label{eq:xi:psi:rule2}\\
    T_{i}(x)   \rightarrow & A_i(x)                          & \forall i\in\interval{1}{n}      \label{eq:xi:psi:rule3}\\
    F_{i}(x)   \rightarrow & A_i(x)                          & \forall i\in\interval{1}{n}      \label{eq:xi:psi:rule4}\\
    A_{n}(x)   \rightarrow & \exists z. R(x,z)\wedge G(z)    &                                  \label{eq:xi:psi:rule6}\\
    T_i(x)     \rightarrow & C_j(x)                          & \forall i\in\interval{1}{n}\; \forall j\in\interval{1}{m} \text{ with } v_i\in C_j    \label{eq:xi:psi:rule7}\\
    F_i(x)     \rightarrow & C_j(x)                          & \forall i\in\interval{1}{n}\; \forall j\in\interval{1}{m} \text{ with } \neg v_i\in C_j  \label{eq:xi:psi:rule8}
\end{align}
Query $q_0$ is given in \eqref{eq:q:1:rule1}. It should be clear that, for
each substitution $\pi$ with $\dom{\pi} = \vec{p}$ such that $\pi(q_0)\subseteq
I_0$, substitution $\pi$ represents a truth assignment for $\psi$, $\pi(p_0) =
a$, and $\pi(p_{n+1})$ is a leaf.
\begin{align}
    q_0 = \exists p_0\ldots\exists p_{n+1}.\; \bigwedge_{i=0}^{n}[A_{i}(p_{i})\wedge R(p_{i}, p_{i+1})] \wedge G(p_{n+1}) \label{eq:q:1:rule1}
\end{align}

\subsection{Construction of $\Xi_{1}$ and $q_1$} Our rule base $\Xi_2$ uses
fresh roles $S_j$ and individuals $c_j$ uniquely associated to each clause
$C_j$ of $\psi$. Then rule base $\Xi_1$ contains atoms \eqref{eq:xi:1:atom},
and rules \eqref{eq:xi:1:rule1} and \eqref{eq:xi:1:rule2}. Let $I_1$ be the
universal interpretation of $\Xi_0\cup \Xi_1$, then we clearly have
$I_0\subseteq I_1$. We next describe how the rules and atoms in $\Xi_1$ modify
the tree encoded by $\Xi_0$. Rule \eqref{eq:xi:1:rule1} ensure that, for each
node $w$ in the tree whose truth assignment makes clause $C_j$ evaluate to
true, there exists an edge labelled by $S_j$ from $w$ to individual $c_j$. Atom
\eqref{eq:xi:1:atom} generates an edge labelled by $R$ connecting $c_j$ to
itself. Finally, rule \eqref{eq:xi:1:rule2} labels each edge in the tree and
each looping edge with $S_j$.
\begin{align}
                        & R(c_j, c_j)                   & \forall j\in\interval{1}{m}    \label{eq:xi:1:atom}\\
    C_j(x)\rightarrow   &S_j(x, c_j)     & \forall j\in\interval{1}{m}\label{eq:xi:1:rule1}\\
    R(x,y) \rightarrow  &    S_j(x,y)        & \forall j\in\interval{1}{m}\label{eq:xi:1:rule2}
\end{align}
\begin{figure}
    \centering
    \includegraphics[scale=0.6]{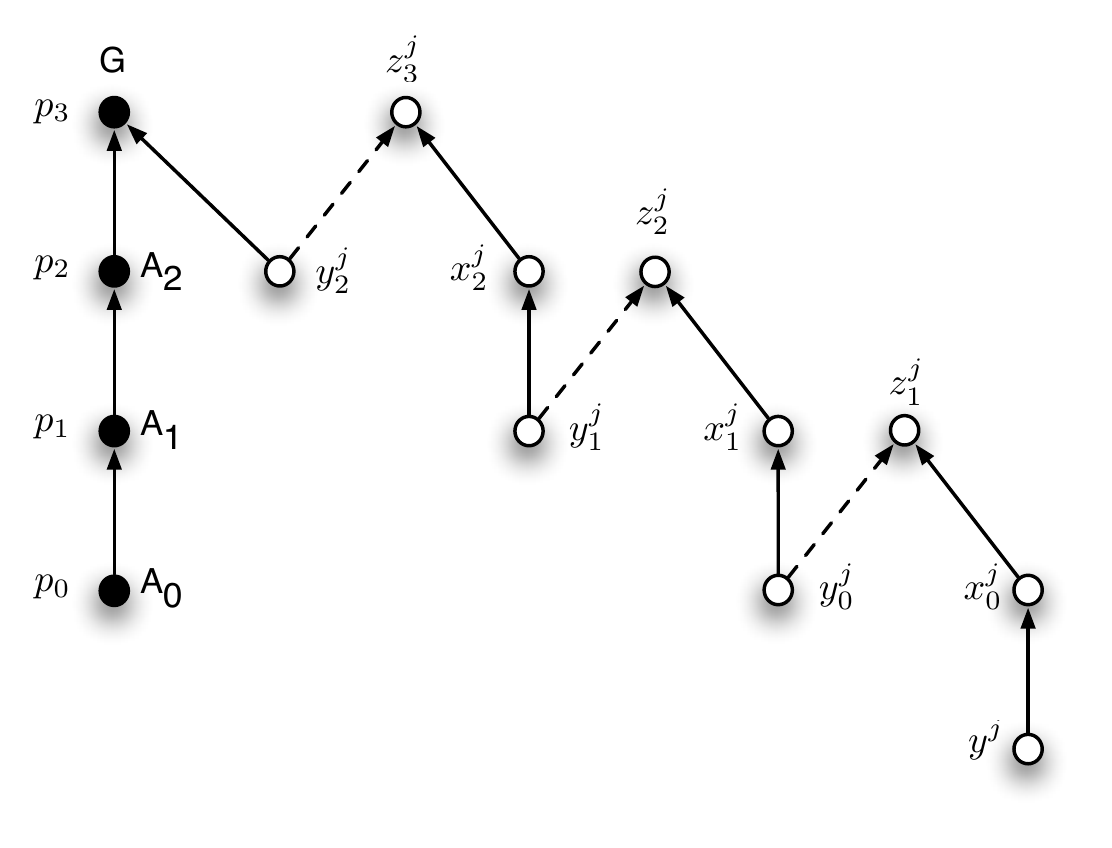}
    \caption{A graphical representation of query $q_0\wedge q_1$ for $n=2$ and an arbitrary $j\in\interval{1}{m}$}\label{fig:query01}
\end{figure}
Our encoding of query $q_{1}$ uses variable $p_{n+1}$ from query $q_0$, as well
as fresh variables $x_0^j,\ldots, x_{n}^j$, $y^j, y_0^j,\ldots, y_{n}^j$, and
$z_1^j,\ldots, z_{n+1}^j$ uniquely associated with each clause $C_j$. Next, we
associate with each $C_j$ a conjunction $\varphi^j$ as shown in \eqref{eq:q:1:rule4}.
Then query $q_1$ is the existential closure of ${\bigwedge_j\varphi^{j}}$.
\begin{align}
    \varphi^j & = R(y_j, x_0^j) \wedge \bigwedge_{i=1}^{n} [ \varphi_i^j \wedge R(y_{i-1}^j, x_i^j)]\wedge \varphi_{n+1}^j\wedge R(y_n^j, p_{n+1}) \label{eq:q:1:rule4}\\
    \varphi^j_i & = R(x_{i-1}^j, z_i^j)\wedge S_j(y_{i-1}^j, z_i^j)  \label{eq:q:1:rule5}
\end{align}
Figure \ref{fig:query01} shows a graphical representation of query $q_0\wedge
q_1$ for $n= 2$ and an arbitrary $j\in\interval{1}{m}$. The black edges denote
atoms over role $R$, and the dashed edges denote atoms over role $S_j$. It
should be clear that query $q_0\wedge q_1$ is acyclic, as required. Please note
that $q_0\wedge q_1$ is not arborescent: each variable $y_i^j$ in $q_0\wedge
q_1$ has two parent nodes.

Consider an arbitrary substitution $\pi$ such that
$\pi(q_0)\subseteq I_1$. Intuitively, each conjunct
$\varphi^j$ in $q_1$ ensures that the truth assignment represented by $\pi$
makes clause $C_j$ true. We next prove that our encoding is correct.
\begin{lemma}
    $\psi$ is satisfiable if and only if $\Xi_0 \cup \Xi_1\models
    q_0 \wedge q_1$.
\end{lemma}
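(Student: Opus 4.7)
The plan is to establish both directions by first characterising $R$- and $S_j$-edges in the universal interpretation $I_1$ of $\Xi_0\cup\Xi_1$. The $R$-edges of $I_1$ are exactly the tree edges generated from $a$ by rules \eqref{eq:xi:psi:rule1}, \eqref{eq:xi:psi:rule2}, and \eqref{eq:xi:psi:rule6}, together with the self-loops $R(c_j, c_j)$; in particular, every $R$-edge incoming to $c_j$ originates at $c_j$ itself, and no $R$-edge enters the root $a$. The $S_j$-edges are exactly the $R$-edges (via rule \eqref{eq:xi:1:rule2}) together with the edges $(w, c_j)$ for each tree node $w$ with $C_j(w)\in I_1$ (via rule \eqref{eq:xi:1:rule1}); moreover, $C_j(w)\in I_1$ holds precisely when the ancestor path of $w$ already makes $C_j$ true at the variable whose value is decided at $w$, so $C_j$ is never derived at $a$ nor at $c_j$.

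For the forward direction, given a satisfying assignment $\nu$ of $\psi$, I take the root-to-leaf path $a = w_0,w_1,\ldots,w_n,w_{n+1}$ in the tree that corresponds to $\nu$ and set $\pi(p_i) = w_i$, which satisfies $q_0$. For each clause $C_j$, I pick an index $k_j\in\interval{1}{n}$ such that $\nu(v_{k_j})$ makes $C_j$ true, so that $C_j(w_{k_j})\in I_1$ and hence $S_j(w_{k_j}, c_j)\in I_1$. I then extend $\pi$ on the variables of $\varphi^j$ by sending the ``early'' chain to $c_j$ and the ``late'' chain along the tree path: $\pi$ sends each of $y^j, y^j_0,\ldots,y^j_{k_j-1}$, $x^j_0,\ldots,x^j_{k_j}$, and $z^j_1,\ldots,z^j_{k_j+1}$ to $c_j$, and sends $y^j_i\mapsto w_i$ for $i\in\interval{k_j}{n}$, $x^j_i\mapsto w_i$ for $i\in\interval{k_j+1}{n}$, and $z^j_i\mapsto w_i$ for $i\in\interval{k_j+2}{n+1}$. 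A routine check shows that each atom of $\varphi^j$ is satisfied: the ``early'' atoms use the $R$- and $S_j$-loops at $c_j$, the hinge atom $S_j(y^j_{k_j}, z^j_{k_j+1}) = S_j(w_{k_j}, c_j)$ uses the $C_j$-witness at $w_{k_j}$, and the remaining atoms use tree edges together with the inclusion $R\subseteq S_j$.

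For the converse, suppose $\pi(q_0\wedge q_1)\subseteq I_1$. Since $A_0$ holds only at $a$, we get $\pi(p_0) = a$, and then $\pi(p_0),\ldots,\pi(p_{n+1})$ forms a root-to-leaf tree path, inducing a truth assignment $\nu_\pi$ on $\psi$. I need to show $\nu_\pi$ satisfies each clause $C_j$. I first rule out the possibility that every $\pi(z^j_i)$ is a tree node: in that case, uniqueness of tree $R$-parents forces $\pi(y^j_{i-1}) = \pi(x^j_{i-1})$ for each $i\in\interval{1}{n+1}$, giving a downward tree path $\pi(y^j), \pi(x^j_0), \pi(x^j_1), \ldots, \pi(x^j_n), \pi(p_{n+1})$ of $n+2$ edges, exceeding the depth $n+1$ of the tree. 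Hence some $\pi(z^j_k) = c_j$; then $R(\pi(x^j_{k-1}), c_j)$ forces $\pi(x^j_{k-1}) = c_j$, and a short backward induction along the chain propagates equality to $c_j$ through $\pi(y^j), \pi(y^j_0), \ldots, \pi(y^j_{k-2})$ and $\pi(x^j_0), \ldots, \pi(x^j_{k-1})$ and $\pi(z^j_1), \ldots, \pi(z^j_k)$, using the observation that $S_j(c_j, u)\in I_1$ forces $u = c_j$ since $C_j(c_j)\notin I_1$.

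Since $R(\pi(y^j_n), \pi(p_{n+1}))$ holds and $G(\pi(p_{n+1}))$ precludes $\pi(p_{n+1}) = c_j$, we have $\pi(y^j_n)\neq c_j$, so there is a smallest index $i^*\in\interval{0}{n}$ with $\pi(y^j_{i^*})\neq c_j$. A forward extension of the cascade shows $\pi(x^j_{i^*}) = c_j$ and $\pi(z^j_{i^*+1}) = c_j$, and then $S_j(\pi(y^j_{i^*}), c_j)$ together with $\pi(y^j_{i^*})\neq c_j$ forces $C_j(\pi(y^j_{i^*}))\in I_1$. A depth-counting argument analogous to the first case then shows that $\pi(x^j_{i^*+1}),\ldots,\pi(x^j_n),\pi(p_{n+1})$ is a downward tree path of exactly $n-i^*$ edges, which places $\pi(y^j_{i^*})$ at depth $i^*$ on the unique ancestor chain of $\pi(p_{n+1})$; hence $\pi(y^j_{i^*}) = \pi(p_{i^*})$, and since $C_j$ is never derived at $a$ we get $i^*\geq 1$. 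Consequently $C_j(\pi(p_{i^*}))\in I_1$, and inspection of $\Xi_0$ shows that $\nu_\pi(v_{i^*})$ satisfies $C_j$. The main technical obstacle is controlling the cascade of forced equalities in the backward direction: one must carefully track where the chain enters and exits the auxiliary individual $c_j$, using that once inside the $c_j$-region the only outgoing $R$-step is the self-loop.
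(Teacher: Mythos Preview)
Your proposal is correct and follows essentially the same strategy as the paper: in both directions the key is to locate the unique ``transition index'' where the image of the $\varphi^j$-chain switches between the absorbing self-loop at $c_j$ and the tree path $\pi(p_0),\ldots,\pi(p_{n+1})$, and to observe that the $S_j$-edge crossing this boundary forces $C_j$ at the corresponding tree node. The paper parameterises the backward direction by the \emph{largest} $\ell$ with $\pi(z^j_{\ell+1})\in\indnames$ and proves an invariant $(\diamondsuit)$ that directly pins $\pi(y^j_i)=\pi(x^j_i)=\pi(p_i)$ for all $i>\ell$; you instead parameterise by the \emph{smallest} $i^*$ with $\pi(y^j_{i^*})\neq c_j$, establish the $c_j$-region by a backward cascade, and then recover the identification with the $p$-path by a forward depth count --- these are dual presentations of the same mechanism.

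One small point you gloss over (and the paper is equally terse here): when you pass from ``not every $\pi(z^j_i)$ is a tree node'' to ``some $\pi(z^j_k)=c_j$'', you should note that $\pi(z^j_k)=c_{j'}$ with $j'\neq j$ is impossible, since then $R(\pi(x^j_{k-1}),c_{j'})$ forces $\pi(x^j_{k-1})=c_{j'}$ and a forward cascade traps the whole tail at $c_{j'}$, contradicting $G(\pi(p_{n+1}))$. With that remark added, your argument is complete.
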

\begin{proof}

    $(\Rightarrow)$ Assume that a truth assignment $\nu : \setof{v_1,\ldots,
    v_n} \mapsto \setof{\true,\false}$ exists such that $\nu(\psi) = \true$; we
    next show that a substitution $\pi$ with $\dom{\pi} = \vars{q_0 \wedge
    q_1}$ exists such that $\pi(q_0 \wedge q_1)\subseteq I_1$. To this end, let
    $\pi(p_0) = a$ and, for each $i\in\interval{1}{n}$, let $\pi(p_i)$ be the
    unique successor of $\pi(p_{i-1})$ in $I_1$ such that $T_i(\pi(p_i))\in
    I_1$ if and only if ${\nu(v_i) = \true}$. Consider an arbitrary clause
    $C_j$. Since $\nu(\psi) = \true$, a literal $l \in C_j$ exists such that
    $\nu(l) = \true$. Let $k \in \interval{1}{n}$ be the unique index such that
    literal $l$ is over variable $v_k$. Then, formulas $\Psi^j_{-}$ and
    $\Psi^j_{+}$ exist such that $\varphi^j$ is of the form $\varphi^j =
    \Psi^j_{-} \wedge \varphi_{k+1}^j \wedge \Psi^j_{+}$. Next, we extend
    substitution $\pi$ as follows:
    \begin{itemize}
        \item $\pi$ maps each variable occurring in $\Psi^j_{-}$ to $c_j$,
        \item $\pi(x_{k}^j) = c_j$, $\pi(z_{k+1}^j) = c_j$, and $\pi(y_{k}^j) = \pi(p_{k})$, and
        \item for each $\ell\in\interval{k+1}{n+1}$, $\pi$ maps each variable
        with subscript $\ell$ occurring in $\Psi^j_{+}$ to $\pi(p_k)$.
    \end{itemize}
    Please note that by rules \eqref{eq:xi:1:rule1} and \eqref{eq:xi:1:rule2},
    and by atoms \eqref{eq:xi:1:atom}, we have ${\pi(\Psi^j_{-}) \cup
    \pi(\varphi_{k+1}^j) \cup \pi(\Psi^j_{+}) \subseteq I_1}$.

    \medskip

    $(\Leftarrow)$ Assume that a substitution $\pi$ with ${\dom{\pi} =
    \vars{q_0\wedge q_1}}$ exists such that
    $\pi(q_0\wedge q_1)\subseteq I_1$; we next show that a truth
    assignment ${\nu:\setof{v_1,\ldots, v_n}\mapsto \setof{\true,\false}}$
    exists such that $\nu(\psi) = \true$. Please note that $\pi(p_0) = a$ and,
    for each $i\in\interval{1}{n}$, we have $\pi(p_i)$ is a successor of
    $\pi(p_{i-1})$ in the binary tree encoded by $\Xi_0$; moreover, for
    each $j\in\interval{1}{m}$, due to atom $R(y_n^j, p_{n+1})$ in
    \eqref{eq:q:1:rule4}, we have $\pi(y_n^j)$ is the unique parent of
    $\pi(p_{n+1})$ in the tree, and so $\pi(y_n^j)= \pi(p_n)$. Then, let $\nu$
    be the truth assignment such that, for each $i\in\interval{1}{n}$, we have
    $\nu(v_i) = \true$ if and only if $T_i(\pi(p_i))\in I_1$. We next show that
    $\nu(\psi)=\true$; that is, for each clause $C_j$, we have
    $\nu(C_j)=\true$. Consider an arbitrary clause $C_j$. By rules \eqref{eq:xi:psi:rule6} and
    \eqref{eq:xi:psi:rule7}, it suffices to find an index $\ell
    \in\interval{1}{n}$ such that $C_j(\pi(p_\ell))\in I_1$. Towards this goal,
    we first show that substitution $\pi$ satisfies the following property for
    each $i\in\interval{0}{n}$.
    \begin{itemize}
        \item[$(\diamondsuit)$] If for each $\ell \in \interval{i}{n}$ we
        have $\pi(z_{\ell+1}^j)\not\in\indnames$, then $\pi(y_i^j)= \pi(p_i)$
        and $\pi(x_i^j)= \pi(p_i)$.
    \end{itemize}
    We proceed by reverse induction on $i\in\interval{0}{n}$.

    \basecase Let $i= n$. Assume that $\pi(z_{n+1}^j)\not\in\indnames$. As
    stated above, we have $\pi(y_n^j) = \pi(p_{n})$. Due to atom $S_j(y_n^j,
    z_{n+1}^j)$ in $\varphi_{n+1}^j$, we have $\pi(z_{n+1}^j)$ is a successor
    of $\pi(p_n)$. Due to atom $R(x_n^j, z_{n+1}^j)$ in $\varphi_{n+1}^j$,
    we have $\pi(x_n^j)$ is the parent of $\pi(z_{n+1}^j)$, thus $\pi(x_n^j)=
    \pi(p_n)$, as required.

    \indstep Consider an arbitrary $i\in\interval{0}{n-1}$. Assume that the
    property holds for $i+1$, and that $\pi(z_{\ell+1}^j)\not\in\indnames$ for
    each $\ell \in \interval{i}{n}$. By the inductive hypothesis, we have
    $\pi(y_{i+1}^j)= \pi(p_{i+1})$ and $\pi(x_{i+1}^j)= \pi(p_{i+1})$. Due to
    atom $R(y_i^j, x_{i+1}^j)$ in \eqref{eq:q:1:rule4}, we have $\pi(y_i^j)$ is
    the unique parent of $\pi(p_{i+1})$ in the tree, and so $\pi(y_i^j) =
    \pi(p_i)$. Due to atom $S_j(y_{i}^j, z_{i+1}^j)$ in $\varphi_{i+1}^j$, we
    have $\pi(z_{i+1}^j)$ is a successor of $\pi(p_i)$. Finally, due to atom
    $R(x_{i}^j, z_{i+1}^j)$ in $\varphi_{i+1}^j$, we have $\pi(x_i^j) =
    \pi(p_i)$, as required.

    \smallskip

    We next show that an index $i\in\interval{0}{n}$ exists such that
    $\pi(z_{i+1}^j)$ is mapped to an individual. Assume the opposite, hence,
    for each $i \in \interval{0}{n}$, we have $\pi(z_{i+1}^j)\not\in\indnames$.
    By property $(\diamondsuit)$, we then have $\pi(x_0^j) = \pi(p_0)$. Since
    atom $S(y^j, x_0^j)$ occurs in $q_1$, $\pi(x_0^j) = \pi(p_0) = a$, and $a$
    does not have incoming edges in $I_1$, this is a contradiction.

    \smallskip Finally, let $\ell\in\interval{0}{n}$ be the largest index such
    that $\pi(z_{\ell+1}^j)\in\indnames$. We show that $\pi(y_{\ell}^j) =
    \pi(p_\ell)$ by considering two cases.
    \begin{itemize}
        \item $\ell = n$. As stated above, we have $\pi(y_n^j) = \pi(p_n)$.
        \item $\ell < n$. By property $(\diamondsuit)$, we have
    $\pi(x_{\ell+1}^j) = \pi(p_{\ell+1})$. Due to atom $R(y_\ell^j,
    x^j_{\ell+1})$, we have $\pi(y_{\ell}^j) = \pi(p_\ell)$.
    \end{itemize}
    In either cases, we have $\pi(y_{\ell}^j) = \pi(p_\ell)$.

    We next show that $\ell >0$. Assume the opposite, hence $\ell = 0$. Since
    $\pi$ maps $\pi(z^j_{1})\in\indnames$ and $\pi(y_0^j) = a$, atom
    ${S_j(y_0^j, z^j_{1})}$ occurs in $q_1$, and $a$ is not connected to an
    individual in $I_1$, we have ${S_j(\pi(y_0^j), \pi(z^j_{1}))\not\in I_1}$,
    which is a contradiction.

    Therefore, we have $\pi(y_{\ell}^j) = \pi(p_\ell)$ and $\ell >1$. By rules
    \eqref{eq:xi:1:rule1} and \eqref{eq:xi:1:rule2}, and since atom
    $S_j(y_\ell^j, z_{\ell+1}^j)$ occurs in $q_1$, we have $\pi(z_{\ell+1}^j) =
    c_j$ and $C_j(\pi(p_\ell))\in I_1$, as required.
\end{proof}

\subsection{Construction of $\Xi_{2}$ and $q_2$} Rule base $\Xi_2$ consists
only of rule \eqref{eq:xi:2:rule1}. Let $I_2$ be the universal interpretation
of $\Xi_0\cup \Xi_2$, then we clearly have $I_0\subseteq I_2$. Note that rule
\eqref{eq:xi:2:rule1} modifies the tree encoded by $\Xi_0$ by connecting each
node $w$ in the tree via an $R$ edge to all nodes that occur on the path
connecting $w$ to the root $a$.
\begin{align}
    R(x,y)\wedge R(y,z)\rightarrow R(x,z)\label{eq:xi:2:rule1}
\end{align}
Our encoding of query $q_{2}$ uses variable $p_{n+1}$ from query $q_0$, as well
as a fresh variable $x_j$ uniquely associated to each clause $C_j$. Query
$q_{2}$ is given in \eqref{eq:q:2:rule1}.
\begin{align}
    \exists x_1\ldots \exists x_m \bigwedge_{i=1}^{m}C_j(x_j) \wedge R(x_j, p_{n+1}) \label{eq:q:2:rule1}
\end{align}
It should be clear that $q_0 \wedge q_2$ is arborescent. Now, consider a
substitution $\pi$ such that $\pi(q_0\wedge q_2)\subseteq I_2$. By the
definition of $q_2$, for each $j\in\interval{1}{m}$, a term $w_j$ exists such
that $w_j$ occurs on the unique path connecting leaf $\pi(p_{n+1})$ to the root
of the tree $a$ and $C_j(w_j) \in I_2$; therefore, an index
$\ell_j\in\interval{1}{n}$ exists such that $\pi(p_{\ell_j}) = \pi(x_j)$. By
the definition of $\Xi_0$ and $q_0$, the truth assignment represented by $\pi$
then makes $C_j$ evaluate to true. Thus, the following holds.
\begin{lemma}
    $\psi$ is satisfiable if and only if $\Xi_0 \cup \Xi_2\models
    q_0 \wedge q_2$.
\end{lemma}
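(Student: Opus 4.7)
The plan is to establish both directions by exploiting the structure of the universal interpretation $I_2$ of $\Xi_0 \cup \Xi_2$. The key structural observation is that $I_2$ coincides with the binary tree $I_0$ generated by $\Xi_0$ (as described in the paragraph introducing $q_0$), augmented only with the transitive closure of its direct $R$-edges; no new domain elements are introduced by rule \eqref{eq:xi:2:rule1}. Consequently, for all terms $u, v$ in $I_2$, we have $R(u,v) \in I_2$ iff $u$ is a proper ancestor of $v$ in the tree, and the unary atoms of $I_2$ coincide with those of $I_0$.

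For the forward direction, given a satisfying truth assignment $\nu$ for $\psi$, I would define $\pi(p_0) = a$ and, for $i \in \interval{1}{n}$, set $\pi(p_i)$ to be the unique successor of $\pi(p_{i-1})$ satisfying $T_i$ if $\nu(v_i) = \true$ and $F_i$ otherwise; such successors exist by rules \eqref{eq:xi:psi:rule1} and \eqref{eq:xi:psi:rule2}. Let $\pi(p_{n+1})$ be the $G$-successor provided by rule \eqref{eq:xi:psi:rule6}. For each clause $C_j$, satisfiability of $\psi$ yields an index $k_j$ such that the literal of $C_j$ over $v_{k_j}$ is made true by $\nu$, and rule \eqref{eq:xi:psi:rule7} or \eqref{eq:xi:psi:rule8} then gives $C_j(\pi(p_{k_j})) \in I_2$. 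Setting $\pi(x_j) = \pi(p_{k_j})$ satisfies $R(\pi(x_j), \pi(p_{n+1})) \in I_2$ by transitivity, establishing $\pi(q_0 \wedge q_2) \subseteq I_2$.

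For the backward direction, I would first argue that $\pi(p_0), \ldots, \pi(p_{n+1})$ traces a root-to-leaf path: atom $A_0(p_0)$ forces $\pi(p_0) = a$ (since $A_0$ holds only at $a$), atom $G(p_{n+1})$ forces $\pi(p_{n+1})$ to be a $G$-leaf, and each $A_i$ holds only at depth exactly $i$ in the tree. The transitive atom $R(\pi(p_{i-1}), \pi(p_i)) \in I_2$, combined with the depth constraint, forces $\pi(p_i)$ to be the direct tree-child of $\pi(p_{i-1})$. This path encodes a truth assignment $\nu$ with $\nu(v_i) = \true$ iff $T_i(\pi(p_i)) \in I_2$. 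For each $j$, atom $R(\pi(x_j), \pi(p_{n+1})) \in I_2$ forces $\pi(x_j)$ to be a proper ancestor of $\pi(p_{n+1})$ and hence to equal some $\pi(p_{k_j})$ with $k_j \in \interval{1}{n}$ (observing that $a$ carries no $C_j$). Then $C_j(\pi(p_{k_j})) \in I_2$ implies, via rules \eqref{eq:xi:psi:rule7} and \eqref{eq:xi:psi:rule8}, that $\nu$ satisfies a literal of $C_j$.

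The main technical step is the structural characterization of $I_2$; once the tree-plus-transitive-closure description is in hand, the remainder reduces to routine depth-counting, substantially simpler than the argument for $\Xi_1$ whose encoding had to simulate transitivity using the $S_j$-edges. This confirms that even a single transitive role together with axioms of types $1$ and $7$ suffices to encode CNF satisfiability by an arborescent query.
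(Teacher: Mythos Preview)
Your proposal is correct and follows essentially the same approach as the paper, which in fact gives only an informal paragraph (no formal proof) for this lemma. Your argument is more careful than the paper's sketch in one respect: you explicitly justify why, in $I_2$ where $R$ has been transitively closed, the atoms $A_i(p_i)$ together with $R(p_{i-1},p_i)$ still force each $\pi(p_i)$ to be the \emph{direct} child of $\pi(p_{i-1})$ via the depth-counting observation, whereas the paper simply appeals to its earlier remark about $q_0$ made over $I_0$.
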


\subsection{Construction of $\Xi_{3}$ and $q_3$}
Rule base $\Xi_3$ consists of rules \eqref{eq:xi:3:rule1}. Let $I_3$ be the
universal interpretation of $\Xi_0\cup \Xi_3$, then we clearly have
$I_0\subseteq I_3$. Rules \eqref{eq:xi:3:rule1} modify the tree encoded by
$\Xi_0$ by generating an edge labelled by $R$ connecting each node  to itself.
\begin{align}
    \top(x)  & \rightarrow \SELF_{R}(x)\wedge R(x,x) & \forall j\in\interval{1}{m}\label{eq:xi:3:rule1}
\end{align}
Our encoding of query $q_{3}$ uses variable $p_{n+1}$ from query
$q_0$, as well as fresh variables $x_0^j,\ldots, x_n^j$ uniquely associated to
each clause $C_j$. Furthermore, we associate to each clause $C_j$ the formula
$\varphi^j$ in \eqref{eq:q:3:rule1}. Then $q_{3}$ is the existential closure of
$\bigwedge_j \varphi^j$.
\begin{align}
    \varphi^j = C_j(x_0^j)\wedge R(x^j, x^j_0)\wedge\bigwedge_{i=1}^{n} R(x^j_{i-1}, x_i^j) \wedge R(x_n^j, p_{n+1})\label{eq:q:3:rule1}
\end{align}
It should be clear that $q_0 \wedge q_3$ is arborescent. Consider an arbitrary
substitution $\pi$ such that ${\pi(q_0 \wedge q_3)\subseteq I_3}$ and an
arbitrary clause $C_j$. Then, formula $\varphi^j$ ensures that a path of length
$n+1$ exists connecting leaf node $\pi(p_{n+1})$ to one arbitrary node
occurring in the unique path connecting $\pi(p_{n+1})$ to root $a$. Since the
tree has depth $n+1$ and the leaves and the root of the tree do no satisfy
concept $C_j$, variable $x_0^j$ must be mapped to some node at depth $1\leq i
\leq n$. By the definition of $\Xi_0$ and $q_0$, the truth assignment
represented by $\pi$ then makes $C_j$ evaluate to true. Thus, the following
holds.
\begin{lemma}
    $\psi$ is satisfiable if and only if $\Xi_0 \cup \Xi_3\models
    q_0 \wedge q_3$.
\end{lemma}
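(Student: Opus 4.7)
The plan is to analyze the structure of the unique universal interpretation $I_3$ of $\Xi_0 \cup \Xi_3$. In $\Xi_0 \cup \Xi_3$, the only rules producing $R$-atoms are \eqref{eq:xi:psi:rule1}, \eqref{eq:xi:psi:rule2}, and \eqref{eq:xi:psi:rule6} (which together generate the complete binary tree of depth $n+1$ rooted at $a$ described in the construction of $\Xi_0$) and \eqref{eq:xi:3:rule1} (which adds the self-loop $R(w,w)$ at every element $w$). Hence the structural fact driving the proof is: $R(w,w') \in I_3$ iff either $w' = w$ or $w'$ is the tree-child of $w$. Moreover, $A_i$ holds exactly at the depth-$i$ tree nodes, $G$ holds exactly at the depth-$(n+1)$ leaves, and, via rules \eqref{eq:xi:psi:rule7}--\eqref{eq:xi:psi:rule8}, $C_j$ holds at an internal node $w$ of depth $i \in \interval{1}{n}$ iff the literal of $C_j$ over $v_i$ is made true by the truth assignment traced from $a$ down to $w$.

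For the forward direction, given a satisfying assignment $\nu$ for $\psi$, I build the root-to-leaf path $\pi(p_0) = a, \pi(p_1), \ldots, \pi(p_{n+1})$ encoding $\nu$ in the usual way. For each clause $C_j$, I pick an index $\ell_j \in \interval{1}{n}$ at which the corresponding literal is satisfied by $\nu$, so that $C_j(\pi(p_{\ell_j})) \in I_3$. I then exploit the self-loops to pad the $q_3$-chain: set $\pi(x^j) = \pi(x_0^j) = \cdots = \pi(x_{\ell_j - 1}^j) = \pi(p_{\ell_j})$, and $\pi(x_i^j) = \pi(p_{i+1})$ for $i \in \interval{\ell_j}{n}$. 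Every atom of $\varphi^j$ is then satisfied by either a self-loop (between two equal values) or a tree edge (at the steps where the index actually advances along the path).

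For the reverse direction, suppose $\pi(q_0 \wedge q_3) \subseteq I_3$. The unary atoms $A_i(p_i)$ force $\pi(p_i)$ to live at depth $i$, and $G(p_{n+1})$ forces $\pi(p_{n+1})$ to be a leaf, so distinct $p_i$'s sit at distinct depths; combined with the structural fact above, this rules out the self-loop option for each $R(p_i, p_{i+1})$, so $\pi(p_0), \ldots, \pi(p_{n+1})$ is a genuine root-to-leaf branch of the tree. Now fix a clause $C_j$. By the same structural fact, the sequence $\pi(x_0^j), \pi(x_1^j), \ldots, \pi(x_n^j), \pi(p_{n+1})$ is a (possibly stuttering) descending walk, so every one of its terms is a tree-ancestor of $\pi(p_{n+1})$, i.e., lies in $\{\pi(p_0), \ldots, \pi(p_{n+1})\}$. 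Since $C_j(\pi(x_0^j)) \in I_3$ but $C_j$ holds at neither $a$ nor any leaf, $\pi(x_0^j) = \pi(p_\ell)$ for some $\ell \in \interval{1}{n}$. Setting $\nu(v_i) = \true$ iff $T_i(\pi(p_i)) \in I_3$ then makes every such $C_j$ true and hence satisfies $\psi$.

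The main subtlety is the reverse direction: even though $I_3$ carries self-loops at every node, the $A_i$ concepts pin $\pi(p_0), \ldots, \pi(p_n)$ to distinct tree levels, and the tight characterisation of $R$-edges in $I_3$ then forces the $q_3$-chain for each clause to travel along a single root-to-leaf branch. Once this is established, the correspondence between the endpoint $\pi(x_0^j)$ and a satisfied literal of $C_j$ is immediate from the support set of $C_j$.
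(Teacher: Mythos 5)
Your proof is correct and follows essentially the same route as the paper's (brief) argument: the self-loops added by rule \eqref{eq:xi:3:rule1} let the clause chain stutter, so in the forward direction you pad the chain at the witnessing node, and in the reverse direction the chain is a descending walk whose endpoint $\pi(x_0^j)$ must be an ancestor of the leaf $\pi(p_{n+1})$ carrying $C_j$, hence an internal node $\pi(p_\ell)$ of the selected branch. Your write-up is in fact more explicit than the paper's, which only sketches the backward direction, but the underlying ideas coincide.
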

}{}

\end{document}